\pdfoutput=1
\documentclass[11pt]{article}
\usepackage[letterpaper,margin=1in]{geometry}
\usepackage[normalem]{ulem}
\usepackage{newtxtext}
\usepackage{titling}
\usepackage[numbers,sort&compress]{natbib}
\usepackage{wrapfig}
\usepackage[utf8]{inputenc}
\usepackage[T1]{fontenc}
\usepackage{url}
\usepackage{algorithm}
\usepackage{algorithmic}
\usepackage{booktabs}
\usepackage{amsfonts}
\usepackage{nicefrac}
\usepackage{microtype}
\usepackage[table]{xcolor}
\definecolor{authorcolor}{HTML}{3559A7}
\definecolor{affiliationcolor}{HTML}{C21875}
\usepackage{amsmath,amssymb,amsthm,mathtools,bm}
\usepackage{graphicx}
\usepackage{capt-of}
\usepackage{placeins}
\usepackage{float}
\usepackage{multirow,array}
\usepackage{xspace}
\usepackage{bigdelim}
\usepackage[
  colorlinks=true,
  citecolor=blue,
  linkcolor=black,
  urlcolor=blue
]{hyperref}
\hypersetup{
  pdftitle={Provable Pruning for Efficient 3D Gaussian Splatting via Coresets},
  pdfauthor={Waseem Mousa and Alaa Maalouf},
  pdfsubject={Provable pruning for efficient 3D Gaussian Splatting via coresets},
  pdfkeywords={3D Gaussian Splatting, coresets, pruning, compression, novel-view synthesis}
}
\usepackage[nameinlink,noabbrev]{cleveref}

\newtheorem{theorem}{Theorem}
\newtheorem{lemma}{Lemma}
\newtheorem{proposition}{Proposition}
\newtheorem{corollary}{Corollary}
\theoremstyle{definition}
\newtheorem{definition}{Definition}
\newtheorem{assumption}{Assumption}
\newtheorem{remark}{Remark}

\crefname{assumption}{Assumption}{Assumptions}
\Crefname{assumption}{Assumption}{Assumptions}

\AddToHook{env/theorem/begin}{\crefalias{section}{theorem}}
\AddToHook{env/lemma/begin}{\crefalias{section}{lemma}}
\AddToHook{env/proposition/begin}{\crefalias{section}{proposition}}
\AddToHook{env/corollary/begin}{\crefalias{section}{corollary}}
\AddToHook{env/definition/begin}{\crefalias{section}{definition}}
\AddToHook{env/assumption/begin}{\crefalias{section}{assumption}}
\AddToHook{env/remark/begin}{\crefalias{section}{remark}}

\newcommand{\RR}{\mathbb{R}}
\newcommand{\cQ}{\mathcal{Q}}
\newcommand{\cQf}{\mathcal{Q}'}
\newcommand{\eps}{\varepsilon}
\newcommand{\Abs}[1]{\left|#1\right|}

\newcommand{\method}{our method\xspace}

\newcommand{\projectrepo}{waseem-m/3dgs_provable_coresets}
\newcommand{\projecturl}{https://github.com/\projectrepo}
\newcommand{\projectlink}{\href{\projecturl}{\textcolor{affiliationcolor}{\nolinkurl{github.com/\projectrepo}}}}

\title{\bfseries Provable Pruning for Efficient 3D Gaussian Splatting via Coresets}
\author{%
\textcolor{authorcolor}{Waseem Mousa$^{1}$ \qquad Alaa Maalouf$^{1}$}\\[0.45em]
\textcolor{affiliationcolor}{\normalsize $^{1}$Department of Computer Science, University of Haifa}\\[0.35em]
{\small\textbf{Project page and open-source code:} \projectlink}
}
\date{}

\begin{document}

\maketitle
\thispagestyle{plain}

\begin{abstract}
3D Gaussian Splatting (3DGS) enables high-quality real-time novel-view synthesis, but practical scenes often contain millions of Gaussians, making compression essential for deployment on limited hardware. Existing reduction methods are effective but mostly heuristic: they provide no multiplicative approximation guarantee for the rendered objective, and thus rely heavily on costly post-pruning finetuning to recover quality.
We ask a basic question: \emph{can a 3DGS scene be provably replaced by a much smaller weighted subset (coreset) while preserving the objective of interest?} 
We first show that, in the unrestricted setting, no non-trivial multiplicative 3DGS coreset exists. 
We then show that multiplicative guarantees are not impossible, but resolution-dependent. For a prescribed rendering resolution, such as representative views or grids of views/rays, we provide the first weighted coreset construction theorem for 3DGS. The construction samples Gaussians by sensitivity: provable importance scores measuring each Gaussian’s role in the full-scene objective. Finally, under explicit validity and log-transmittance stability assumptions, we turn this objective guarantee into a rendering guarantee.
Empirically, our method is strongest where deployment needs it most: aggressive compression with no or minimal recovery compute. In prune-only and very short finetuning regimes, it achieves state-of-the-art performance, showing that principled importance estimation can be both theoretically meaningful and practically useful.
Open-source code is available at \projectlink.
\end{abstract}

\begin{figure}[H]
    \centering
    \includegraphics[width=0.99\textwidth]{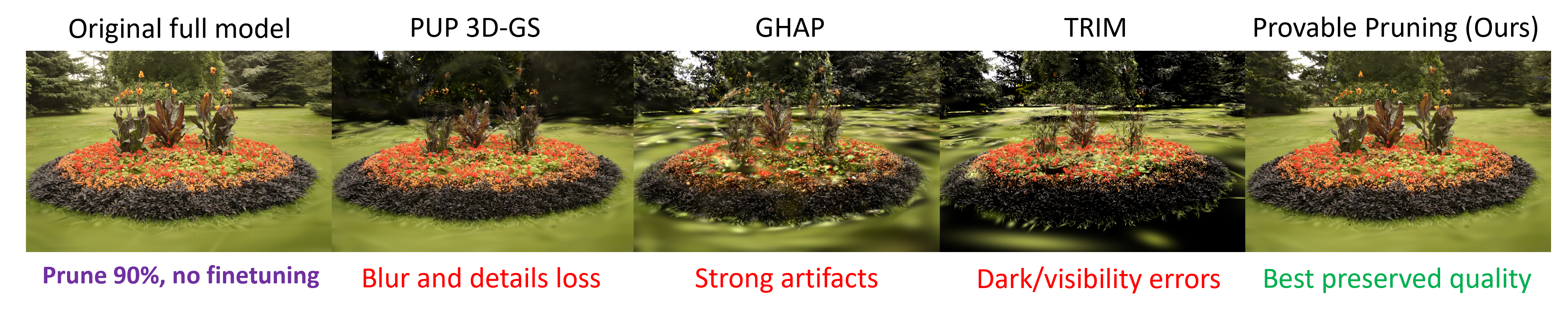}
    \caption{Our method prunes a pretrained 3D Gaussian scene while provably preserving the quality on a set of queries. In the regime where no finetuning is used, our approach preserves rendering quality better than competing pruning heuristics, especially at aggressive pruning ratios. }
    \label{fig:teaser}
\end{figure}

\section{Introduction}
 
3D Gaussian Splatting (3DGS) has become a central representation for real-time novel-view synthesis, combining explicit Gaussian scene models with visibility-aware differentiable splatting and strong fidelity--speed tradeoffs \citep{kerbl2023,yu2023mipsplatting,lu2023scaffoldgs,huang2024twodgs,liang2024analytic,kheradmand2024mcmc,bao2024survey,bagdasarian2025survey}. Yet, high-quality reconstructions often require large Gaussian sets, creating significant memory, storage, bandwidth, and runtime costs, especially on resource-limited hardware. This has motivated growing work on 3DGS pruning and global compaction \citep{fang2024minisplatting,mallick2024taming3dgs,fan2023lightgaussian,girish2023eagles,lee2023compact3dgrf,xie2024mesongs,ali2024trimming,hanson2025pup,wang2025ghap,zhang2025gaussianspa,chen2025pcgs,lee2025omg,wang2025nsvq,ali2025compressionsurvey}.

\textbf{The gap.} While these methods demonstrate that strong compression is often possible, they leave two key bottlenecks. First, the compressed Gaussian set is often followed by a costly recovery stage, requiring substantial hardware resources and many fine-tuning iterations. Second, existing approaches are largely heuristic, providing no provable guarantees on approximation error or subset size.

\textbf{Our approach: Coresets for 3DGS. } To this end, we study 3DGS pruning through the lens of coresets: small weighted subsets that approximately preserve a target objective over a prescribed family of queries. This leads to our central question: \emph{\uline{Can a full 3DGS scene be provably replaced by a much smaller weighted subset of the gaussian while preserving the target rendering objective, and under what assumptions is such a guarantee possible?}} This question is especially important in low-compute settings. When only a prune-only evaluation or a very short recovery phase is feasible, the selected subset directly determines performance. A principled compression method should therefore identify which Gaussians are most important for preserving the rendered objective, while providing explicit control over the approximation error, not requiring intense fine-tuning steps.

\textbf{The challenge. }While coreset construction frameworks provide such guarantees for many objectives~\citep{feldman2011,braverman2016,phillips2016,feldman2020survey}, including subset-selection approaches for neural-network compression \citep{baykal2019,Tukan2022provable,liebenwein2020}, the main obstacle in 3DGS is visibility: front-to-back compositing couples Gaussians through transmittance, so changing or removing one early Gaussian can affect the contribution of many later Gaussians. We therefore ask whether 3DGS admits any non-trivial multiplicative coreset guarantee at all, and if not in full generality, what restrictions recover a useful theorem in practice.

\textbf{A glimpse into our findings.} Our answer is twofold: 
In the unrestricted setting, no general multiplicative coreset theorem is possible: if the query family is rich enough to isolate individual Gaussians, then every Gaussian can become indispensable. 
However, this unrestricted view is stronger than what is needed for rendering. In practice, approximation is required only over a finite set of queries (views) induced by the desired rendering resolution, e.g., desired grid/set of views or rays. Under this finite-query formulation, together with a fixed full-scene objective, frozen itemwise contributions, and stable transmittance/occlusion, we prove a weighted coreset construction theorem. 
The resulting Gaussian subset size scales only logarithmically with the number of representative queries/resolution. 
The construction is based on assigning each Gaussian a sensitivity scores, which quantify the importance of it with respect to the objective and guide the sampling procedure. Thus, our main theoretical message is: \emph{non-trivial 3DGS coresets are impossible in full generality, but provable pruning is possible under practically meaningful restrictions.}

\noindent\textbf{Our Contribution.}
This work develops a theory-guided framework for 3DGS compression/pruning and evaluates it in low-compute deployment regimes. Our main contributions are:
\begin{enumerate}
    \item \textbf{Impossibility in full generality.}
    We prove that no non-trivial multiplicative coreset guarantee can hold for unrestricted 3DGS rendering: when the query family is rich enough to isolate individual Gaussians, every Gaussian may become necessary; see \cref{thm:no-general-main}.

    \item \textbf{Resolution-dependent provable 3DGS pruning (coreset).}
    First, for a desired rendering resolution (a given set or a grid of views/rays), and a fixed full-scene reference decomposition, we provide the first provable pruning (coreset construction) algorithm for 3DGS; the remaining Gaussian subset size scales logarithmically by the size of the desired views set (resolution); see \cref{sec:finite-th-main,thm:finite-th-main}.
    Then, we prove that transfer from a fixed-objective guarantee for true 3DGS re-rendering is possible under explicit validity and log-transmittance stability assumptions; see \cref{sec:rendered-main,thm:rendered-main}.

    \item \textbf{Extension beyond finite resolutions (query sets).}
    We extend the representative-query theorem from finite query families to compact query regions using representative covers and Lipschitz control; see \cref{sec:compact-region-main,thm:compact-region-main}.

    \item \textbf{Open-source implementation.}
    We release the sensitivity computation, coreset construction, pruning, and evaluation pipeline at \projectlink.

\end{enumerate}
Extensive experiments indicate that our theoretically inspired method delivers strong performance across the evaluated settings, with particularly clear gains in prune-only/no-finetuning evaluation and very short finetuning budgets, where costly recovery is not feasible; see \cref{sec:practical-extensions,sec:experiments}.

\section{Related Work}

\noindent\textbf{3D Gaussian Splatting.}
3DGS introduced an explicit radiance-field representation built from anisotropic Gaussians, interleaved optimization and density control, and visibility-aware splatting, enabling fast training and real-time rendering \citep{kerbl2023}.
A large follow-up literature refined this framework in several directions, including anti-aliasing and scale robustness \citep{yu2023mipsplatting,liang2024analytic}, structured or view-adaptive parameterizations \citep{lu2023scaffoldgs}, stronger geometric fidelity \citep{huang2024twodgs}, and alternative optimization or densification dynamics \citep{kheradmand2024mcmc,zhang2024pixel,ye2024absgs}.
Broader surveys document how quickly 3DGS has expanded across optimization, systems, and applications \citep{bao2024survey,bagdasarian2025survey}.
For our purposes, the key point is that once 3DGS became a strong rendering primitive, Gaussian-set size and cost became central research problems \citep{fang2024minisplatting,mallick2024taming3dgs}.

\noindent\textbf{3DGS compaction and pruning.} A growing literature targets the memory, storage, and runtime costs of large 3D scenes, including budgeted training or densification methods~\citep{fang2024minisplatting,mallick2024taming3dgs}. Others reduce model size through pruning, compact parameterizations, or global reduction~\citep{fan2023lightgaussian,girish2023eagles,lee2023compact3dgrf,xie2024mesongs,ali2024trimming,hanson2025pup,wang2025ghap,zhang2025gaussianspa}. A third group emphasizes coding and storage efficiency through progressive or quantized representations~\citep{chen2025pcgs,lee2025omg,wang2025nsvq}. Compression surveys now provide a broader taxonomy of this landscape \citep{bagdasarian2025survey,ali2025compressionsurvey}. %
Pruning~\citep{ali2024trimming,hanson2025pup} is especially attractive because it is post hoc and representation-preserving: it outputs a smaller standard 3DGS scene that can be rendered by existing pipelines, without specialized runtime libraries or separate decoding, while directly reducing memory and rendering cost.

\noindent\textbf{Coresets and provable pruning.} 
Coresets are small weighted subsets that approximate an objective over a query family~\citep{agarwal2005geometric,badoiu2003smaller,kumar2005minimum,maalouf2019fast,maalouf2023autocoreset}. 
Coresets have been applied to many problems, including regression, subspace approximation, $k$-means, neural-network training, test-time active fine tuning, and domain adaptations~\citep{maalouf2020tight,bachemkmeancset,bachem2017practical,jubran2020sets,tukan2023provable,sener2018active,mirzasoleiman2020coresets,yang2023sustainable,rosman2017hybrid,jubran2021overview,hulkund2025datas,khamis2026efficient}. 
Sensitivity sampling and broader coreset frameworks are standard routes to obtaining such coresets~\citep{feldman2011,maalouf2021unified,phillips2016,tukan2020coresets}. Specifically, provable subset-selection results were developed for neural-network pruning, including data-dependent and data-independent neuron pruning~\citep{baykal2019,mussay2020,liebenwein2020,Tukan2022provable}. 
These results are closest in spirit to our goal, but they assume objectives with a fixed itemwise decomposition once the input is fixed.

\section{3DGS Pruning via Coresets}
\label{sec:problem-setup}

\textbf{3DGS scene. } 3DGS represents a scene by a set of Gaussians. Let $N>0$ be an integer, a 3DGS scene is a set $G=\{g_1,\dots,g_N\}$ of $N$ Gaussians, where for every $i\in[N]$: $g_i=(\mu_i,\Sigma_i,\alpha_i,c_i)$ such that $\mu_i\in\RR^3$ is the mean, $\Sigma_i\in\RR^{3\times 3}$ is the positive definite covariance matrix, $\alpha_i\in[0,1]$ is the opacity, and $c_i$ is the view-dependent intensity of a color channel. %

\textbf{Our goal. } For a given query (camera pose and a pixel to render), to render the desired pixel, Gaussians are composited through front-to-back order: each Gaussian contributes according to its
projected footprint, color, and opacity, scaled by the transmittance left by earlier Gaussians.  
We seek a multiplicative approximation guarantee over a family of rendering queries. To mathematically formulate this in a standard relative-error form, each query (we wish to approximate) specifies the
camera parameters (pose) together with a specific output channel in a pixel to approximate, producing a nonnegative
\emph{scalar} objective for one RGB channel. 
Thus, a guarantee over all queries implies approximation for all selected views, pixels, and channels, and therefore for the full RGB
render.

\textbf{Rendering query.} Formally, let $\Lambda=\{R,G,B\}$ denote the output channels. A rendering query is $q=(C,x,\lambda)\in\cQ$, where $C = (R,t)$ are the camera extrinsics\footnote{For simplicity, we assume fixed camera intrinsics and fixed image domain throughout, and omit them from the notation.} with
$R\in SO(3)$ and $t\in\RR^3$, defining the world-to-camera map $p\mapsto Rp+t$, $x$ is an image-plane location (a pixel), and $\lambda\in\Lambda$ is an output channel. Thus, a query $q$ corresponds to one color channel of one pixel from a camera pose.

\textbf{Scalar rendering function.} For $g_i\in G$, let $k(g_i,q)$ be the projected Gaussian-kernel value of Gaussian $g_i$ at pixel $x$ under the camera pose $C$, let $\rho(g_i,q):=\alpha_i k(g_i,q)$ be its attenuation factor, and let $\prec_{G,q}$ denote the front-to-back order induced by the view depths of the Gaussians in $G$ under pose $C$. The full-scene prefix transmittance of Gaussian $g_i$ is $T(G,g_i,q):=\prod_{j\prec_{G,q} i}(1-\rho(g_j,q))$, and its full-scene contribution is $a(G,g_i,q):=c_i(q)\rho(g_i,q)T(G,g_i,q)$. The resulting per-channel scalar rendered value of scene $G$ is $A(G,q):=\sum_{g_i \in G} a(G,g_i,q)$. A note is given here\footnote{Note that the quantities $k(g_i,q)$, $\rho(g_i,q)$, $T(G,g_i,q)$, and the ordering $\prec_{G,q}$, depend only on the camera extrinsics $C$, the pixel $x$, and the Gaussian model G, and are invariant to the output channel $\lambda$.}.

\textbf{A pruned 3DGS scene.} A weighted pruned scene is represented by a vector $w\in[0,\infty)^N$, where $w_i=0$ removes Gaussian $g_i$ and $w_i>0$ reweights it. $\|w\|_0$ is the number of nonzero entries of $w$ (number of retained Gaussians). Its true reduced-scene render is $A_w(G,q):=\sum_{g_i \in G} w_i c_i(q)\rho(g_i,q)T_w(G,g_i,q)$, where $T_w(G,g_i,q):=\prod_{j \prec_{G,q} i}(1-w_j\rho(g_j,q))$. Note that, in this paper, a 3DGS coreset is a weighted pruned scene whose induced objective approximates the full-scene objective over a family of queries/views. %
\begin{definition}[Multiplicative $\eps$-coreset for 3DGS]\label{def:eps-coreset-main}
Let $N>0$,  $G=\{g_1,\dots,g_N\}$ be a 3DGS scene, and let $\mathcal U\subseteq \cQ$ be a family of rendering queries. Let $w\in[0,\infty)^N$ represent a weighted pruned scene. 
Recall $A$, the nonnegative scalar objective of the full scene (per-channel scalar rendering function), and let $A_w$ be the corresponding objective induced by $w$. We call $w$ an \emph{$\eps$-coreset} for $G$ on $\mathcal U$ if
$
\Abs{A(G,q)-A_w(G,q)}\le \eps A(G,q)\text{, for every } q\in\mathcal U.
$
\end{definition}
\subsection{The Unrestricted Case: No General 3DGS Coreset}
We first ask whether unrestricted 3DGS rendering admits a non-trivial multiplicative coreset guarantee. %

\begin{theorem}[General-case impossibility]\label{thm:no-general-main}
For every $N\ge 1$ and every $0<\eps<1$, there exist a 3DGS scene $G=\{g_1,\dots,g_N\}$ and a query family $\cQ$ such that no weighted reduced scene specified $w\in[0,\infty)^N$ with $\|w\|_0<N$ is an $\eps$-coreset for the true rendered objective $A_w$ on $\cQ$.
\end{theorem}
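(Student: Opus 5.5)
We prove \cref{thm:no-general-main} by building a scene in which each Gaussian can be isolated by some query. Fix $N\ge 1$ and $0<\eps<1$, and place $N$ Gaussians $g_1,\dots,g_N$ with well-separated means, for instance $\mu_i=(iL,0,0)$ with $L$ large. Give them small isotropic covariances $\Sigma_i=\sigma^2 I$, opacity $\alpha_i=1/2$, and constant positive color $c_i(q)\equiv 1$. For each $i$ we choose one query $q_i=(C_i,x_i,\lambda)$ whose camera looks along a ray through $\mu_i$ and whose pixel $x_i$ is the projection of $\mu_i$. The query family is $\cQ=\{q_1,\dots,q_N\}$, and a larger family containing these queries also works, because an $\eps$-coreset condition on a superset implies it on $\cQ$.

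The whole argument rests on an \emph{isolation property}: at $q_i$ only $g_i$ contributes. If every other Gaussian has $\rho(g_j,q_i)=0$, then $A(G,q_i)=c_i\rho(g_i,q_i)>0$, since $g_i$ has nothing in front of it contributing. For the reduced scene we likewise get $T_w(G,g_i,q_i)=1$, and every other term in $A_w$ vanishes, so
\[
A_w(G,q_i)=w_i\,c_i\,\rho(g_i,q_i).
\]
Now suppose $\|w\|_0<N$. Then some index $i$ has $w_i=0$, so $A_w(G,q_i)=0$ and
\[
\Abs{A(G,q_i)-A_w(G,q_i)}=A(G,q_i)>\eps A(G,q_i),
\]
because $\eps<1$ and $A(G,q_i)>0$. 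This contradicts the coreset condition. The conclusion holds for every $w$ with a zero entry, whatever the other weights are; reweighting cannot help because of the factor $w_i$.

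The main obstacle is the isolation property. A genuine Gaussian kernel $k(g_j,q)$ is strictly positive everywhere, so a far-away Gaussian still leaks a tiny contribution. I would handle this in one of two ways.

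\begin{itemize}
\item \emph{Compact-support kernels.} Adopt the standard 3DGS convention that the projected footprint is truncated, for example at $3\sigma$, as real rasterizers do. Then choosing $L$ larger than the sum of the support radii together with the camera's field of view gives exact zeros.
\item \emph{Exact Gaussians.} Keep untruncated kernels and argue quantitatively instead. Require $\sum_{j\ne i}\rho(g_j,q_i)\le\delta$ with $\delta$ small, and bound the transmittance effect of the other Gaussians by $1-\delta$ from below and $1$ from above. If $w_i=0$, then $A_w(G,q_i)\le\sum_{j\ne i}w_jc_j\rho(g_j,q_i)$. Nonzero weights are unbounded, but the weights are fixed across all queries, so I would send $L\to\infty$, or $\sigma\to 0$, depending on $w$. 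This dependence is circular, since $G$ must be chosen before $w$.
\end{itemize}

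The exact-Gaussian route therefore needs extra care. I would resolve it by making only $g_i$ visible at $q_i$: place each camera so that the other Gaussians lie behind the camera, at negative depth, and are culled, which fits the standard frustum-culling convention. I would state this culling convention explicitly, which yields exact isolation.
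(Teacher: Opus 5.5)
Your proof is correct and is essentially the paper's argument. The paper adopts the same truncated (rasterized) footprint convention and likewise notes that exact isolation is unavailable for untruncated kernels. It builds one isolating query $q_i$ per Gaussian and concludes that any $w$ with some $w_i=0$ gives $A_w(G,q_i)=0<(1-\eps)A(G,q_i)$.

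The culling detour is unnecessary given your first route. If you keep it, it needs the means in convex position, for example on a circle. With collinear means $\mu_i=(iL,0,0)$, an interior $\mu_i$ lies in the convex hull of the others, so no camera can place it alone at positive depth.
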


\begin{proof}[Proof overview]
We construct a counterexample. Let $G=\{g_1,\dots,g_N\}$ be any scene for which the query family is rich enough to isolate each Gaussian individually: for every $i\in[N]$, there exists a query $q_i\in\cQ$ such that Gaussian $g_i$ is the unique positive contributor at $q_i$. Then $A(G,q_i)=a(G,g_i,q_i)>0$ and all other Gaussians contribute zero at $q_i$. Now let $w\in[0,\infty)^N$ satisfy $\|w\|_0<N$. Some index $i\in[N]$ must satisfy $w_i=0$. At the corresponding isolating query $q_i$, the reduced scene therefore satisfies $A_w(G,q_i)=0$, while the full scene satisfies $A(G,q_i)>0$. Hence $\Abs{A(G,q_i)-A_w(G,q_i)} = A(G,q_i)$, so the relative error at $q_i$ is exactly $1$. Therefore $w$ cannot be an $\eps$-coreset for any $0<\eps<1$. The full proof, including an explicit footprint-isolation construction for finite-support rasterized 3DGS splats, is given in \cref{sec:appendix-no-general}.
\end{proof}
The interpretation: if the query family is rich enough to isolate individual Gaussians, then every Gaussian is indispensable. A coreset theorem must restrict the query family, the objective, or both.
\subsection{Coresets for 3DGS Under Representative Queries}
\label{sec:finite-th-main}
\begin{algorithm}[t]
\caption{Sensitivity-based 3DGS coreset pruning}
\label{alg:sensitivity-3dgs}
\begin{algorithmic}[1]
\REQUIRE Pretrained 3DGS scene $G=\{g_i\}_{i=1}^n$, and a budget (pruned model size) $m$
\ENSURE Pruned weighted 3DGS scene $G'$

\STATE Build finite queries $\cQf$ from representative cameras, pixels/tiles, and RGB channels.
\STATE For each $g_i\in G$ and $q\in \cQf$, compute
$
\mathrm{imp}(g_i,q):=a(G,g_i,q)/A(G,q).
$
\STATE Set
$
s(g_i):=\max_{q\in \cQf}\mathrm{imp}(g_i,q),\quad
S:=\sum_{j=1}^n s(g_j),\quad
p(g_i):=s(g_i)/S.
$
\STATE Sample $m$ Gaussians i.i.d. from $G$ according to $p$; let $N_i$ be the number of times $g_i$ is sampled.
\STATE Assign weight
$
w_i:=N_i/(m\,p(g_i))
$
for every sampled Gaussian; $w=(w_1,\cdots,w_n)$
\STATE \textbf{return} $w$
\end{algorithmic}
\end{algorithm}
The unrestricted impossibility result shows that non-trivial guarantees require structure. We therefore restrict attention to a finite representative query family induced by the target views together with a nonnegative scalar objective and a stability condition on prefix transmittance perturbations. We then define the sampling rule and state the fixed-objective and rendered guarantees.

\textbf{Setting: Finite query set} Let $\cQf\subseteq \cQ$ be a finite family of representative queries: guarantees are required over a controlled set of camera poses and image samples, not over a continuum of all possible cameras and rays. In 3DGS terms, $\cQf$ defines the view-space discretization of the guarantee: its density determines how finely the approximation covers camera translations and rotations, while its image samples define the pixels, tiles, or channels being scored.

\textbf{Our approach: Gaussian sensitivity. } We seek to quantify the importance of each Gaussian.  
Intuitively, a Gaussian may contribute very differently across queries: it may be highly influential for some views, rays, or pixels, and nearly irrelevant for others. We therefore first define the contribution of each Gaussian to each query. We then define its global importance, or sensitivity, as the maximum relative contribution over all queries. 
This identifies Gaussians whose removal could cause large approximation errors, and distinguishes them from Gaussians that are consistently negligible.
First, we disregard queries where $A(G,q) = 0$ since no Gaussian is required for them.
For each Gaussian $g_i \in G$ and each query $q \in \cQf$, we introduce the \emph{importance} of $g_i$  with respect to $q$: $
imp(g_i, q):=\frac{a(G,g_i,q)}{A(G,q)}$, We then define the Gaussian's \emph{sensitivity} score as: \begin{equation}\label{eq:sen}
s(g_i):=\max_{q\in\cQf}imp(g_i,q) = \max_{q\in\cQf} {a(G,g_i,q)}/{A(G,q)},
\end{equation}
and the total sensitivity of the scene is $ S:=\sum_{g_i \in G} s(g_i)$. 
Thus $s(g_i)$ is the largest normalized contribution of Gaussian $i$ over the representative family. %

\noindent\textbf{Pruning pipeline: 3DGS Coreset via Sensitivity sampling.} For every Gaussian $g_i$, define $p(g_i):=s(g_i)/S$. For a target sample size $m$, sample (i.i.d) $m$ Gaussian $G'=\{g'_1,\dots,g'_m\}$  from the scene $G$, where each $g'_i$ is sampled with probability $p(g'_i)$. Define $n_i$ to be the number of times $g_i$ was sampled. The output weight vector $w\in[0,\infty)^N$ is then
$w_i:= \dfrac{n_i}{m p(g_i)}$. Equivalently, each sampled occurrence of Gaussian $g_i$ contributes an inverse-probability weight $(m p(g_i))^{-1}$. The resulting weighted reduced scene is supported on at most $m$ Gaussians and is unbiased for the fixed-objective relaxation on every fixed query. Algorithm~\ref{alg:sensitivity-3dgs} summarizes the pipeline.%

\noindent\textbf{Relaxed objective.} For the coreset theorem, we also introduce the auxiliary fixed-objective relaxation $A_w^{\mathrm{th}}(G, q):=\sum_{g_i \in G} w_i a(G,g_i,q)$. Here each itemwise term $a(G,g_i,q)$ is frozen under the transmittance of the full scene $G$. This is a deliberate theoretical relaxation: once the subset changes, the true rendered objective $A_w$ no longer decomposes into fixed per-Gaussian terms, whereas $A_w^{\mathrm{th}}$ does. The later rendered theorem shows how a guarantee for $A_w^{\mathrm{th}}$ can be transferred back to $A_w$ under explicit transmittance-stability assumptions.

\begin{theorem}[Query size dependent 3DGS coreset]\label{thm:finite-th-main} Let $G$ be a 3DGS scene, and let $\cQf$ be a desired finite set of queries. Let $\eps_c, \delta \in(0, 1)$, and let $w$ be the weight vector produced by the pruning pipeline above (e.g., the output of a call to Alg.~\ref{alg:sensitivity-3dgs} with $G$ and $m$ as inputs).  If
$
m\ge \frac{3S}{\eps_c^2}\log\!\left(\frac{2|\cQf|}{\delta}\right), 
$
then, with probability at least $1-\delta$, $w$ is an $\eps_c$-coreset for the fixed-objective relaxation $A^{\mathrm{th}}$ on $\cQf$, i.e., $\Abs{A(G,q)-A_w^{\mathrm{th}}(G,q)}\le \eps_c A(G,q)$ for all $q\in\cQf$. Moreover, inherently $\|w\|_0\le m$.
\end{theorem}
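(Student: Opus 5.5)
Fix a single query $q\in\cQf$ with $A(G,q)>0$; queries with $A(G,q)=0$ have $a(G,g_i,q)=0$ for all $i$, so $A^{\mathrm{th}}_w(G,q)=0$ and they are trivially satisfied. The plan is a standard sensitivity-sampling argument: write $A^{\mathrm{th}}_w(G,q)$ as an average of $m$ i.i.d.\ bounded random variables, apply a multiplicative Chernoff bound, and take a union bound over $\cQf$.

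\textbf{Step 1 (representation).} Let $J_1,\dots,J_m$ be the i.i.d.\ sampled indices with $\Pr[J_t=i]=p(g_i)$. Since $w_i=n_i/(m p(g_i))$, we have
$A^{\mathrm{th}}_w(G,q)=\frac1m\sum_{t=1}^m X_t$ with $X_t:=a(G,g_{J_t},q)/p(g_{J_t})$.
Gaussians with $s(g_i)=0$ have $a(G,g_i,q)=0$ on all of $\cQf$, so they never matter and are never sampled. Then $\mathbb E[X_t]=\sum_i a(G,g_i,q)=A(G,q)$, which gives unbiasedness.

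\textbf{Step 2 (boundedness via sensitivity).} Normalize by setting $Y_t:=X_t/(S\,A(G,q))$. Then
$Y_t=\frac{a(G,g_{J_t},q)}{A(G,q)\,s(g_{J_t})}\in[0,1]$,
because $a(G,g_i,q)/A(G,q)=\mathrm{imp}(g_i,q)\le s(g_i)$ by \eqref{eq:sen}. Nonnegativity holds since $c_i,\rho,T\ge0$. Also $\mathbb E[Y_t]=1/S$, so $\mu:=\mathbb E[\sum_t Y_t]=m/S$.

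\textbf{Step 3 (Chernoff).} The event $|A^{\mathrm{th}}_w-A|>\eps_c A$ is exactly $|\sum_t Y_t-\mu|>\eps_c\mu$. For independent $[0,1]$ variables and $\eps_c\in(0,1)$, the two-sided multiplicative Chernoff bound gives probability at most $2\exp(-\eps_c^2\mu/3)=2\exp(-\eps_c^2 m/(3S))$. The hypothesis $m\ge \frac{3S}{\eps_c^2}\log(2|\cQf|/\delta)$ makes this at most $\delta/|\cQf|$.

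\textbf{Step 4 (union bound and support).} A union bound over the at most $|\cQf|$ queries gives simultaneous success with probability at least $1-\delta$. The bound $\|w\|_0\le m$ is immediate, since at most $m$ distinct indices are drawn.

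I expect no serious obstacle. The only delicate points are bookkeeping: excluding zero-sensitivity Gaussians and zero-valued queries so that $p$ and the ratios are well defined, and stating Chernoff in the $[0,1]$-bounded form with the constant $3$ covering both tails, using $\exp(-\eps^2\mu/2)\le\exp(-\eps^2\mu/3)$ for the lower tail.
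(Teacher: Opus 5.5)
Your proof is correct and follows essentially the same route as the paper: restrict to positive-sensitivity indices, write $A^{\mathrm{th}}_w(G,q)/A(G,q)$ as an average of i.i.d.\ terms bounded by $S$ via the sensitivity definition, rescale to $[0,1]$, apply the two-sided multiplicative Chernoff bound with constant $3$, and union-bound over $\cQf$. The only cosmetic differences are that you state Chernoff for the sum with mean $m/S$ rather than for the average with mean $1/S$, and you treat queries with $A(G,q)=0$ as trivially satisfied rather than discarding them up front.
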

\begin{proof}[Proof overview] For any fixed $q\in\cQf$, the estimator $A_w^{\mathrm{th}}(G,q)$ is an average of i.i.d. nonnegative random variables with mean $A(G,q)$. The sensitivity bound implies that each sampled term is at most $S A(G,q)$ after normalization. A standard multiplicative concentration bound therefore gives a relative-error guarantee for one query, and a union bound over the finite family $\cQf$ yields the simultaneous statement. The full proof is given in \cref{sec:appendix-finite-proof}.
\end{proof}

The fixed-objective theorem is not yet a theorem about true re-rendering. The reason is structural: once the reduced scene changes, the prefix transmittances change as well, so later Gaussian contributions are no longer fixed functions of the full scene.

\subsection{Transfer to true rendering under mild transmittance stability}\label{sec:rendered-main}

To transfer the fixed-objective guarantee back to true rendering, we add one more ingredient: a stability requirement saying that the reduced scene preserves the relevant prefix transmittances up to controlled multiplicative distortion.

\begin{definition}[Weighted rendering validity]\label{def:valid-main}
A weight vector $w\in[0,\infty)^N$ is called \emph{valid} on the query set $\cQf$ if $0\le w_i\rho(g_i,q)<1$ for every $q\in\cQf$ and $i\in[N]$. This ensures that every factor in $T_w(G,g_i,q)$ is nonnegative, so the reduced-scene transmittances are well-defined.
\end{definition}

\begin{assumption}[Log-transmittance stability]\label{ass:log-stability-main}
There exists $\gamma\ge 0$ such that for every $q\in\cQf$ and $i\in[N]$ with $w_i>0$, one has $\Abs{\log T(G,g_i,q) - \log T_w(G,g_i,q)}\le \gamma$. This assumption captures the regime in which pruning and reweighting do not drastically alter the occlusion and transmittance structure along the representative queries.
\end{assumption}

\begin{theorem}[Finite-query rendered coreset under mild assumptions]\label{thm:rendered-main}
Let $w$ be the weight vector produced by the sensitivity-based sampling rule, and suppose that $w$ is valid on $\cQf$ and satisfies \cref{ass:log-stability-main}. Under the same sample-size condition as in \cref{thm:finite-th-main}, with probability at least $1-\delta$, $w$ is an $\eps_r$-coreset for the true rendered objective $A$ on $\cQf$, where
$
\eps_r:=\max\Bigl\{1-e^{-\gamma}(1-\eps_c),\ e^{\gamma}(1+\eps_c)-1\Bigr\}.$
Equivalently, $\Abs{A(G,q)-A_w(G,q)}\le \eps_r A(G,q)$ for all $q\in\cQf$.
\end{theorem}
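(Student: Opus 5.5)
The plan is to work on the good event of \cref{thm:finite-th-main}, which has probability at least $1-\delta$. On that event we have, simultaneously for every $q\in\cQf$,
\[
(1-\eps_c)A(G,q)\le A_w^{\mathrm{th}}(G,q)\le(1+\eps_c)A(G,q).
\]
Everything after this is deterministic and per query: we compare the true reduced render $A_w(G,q)$ with the relaxed one $A_w^{\mathrm{th}}(G,q)$ term by term. So we fix $q\in\cQf$ and write both as sums over $i$ with $w_i>0$. Gaussians with $w_i=0$ contribute nothing to either sum, and this is exactly why \cref{ass:log-stability-main} only needs to hold for retained Gaussians.

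For a retained index $i$ the two summands are
\[
w_i c_i(q)\rho(g_i,q)T_w(G,g_i,q)
\quad\text{and}\quad
w_i c_i(q)\rho(g_i,q)T(G,g_i,q).
\]
They differ only in the transmittance factor. Validity (\cref{def:valid-main}) gives $T_w(G,g_i,q)>0$, since it is a product of factors in $(0,1]$. If $\rho(g_i,q)>0$, then $T(G,g_i,q)>0$ as well, because $\rho(g_j,q)\le 1$; the degenerate case $\rho(g_j,q)=1$ needs a remark, and if it occurs the assumption should be read as implicitly requiring finiteness. With both transmittances positive, the logarithms are defined, and \cref{ass:log-stability-main} yields
\[
e^{-\gamma}T(G,g_i,q)\le T_w(G,g_i,q)\le e^{\gamma}T(G,g_i,q).
\]
If instead $\rho(g_i,q)c_i(q)=0$, both summands vanish and there is nothing to prove.

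Since the color intensities $c_i(q)$, the attenuations $\rho$ and the weights $w_i$ are nonnegative, these bounds can be multiplied through each summand and then added. This gives
\[
e^{-\gamma}A_w^{\mathrm{th}}(G,q)\le A_w(G,q)\le e^{\gamma}A_w^{\mathrm{th}}(G,q).
\]
Chaining this with the event from \cref{thm:finite-th-main} gives
\[
e^{-\gamma}(1-\eps_c)A(G,q)\le A_w(G,q)\le e^{\gamma}(1+\eps_c)A(G,q).
\]
Subtracting $A(G,q)$, we get $A_w-A\le (e^{\gamma}(1+\eps_c)-1)A$ and $A-A_w\le(1-e^{-\gamma}(1-\eps_c))A$, so $\Abs{A-A_w}\le\eps_r A$ with $\eps_r$ as defined in the statement. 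Queries with $A(G,q)=0$ have already been excluded. For completeness, note that such a query has all $a(G,g_i,q)=0$, so by the same sandwich argument $A_w(G,q)=0$ and the bound holds there too.

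I expect the only delicate step to be the termwise comparison. It relies crucially on nonnegativity of $c_i(q)$, since a view-dependent color could otherwise be negative, and the paper's assumption of a nonnegative scalar objective should be invoked here. It also needs the well-definedness of both logarithms, which requires validity together with the handling of zero-transmittance edge cases. Everything else is bookkeeping layered on the probability-$1-\delta$ event from \cref{thm:finite-th-main}. No additional union bound is needed, because the transfer is deterministic given validity and stability.
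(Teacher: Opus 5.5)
Your proposal is correct and follows the paper's proof essentially step for step. On the probability-$1-\delta$ event of \cref{thm:finite-th-main}, the paper applies the termwise ratio bound $e^{-\gamma}\le T_w/T\le e^{\gamma}$ (\cref{lem:ratio-from-log-app}) with nonnegativity to get $e^{-\gamma}A_w^{\mathrm{th}}\le A_w\le e^{\gamma}A_w^{\mathrm{th}}$ (\cref{lem:transfer-app}), then chains and subtracts exactly as you do. One small cleanup: $T(G,g_i,q)>0$ does not follow from $\rho(g_i,q)>0$ and $\rho(g_j,q)\le 1$; the right justification is that validity gives $T_w(G,g_i,q)>0$, and the finite log-difference in \cref{ass:log-stability-main} then forces $T(G,g_i,q)>0$.
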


\begin{proof}[Proof overview] \Cref{thm:finite-th-main} controls $A_w^{\mathrm{th}}(G,q)$ relative to $A(G,q)$. The log-transmittance stability assumption then implies that, for each retained Gaussian and representative query, the reduced-scene transmittance differs from the full-scene transmittance by at most a factor $e^{\pm\gamma}$. Summing these termwise bounds yields a multiplicative sandwich between $A_w(G,q)$ and $A_w^{\mathrm{th}}(G,q)$, and combining the two estimates gives the rendered guarantee. The full proof is given in \cref{sec:appendix-transfer}.
\end{proof}
\subsection{Compact-region extension}\label{sec:compact-region-main}

The finite-query theorem controls a prescribed representative $\cQf$. It is useful to ask when such a guarantee transfers from finitely many representatives to a continuous region of nearby queries. We state a standard representative-cover extension: if every query in a compact region is close to some representative query, and both the full and reduced rendered objectives vary smoothly inside the region, then the guarantee extends to the whole region with an additional cover-radius error.

\begin{assumption}[Compact nondegenerate query region]\label{ass:compact-region-main}
Let $\cQ_r\subseteq\cQ$ be a compact query region equipped with a metric $d_{\cQ}$ (think of $\cQ_r$ as a local region of camera poses, pixels, and channels around a representative set). On $\cQ_r$, assume:
(i) $\cQ_r$ is compact, (ii) the full rendered objective $q\mapsto A(G,q)$ is Lipschitz on $\cQ_r$ with constant $L_{\mathrm{full}}$, (iii) the reduced rendered objective $q\mapsto A_w(G,q)$ is Lipschitz on $\cQ_r$ with constant $L_w$, and (iv) the full rendered objective is bounded away from zero:
    $
    A(G,q)\ge A_{\min}>0
    \text{, for every }q\in \cQ_r.
    $
\end{assumption}

\begin{definition}[$\rho$-net]\label{def:rho-net-main}
A finite set $\cQf_r\subseteq \cQ_r$ is a $\rho$-net of $\cQ_r$ if for every $q\in \cQ_r$ there exists $q'\in\cQf_r$ such that
$
d_{\cQ}(q,q')\le \rho .
$
\end{definition}

The Lipschitz requirement is a local regularity condition. It is expected to hold on regions where the active set of Gaussians is finite, the front-to-back order does not change discontinuously, and the camera-to-ray map, color functions, and projected kernel values vary smoothly. In other words, the region should not cross a sharp visibility or ordering discontinuity. The appendix gives the formal smoothness-to-Lipschitz argument; see \cref{sec:appendix-region-proof}.

\begin{theorem}[Compact-region extension from representatives]\label{thm:compact-region-main}
Assume \cref{ass:compact-region-main}. Let $\cQf_r\subseteq \cQ_r$ be a finite $\rho$-net. Suppose that the reduced scene satisfies the multiplicative guarantee on the representative set:
$
\Abs{A(G,q')-A_w(G,q')}
\le
\eps_0 A(G,q')
\text{, for every }q'\in\cQf_r .
$
Then for every $q\in \cQ_r$,
$
\Abs{A(G,q)-A_w(G,q)}
\le
\eps_{\mathrm{reg}} A(G,q),
$
where
$
\eps_{\mathrm{reg}}
:=
\eps_0+
\frac{\bigl((1+\eps_0)L_{\mathrm{full}}+L_w\bigr)\rho}{A_{\min}} .
$
\end{theorem}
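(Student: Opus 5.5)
Fix $q\in\cQ_r$ and use the $\rho$-net property to pick a representative $q'\in\cQf_r$ with $d_{\cQ}(q,q')\le\rho$. We compare the error at $q$ to the error at $q'$ with a three-term triangle inequality:
\[
\Abs{A(G,q)-A_w(G,q)}\le \Abs{A(G,q)-A(G,q')}+\Abs{A(G,q')-A_w(G,q')}+\Abs{A_w(G,q')-A_w(G,q)}.
\]
Parts (ii) and (iii) of \cref{ass:compact-region-main} bound the first and third terms by $L_{\mathrm{full}}\rho$ and $L_w\rho$. The hypothesis on the representatives bounds the middle term by $\eps_0 A(G,q')$.

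It remains to express $A(G,q')$ in terms of $A(G,q)$. By Lipschitz continuity of the full objective, $A(G,q')\le A(G,q)+L_{\mathrm{full}}\rho$. Hence the middle term is at most $\eps_0A(G,q)+\eps_0L_{\mathrm{full}}\rho$. Adding the three bounds gives
\[
\Abs{A(G,q)-A_w(G,q)}\le \eps_0A(G,q)+\bigl((1+\eps_0)L_{\mathrm{full}}+L_w\bigr)\rho .
\]

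Finally, convert the additive term to a multiplicative one. Part (iv) gives $A(G,q)\ge A_{\min}>0$, so $\rho\le \rho\,A(G,q)/A_{\min}$. Substituting this yields exactly $\eps_{\mathrm{reg}}A(G,q)$.

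The argument is routine. The only point needing care is the bookkeeping in the second step: the factor $(1+\eps_0)$ on $L_{\mathrm{full}}$ arises because $\eps_0A(G,q')$ is compared against $A(G,q)$ at the wrong query, which costs one extra Lipschitz term scaled by $\eps_0$. Compactness is not used beyond guaranteeing that a finite $\rho$-net exists.
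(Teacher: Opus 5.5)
Your proposal is correct and follows essentially the same route as the paper's proof. Both use the three-term triangle inequality through a net point $q'$, bound the outer terms by $L_{\mathrm{full}}\rho$ and $L_w\rho$, use $A(G,q')\le A(G,q)+L_{\mathrm{full}}\rho$ to produce the $(1+\eps_0)$ factor, and then apply $A(G,q)\ge A_{\min}$ to turn the additive term into relative error.
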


\begin{proof}[Proof overview]
For any query $q\in \cQ_r$, choose a representative $q'\in\cQf_r$ within distance $\rho$. The error at $q$ is decomposed into three terms: the change of the full render $A(G,\cdot)$ from $q$ to $q'$, the representative-set approximation error at $q'$, and the change of the reduced render $A_w(G,\cdot)$ from $q'$ back to $q$. The two change terms are controlled by the Lipschitz constants, while the middle term is controlled by the representative-set coreset guarantee. The lower bound $A(G,q)\ge A_{\min}$ converts the resulting additive cover-radius term into relative error. The full proof is given in \cref{sec:appendix-region-proof}.
\end{proof}
When \cref{thm:rendered-main} holds on $\cQf_r$, we may take $\cQf=\cQf_r$ and instantiate $\eps_0$ by the finite-query rendered coreset error $\eps_r$.
Thus, \cref{thm:compact-region-main} extends the finite rendered guarantee from a representative query set to a compact region, at the cost of an additional term proportional to the cover radius $\rho$.
\subsection{Sensitivity at Multiple Resolutions}\label{sec:practical-extensions}
The finite-query coreset theorem is stated for per-channel objectives, but the same argument applies to any finite collection of nonnegative objectives obtained by aggregating these channel contributions. In particular, one may define the query space at different rendering resolutions: individual color channels, pixels, tiles, or scenes. Each choice simply changes what is treated as a single query objective before applying the same sensitivity-sampling argument. The Chernoff analysis is unchanged: the sampled random variables now correspond to the chosen aggregated objective rather than to a single color channel, and the resulting guarantee preserves that objective over the finite query family.

\noindent\textbf{Per-pixel construction.}
This variant treats each camera and image-plane location as a query, aggregates the RGB-channel contributions of each Gaussian into a single nonnegative pixel-level objective (luminance or total color contribution); Sensitivities are computed based on this objective. Rather than preserving each channel separately, the coreset preserves the aggregated pixel objective. 

\noindent\textbf{Per-tile and Per-scene construction.}
The per-tile variant further coarsens the query resolution. Each rendered image is partitioned into non-overlapping tiles, for example matching the tile granularity used by the 3DGS rasterization kernel. For each tile, a Gaussian's contribution is accumulated over the pixels and channels inside that tile, and sensitivities are computed with respect to the resulting tile-level objective. This gives a finite-query coreset guarantee for tile-aggregated rendering objectives. The per-scene variant uses the coarsest aggregation level. Here, each Gaussian's contribution is accumulated over the entire representative rendering set: all selected views, pixels, and channels. Sensitivities are therefore computed with respect to the full scene-level objective used for scoring. This is the variant used in our experiments.

\noindent\textbf{Additional variants.} The coreset framework motivates sensitivity-based selection, but the exact score aggregation can be engineered for practical robustness. In addition to the core color-aware $L_1$-style contribution score, our ablations evaluate $L_2$ variants, namely computing the sensitivities based on $a^{L_2}(G,g_i,q):=c_i^2(q)\rho^2(g_i,q)T^2(G,g_i,q)$, which emphasize large contribution discrepancies, and nocolor variants, namely $a^{nc}(G,g_i,q):=\rho(g_i,q)T(G,g_i,q)$, which remove the color factor to focus the score more strongly on geometry, opacity, visibility, and coverage. $A^{L_2}$ and $A^{nc}$ are also defined accordingly. These variants are empirical engineering extensions of the core method: they preserve the same sensitivity-driven selection pipeline.

\section{Experimental Results}\label{sec:experiments}
\begin{table*}[t]
    \centering
    \caption{Per-dataset average prune-only results at prune ratios 0.90 and 0.99. Rows are prune ratios and methods; columns are grouped by dataset and metric. Bold indicates best per dataset/metric/ratio; colored cells mark the top three methods.}
    \label{tab:prune_only_dataset_averages_090_099}
    \tiny
    \setlength{\tabcolsep}{2.2pt}
    \resizebox{0.9\textwidth}{!}{%
    \begin{tabular}{@{}ll !{\vrule width 0.8pt} c c c c c c c c c}
        \toprule
        \multirow{2}{*}{Prune} & \multirow{2}{*}{Method} & \multicolumn{3}{c}{Mip-NeRF 360} & \multicolumn{3}{c}{Deep Blending} & \multicolumn{3}{c}{Tanks \& Temples} \\
        \cmidrule(lr){3-5}\cmidrule(lr){6-8}\cmidrule(lr){9-11}
        & & PSNR $\uparrow$ & SSIM $\uparrow$ & LPIPS $\downarrow$ & PSNR $\uparrow$ & SSIM $\uparrow$ & LPIPS $\downarrow$ & PSNR $\uparrow$ & SSIM $\uparrow$ & LPIPS $\downarrow$ \\
        \midrule
        \multirow{5}{*}{0.90} & GHAP & \cellcolor{yellow!35}16.76 & \cellcolor{orange!25}0.458 & 0.494 & \cellcolor{yellow!35}20.53 & \cellcolor{orange!25}0.744 & 0.437 & \cellcolor{yellow!35}14.83 & \cellcolor{orange!25}0.488 & 0.483 \\
         & PUP & \cellcolor{orange!25}15.79 & \cellcolor{yellow!35}0.535 & \cellcolor{yellow!35}0.429 & \cellcolor{orange!25}19.52 & \cellcolor{yellow!35}0.765 & \cellcolor{yellow!35}0.393 & \cellcolor{orange!25}13.38 & \cellcolor{yellow!35}0.577 & \textbf{\cellcolor{green!30}0.406} \\
         & Trim & 14.45 & 0.413 & \cellcolor{orange!25}0.480 & 16.52 & 0.642 & \cellcolor{orange!25}0.429 & 12.46 & 0.471 & \cellcolor{orange!25}0.458 \\
         & Unif & 13.16 & 0.391 & 0.516 & 15.69 & 0.654 & 0.501 & 10.22 & 0.437 & 0.482 \\
         & Ours & \textbf{\cellcolor{green!30}18.16} & \textbf{\cellcolor{green!30}0.580} & \textbf{\cellcolor{green!30}0.417} & \textbf{\cellcolor{green!30}22.16} & \textbf{\cellcolor{green!30}0.790} & \textbf{\cellcolor{green!30}0.390} & \textbf{\cellcolor{green!30}16.35} & \textbf{\cellcolor{green!30}0.599} & \cellcolor{yellow!35}0.407 \\
        \cmidrule(lr){1-11}
        \multirow{5}{*}{0.99} & GHAP & 10.19 & \cellcolor{orange!25}0.177 & 0.664 & \cellcolor{yellow!35}10.14 & \cellcolor{orange!25}0.401 & \cellcolor{orange!25}0.653 & \cellcolor{orange!25}7.94 & \cellcolor{orange!25}0.224 & 0.675 \\
         & PUP & \cellcolor{yellow!35}10.67 & \cellcolor{yellow!35}0.227 & \cellcolor{yellow!35}0.624 & \cellcolor{orange!25}10.13 & \cellcolor{yellow!35}0.428 & \cellcolor{yellow!35}0.621 & \cellcolor{yellow!35}8.94 & \cellcolor{yellow!35}0.320 & \cellcolor{yellow!35}0.627 \\
         & Trim & 9.58 & 0.136 & 0.661 & 7.66 & 0.132 & 0.683 & 6.23 & 0.104 & 0.668 \\
         & Unif & \cellcolor{orange!25}10.27 & 0.174 & \cellcolor{orange!25}0.642 & 7.84 & 0.189 & 0.696 & 6.92 & 0.163 & \cellcolor{orange!25}0.658 \\
         & Ours & \textbf{\cellcolor{green!30}13.97} & \textbf{\cellcolor{green!30}0.396} & \textbf{\cellcolor{green!30}0.600} & \textbf{\cellcolor{green!30}15.62} & \textbf{\cellcolor{green!30}0.660} & \textbf{\cellcolor{green!30}0.526} & \textbf{\cellcolor{green!30}11.98} & \textbf{\cellcolor{green!30}0.435} & \textbf{\cellcolor{green!30}0.606} \\
        \bottomrule
    \end{tabular}}
\end{table*}

\begin{figure}[t]
    \centering
    \includegraphics[width=0.9\textwidth]{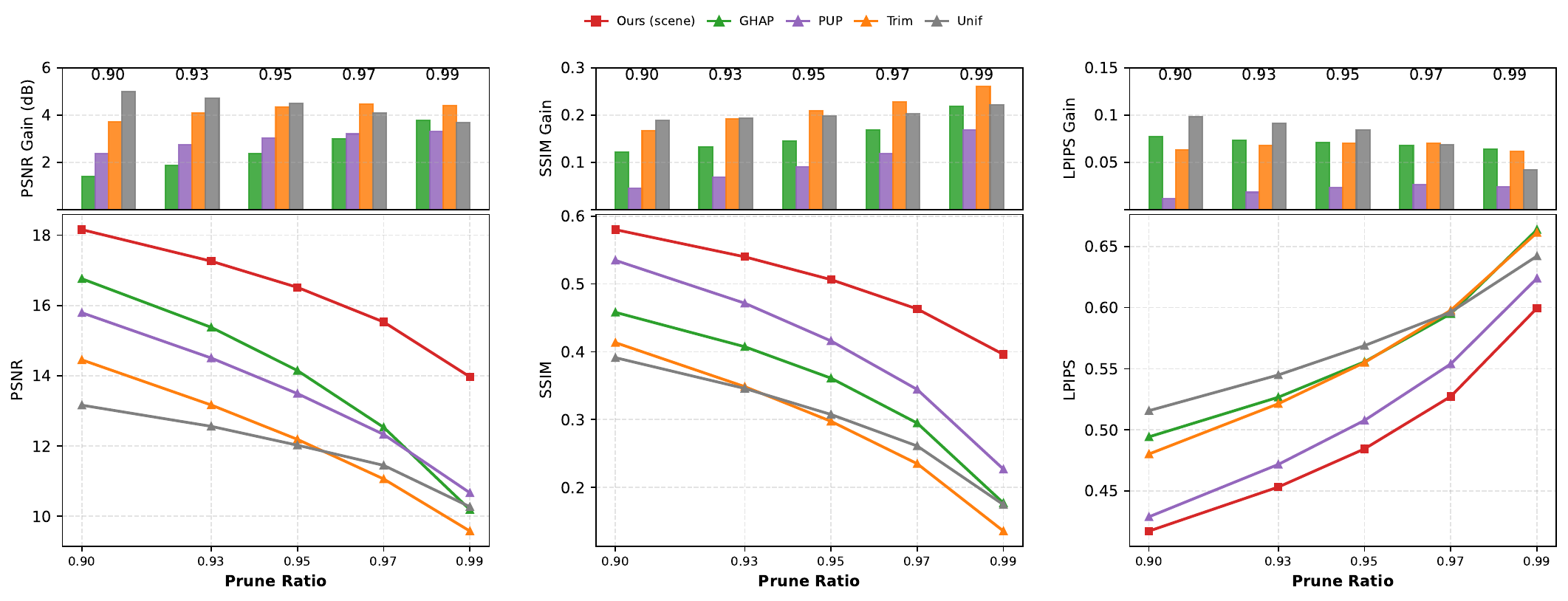}\caption{Combined prune-only curves and deltas for mip-NeRF 360: prune-only deltas (top row), and the prune-only curves (bottom row). For prune-only results, our method is significantly stronger than the competing methods across all metrics.}
\label{fig:combined_prune_delta_mipnerf360}
\end{figure}
Our experimental study targets scarce-compute deployment: aggressive compression at high prune ratios, prune-only evaluation, and short finetuning budgets where long post-pruning recovery is not affordable. The results report how much fidelity each reduction rule preserves immediately after pruning and how the reduced models change under small recovery budgets. The prune-only setting corresponds to zero post-pruning optimization. The short-recovery setting uses low finetuning budgets of 10, 50, 100, and 200 iterations. This emphasizes the regime in which the retained subset must already be strong, rather than relying on a long retraining phase. Experiments were conducted on a PC equipped with one NVIDIA A100-SXM4-40GB GPU and AMD EPYC 7742 64-Core CPUs. %

\noindent\textbf{Datasets.} We use the same 13 scenes used by the 3DGS work~\citep{kerbl2023} and other competing methods: Nine scenes from the Mip-NeRF 360 dataset~\cite{barron2022mip}, two scenes (truck and train) from the Tanks \& Temples dataset~\cite{knapitsch2017tanks}, and two scenes (drjohnson and playroom) from the Deep Blending dataset~\cite{hedman2018deep}.

\textbf{Reported results.} We report PSNR, SSIM, and LPIPS as the primary image-quality metrics. We also report the pruned-Gaussian ratio and the post-pruning optimization budget where applicable. %

\textbf{Competing methods are:} (1) GHAP: Gaussian Herding across Pens \citep{wang2025ghap}; (2) PUP 3D-GS: Principled Uncertainty Pruning for 3D Gaussian Splatting \citep{hanson2025pup}; (3) Trimming the Fat \citep{ali2024trimming}; (4)  Uniform sampling is used as a query-agnostic baseline. Our method is denoted \emph{Ours}. Unless stated otherwise, in all experiments, we use the per-scene sensitivity variant.  We also evaluate all variants later in this section.

\noindent\textbf{Prune-only results at aggressive compression.}
 We evaluate prune-only fidelity at the strongest compression levels where at most $20\%$ of the Gaussians are retained.  As shown in \cref{tab:prune_only_dataset_averages_090_099}, for prune ratios $0.9$ and $0.99$, our method variants consistently outperform the competing  methods across all test datasets, achieving the best PSNR, SSIM, and LPIPS scores in the prune-only setting. The same trend persists across the remaining compression ratios: $\{0.80,0.85,0.90\}$ and $\{0.95,0.97,0.99\}$ which are reported, respectively, in \cref{tab:prune_only_moderate} and \cref{tab:prune_only_aggressive} in the appendix due to space.

\noindent\textbf{Prune-only curves and delta trends.}
\cref{fig:combined_prune_delta_mipnerf360} sweeps the high prune-ratio regime on mip-NeRF 360 and reports PSNR, SSIM, and LPIPS. The upper panels report delta plots that isolate the gap between our method and each competing method: positive deltas indicate an advantage for our method, with PSNR and SSIM computed as ours minus competitor, and LPIPS computed as competitor minus ours. It clearly
shows how each method degrades as compression becomes more aggressive. While our method already shows a large advantage, this gap widens at higher prune ratios for PSNR and SSIM, and remains consistently strong for LPIPS.
The only exception is the delta to uniform pruning, whose gap can shrink at the highest prune ratios because uniform pruning already starts from very poor fidelity. The same trends are observed on all of the other datasets in \cref{fig:combined_prune_delta_tanks_temples} of the appendix.

\begin{wrapfigure}{r}{0.65\textwidth}
    \centering
    \includegraphics[width=0.65\textwidth]{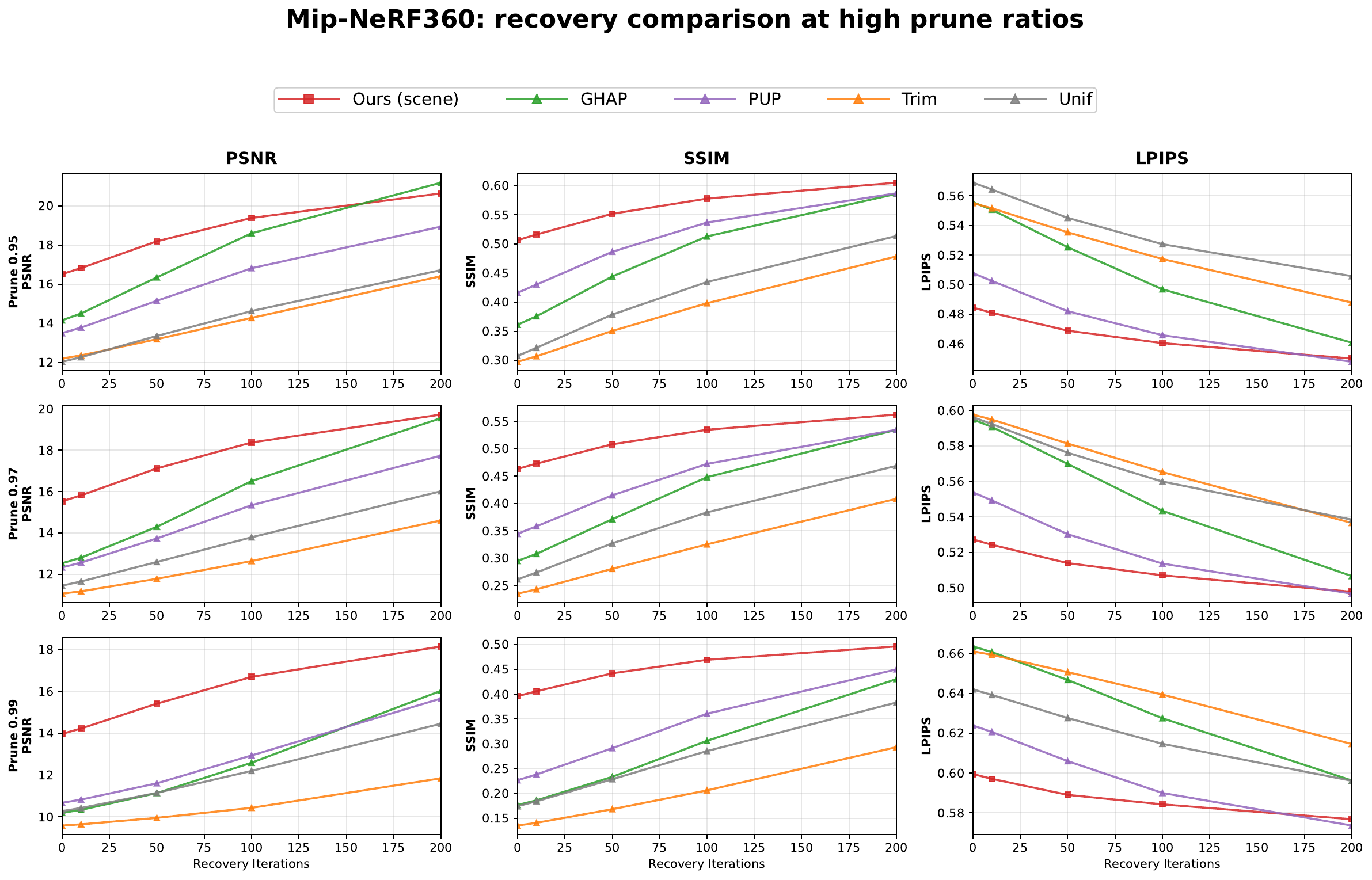}
    \caption{Short-recovery comparison against other methods on mip-NeRF 360 at prune ratios 0.95, 0.97, and 0.99. }%
    \label{fig:short_recovery_competitors_mipnerf360_small}
\end{wrapfigure}
\noindent\textbf{Short-recovery results.}
The short-recovery experiments separate two effects: the fidelity preserved immediately after pruning, and the amount each method can recover under a small finetuning budget. Matched comparisons against competing methods at prune ratios $0.95$, $0.97$, and $0.99$ are shown in \cref{fig:short_recovery_competitors_mipnerf360_small} for mip-NeRF 360. Our methods maintain a clear advantage throughout the low-budget recovery regime, largely because they start from a stronger reduced model. The gap is most pronounced at shorter finetuning budgets and higher prune ratios, precisely where recovery compute is most limited. The main exception is GHAP, which begins to approach our performance after $200$ finetuning iterations. This is expected, since GHAP is a global compaction method rather than standard pruning: by merging and adapting Gaussians, it has more opportunity to recover quality during finetuning. However, this also makes it less suitable for the strict prune-only or very-short-recovery setting, where the goal is to obtain an immediately strong vanilla-compatible subset with minimal post-processing. The same phenomenon is observed on the other tested datasets; see \cref{fig:short_recovery_competitors_deep_blend,fig:short_recovery_competitors_tanks_temples}.

\begin{wrapfigure}{r}{0.65\textwidth}
    \centering
    \includegraphics[width=0.65\textwidth]{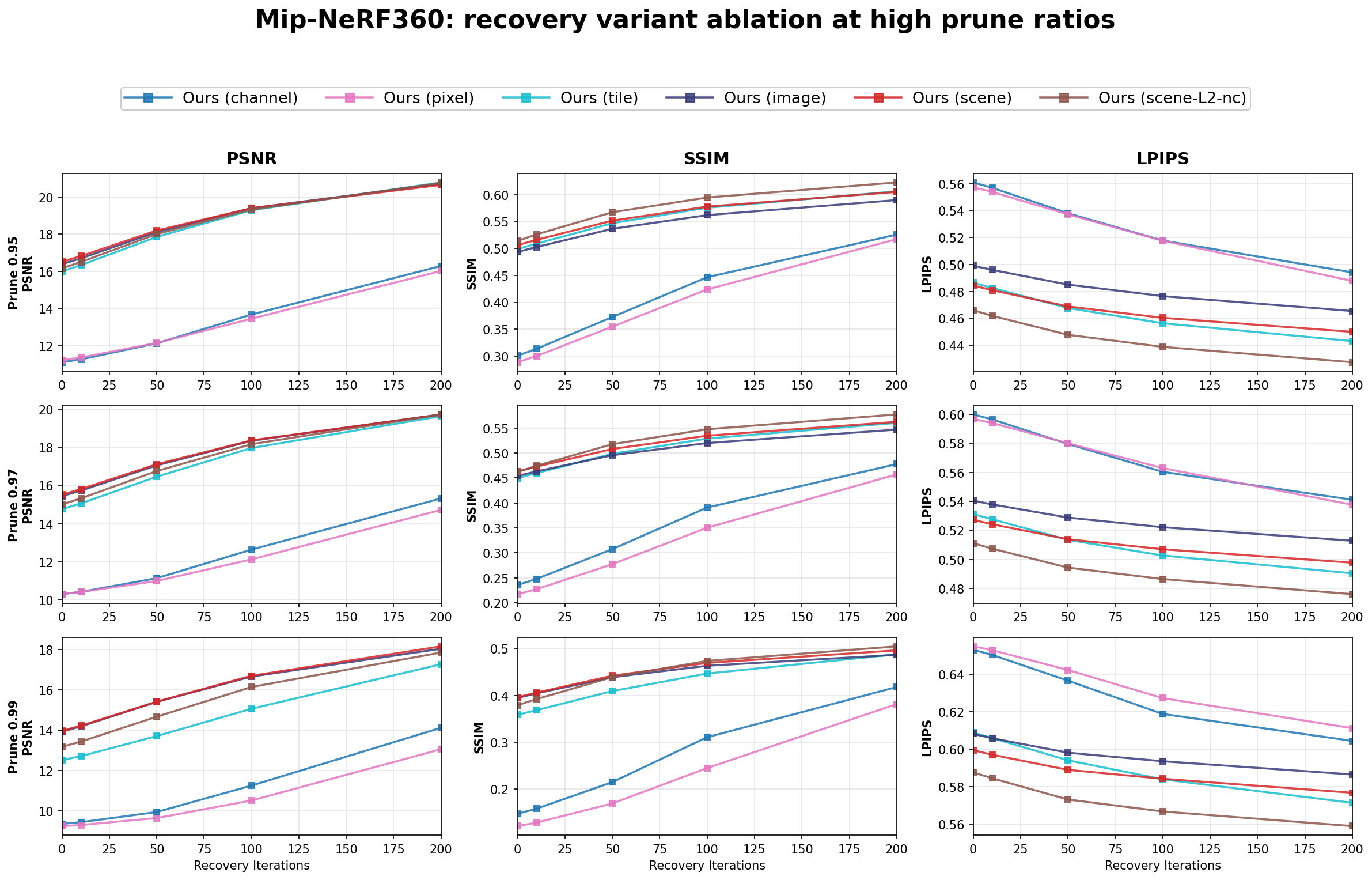}
    \caption{Short-recovery comparison of our methods variants on mip-NeRF 360 at prune ratios 0.95, 0.97, and 0.99.}
    \label{fig:appendix_recovery_variants_mipnerf360}
\end{wrapfigure}
\textbf{Ablations: Effect of sensitivity resolution and engineering variant.} We compare the different sensitivity resolution variants, varying the pruning ratio and reporting PSNR, LPIPS, and SSIM, both in the prune-only setting (\cref{fig:prune-only-fig}) and after the short recovery stage (\cref{fig:appendix_recovery_variants_mipnerf360}). We evaluate the per-channel, per-pixel, per-tile, and per-scene variants introduced in Section~\ref{sec:practical-extensions}, together with simple engineering variants of the scene-level score, such as removing the color term or using an $\ell_2$ aggregation instead of $\ell_1$.
Across both settings, the same trend is observed: per-scene sensitivity and its engineered variant perform best, followed by per-tile, per-pixel, and per-channel scoring. This suggests that aggregating contributions over larger rendering objectives yields a more stable importance signal, especially for identifying Gaussians that are useful across many pixels and views rather than only for isolated channel-level queries. The fact that the trend persists after recovery further shows that the aggregated sensitivities provide a better initialization for finetuning, not only better immediate pruning. Finally, the strong performance of the engineered variant indicates that the sensitivity framework is robust but not saturated: the theoretical principle already gives high-quality subsets, while simple implementation choices in the aggregation rule can still provide additional gains.

\begin{figure}[t]
    \centering
    \includegraphics[width=0.9\textwidth]{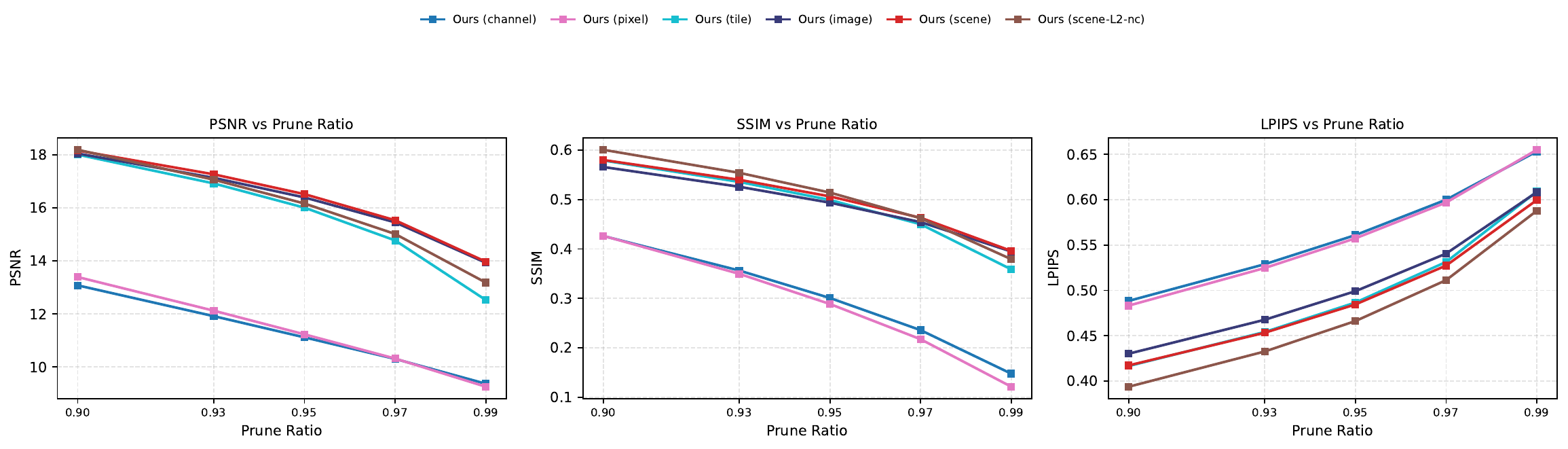}\caption{Ablation: prune-only on Mip-NeRF 360 (prune ratios 0.90--0.99) across our variants.}
    \label{fig:prune-only-fig}
\end{figure}

\section{Conclusion, Limitations, and Scope}
\label{sec:limitations}
We studied 3D Gaussian Splatting pruning from a provable coreset perspective. Our results show that the right question is not whether unrestricted 3DGS admits a universal multiplicative coreset: it does not.
Rather, meaningful guarantees become possible once the target rendering resolution is specified through a representative family of views, rays, pixels, or aggregated objectives.
Under this structured regime, we proved a sensitivity-based coreset construction for 3DGS and showed how its fixed-objective guarantee can be transferred to true re-rendering under explicit validity and log-transmittance stability assumptions. Beyond the theory, the same sensitivity principle leads to a practical pruning rule that is especially effective when recovery compute is scarce.
Across prune-only and very short finetuning regimes, our method preserves rendering quality better than competing pruning heuristics at aggressive compression ratios.
This suggests that coreset-based importance estimation is not only a useful theoretical lens for understanding when 3DGS compression is possible, but also a practical tool for deployment settings where one cannot rely on long post-pruning optimization.
We hope this work provides a foundation for future compression methods that combine the efficiency of vanilla-compatible pruning with explicit approximation guarantees.

\textbf{Limitations. }Our guarantees apply to the structured regime formalized in the paper, not to unrestricted continuous rendering, where we prove that non-trivial multiplicative coresets are impossible. The positive results hold for representative query families induced by a target rendering resolution, such as grids of camera poses, rays, pixels, or tiles, and transfer to rendering under validity and log-transmittance stability assumptions. Empirically, we focus on prune-only evaluation and very short finetuning budgets. This low-compute regime is where subset quality matters most, since little recovery compute is available to repair poor pruning decisions. Methods that rely on longer recovery or specialized compression pipelines may become more competitive under larger optimization budgets, but they are largely orthogonal to our contribution: our pruning rule can provide a strong vanilla-compatible initialization before additional refinement. See \cref{sec:limitations:ext} for the extended discussion.

\noindent\textbf{Code availability.} The open-source implementation and evaluation pipeline are available at \projectlink.

\clearpage
\bibliographystyle{unsrtnat}
\bibliography{references}

\clearpage
\appendix
\suppressfloats[t]

\setcounter{topnumber}{5}
\setcounter{bottomnumber}{5}
\setcounter{totalnumber}{10}
\renewcommand{\topfraction}{0.97}
\renewcommand{\bottomfraction}{0.97}
\renewcommand{\textfraction}{0.02}
\renewcommand{\floatpagefraction}{0.80}
\setlength{\textfloatsep}{9pt plus 2pt minus 2pt}
\setlength{\floatsep}{8pt plus 2pt minus 2pt}
\setlength{\intextsep}{8pt plus 2pt minus 2pt}
\setlength{\abovecaptionskip}{3pt}
\setlength{\belowcaptionskip}{0pt}

\section{Reproducibility and Experimental Protocol}
\label{sec:appendix-protocol}

This section collects the experimental information that is distributed across the main paper and states how the extended results should be read. The experiments are designed to isolate the quality of the retained Gaussian subset in the low-compute regime, first without post-pruning optimization and then under a common short recovery budget.

\subsection{Evaluation setting and datasets}
We evaluate the same 13 scenes described in \cref{sec:experiments}: nine scenes from mip-NeRF 360, two scenes from Tanks \& Temples, and two scenes from Deep Blending. The prune-only experiments evaluate the reduced scenes immediately after pruning. The short-recovery experiments use 10, 50, 100, and 200 post-pruning finetuning iterations. All experiments were run on one NVIDIA A100-SXM4-40GB GPU with AMD EPYC 7742 64-Core CPUs.

\subsection{Metrics and competing methods}
We report PSNR, SSIM, and LPIPS. Higher PSNR and SSIM are better, while lower LPIPS is better. The competing methods are GHAP, PUP 3D-GS, Trimming the Fat, and uniform sampling. Unless stated otherwise, ``Ours'' denotes the per-scene sensitivity variant used in the main comparison.

\subsection{Sensitivity variants}
The ablations compare per-channel, per-pixel, per-tile, per-image, and per-scene aggregation. They also include the implementation-specific $L_1/L_2$ and color/nocolor variants introduced in \cref{sec:practical-extensions}. These variants change the aggregation or score definition while retaining the same sensitivity-driven selection pipeline.

\subsection{How to read the extended results}
Dataset averages summarize the broad trend, while the per-scene plots reveal whether those averages hide scene-specific reversals. Recovery plots separate the quality retained immediately after pruning from the quality recovered through additional optimization. The wider ablations test whether the ordering of the sensitivity variants remains stable across datasets, prune ratios, and recovery budgets.

\section{Extended Experimental Results and Ablations}\label{sec:appendix-main-results}

This section contains the complete experimental record supporting the main results: full prune-only tables and dataset-level curves, short-recovery comparisons, per-scene prune-only plots, and the complete sensitivity-variant ablations. The numerical results and plotted curves are unchanged from the submitted paper; they are reorganized here by experimental question.

\subsection{Complete prune-only tables}
The following tables report the prune-only results over all 13 scenes. Because no recovery optimization is applied, these numbers directly measure the fidelity of the selected reduced scene.

\begin{table*}[t]
    \centering
    \caption{Prune-only results at prune ratios 0.80, 0.85, and 0.90 over all 13 scenes. Rows are scenes split into PSNR / SSIM / LPIPS, with each bracket followed by a per-dataset average row, and columns are grouped by prune ratio and method. Bold indicates best per scene/metric within each prune-ratio group; colored cells mark the top three methods.}
    \label{tab:prune_only_moderate}
    \tiny
    \setlength{\tabcolsep}{1.4pt}
    \resizebox{\textwidth}{!}{%
    \begin{tabular}{@{}c@{\,}ll !{\vrule width 0.8pt} c c c c c !{\vrule width 0.8pt} c c c c c !{\vrule width 0.8pt} c c c c c}
        \toprule
        & \multirow{2}{*}{Scene} & \multirow{2}{*}{Metric} & \multicolumn{5}{c}{Prune 0.80} & \multicolumn{5}{c}{Prune 0.85} & \multicolumn{5}{c}{Prune 0.90} \\
        \cmidrule(lr){4-8}\cmidrule(lr){9-13}\cmidrule(lr){14-18}
        & & & GHAP & PUP & Trim & Unif & Ours & GHAP & PUP & Trim & Unif & Ours & GHAP & PUP & Trim & Unif & Ours \\
        \midrule
        \rdelim\{{27}{*} & \multirow{3}{*}{Flowers} & PSNR $\uparrow$ & \cellcolor{orange!25}16.06 & \cellcolor{yellow!35}16.89 & 14.34 & 10.71 & \textbf{\cellcolor{green!30}17.23} & \cellcolor{orange!25}15.18 & \cellcolor{yellow!35}15.69 & 12.74 & 10.39 & \textbf{\cellcolor{green!30}16.66} & \cellcolor{orange!25}14.01 & \cellcolor{yellow!35}14.31 & 11.12 & 10.03 & \textbf{\cellcolor{green!30}15.91} \\
         &  & SSIM $\uparrow$ & 0.350 & \textbf{\cellcolor{green!30}0.454} & \cellcolor{orange!25}0.369 & 0.224 & \cellcolor{yellow!35}0.428 & \cellcolor{orange!25}0.314 & \textbf{\cellcolor{green!30}0.418} & 0.306 & 0.199 & \cellcolor{yellow!35}0.404 & \cellcolor{orange!25}0.267 & \cellcolor{yellow!35}0.367 & 0.227 & 0.174 & \textbf{\cellcolor{green!30}0.372} \\
         &  & LPIPS $\downarrow$ & 0.525 & \textbf{\cellcolor{green!30}0.456} & \cellcolor{orange!25}0.501 & 0.603 & \cellcolor{yellow!35}0.481 & 0.550 & \textbf{\cellcolor{green!30}0.483} & \cellcolor{orange!25}0.543 & 0.626 & \cellcolor{yellow!35}0.503 & \cellcolor{orange!25}0.584 & \textbf{\cellcolor{green!30}0.521} & 0.600 & 0.651 & \cellcolor{yellow!35}0.532 \\
         & \multirow{3}{*}{Treehill} & PSNR $\uparrow$ & \cellcolor{yellow!35}18.47 & \cellcolor{orange!25}18.03 & 13.00 & 13.23 & \textbf{\cellcolor{green!30}19.21} & \cellcolor{yellow!35}17.70 & \cellcolor{orange!25}17.08 & 11.97 & 12.63 & \textbf{\cellcolor{green!30}18.74} & \cellcolor{yellow!35}16.55 & \cellcolor{orange!25}15.89 & 10.96 & 12.39 & \textbf{\cellcolor{green!30}18.10} \\
         &  & SSIM $\uparrow$ & \cellcolor{orange!25}0.432 & \cellcolor{yellow!35}0.542 & 0.281 & 0.399 & \textbf{\cellcolor{green!30}0.576} & \cellcolor{orange!25}0.405 & \cellcolor{yellow!35}0.509 & 0.227 & 0.368 & \textbf{\cellcolor{green!30}0.553} & \cellcolor{orange!25}0.367 & \cellcolor{yellow!35}0.461 & 0.172 & 0.349 & \textbf{\cellcolor{green!30}0.516} \\
         &  & LPIPS $\downarrow$ & \cellcolor{orange!25}0.512 & \cellcolor{yellow!35}0.448 & 0.553 & 0.537 & \textbf{\cellcolor{green!30}0.417} & \cellcolor{orange!25}0.531 & \cellcolor{yellow!35}0.479 & 0.577 & 0.558 & \textbf{\cellcolor{green!30}0.442} & \cellcolor{orange!25}0.555 & \cellcolor{yellow!35}0.520 & 0.602 & 0.578 & \textbf{\cellcolor{green!30}0.479} \\
         & \multirow{3}{*}{Bicycle} & PSNR $\uparrow$ & \cellcolor{yellow!35}19.07 & \cellcolor{orange!25}18.88 & 15.41 & 15.11 & \textbf{\cellcolor{green!30}19.93} & \cellcolor{yellow!35}17.90 & \cellcolor{orange!25}17.82 & 14.04 & 14.47 & \textbf{\cellcolor{green!30}19.12} & \cellcolor{orange!25}16.19 & \cellcolor{yellow!35}16.61 & 12.52 & 14.07 & \textbf{\cellcolor{green!30}18.01} \\
         &  & SSIM $\uparrow$ & \cellcolor{orange!25}0.477 & \cellcolor{yellow!35}0.579 & 0.444 & 0.378 & \textbf{\cellcolor{green!30}0.588} & \cellcolor{orange!25}0.430 & \cellcolor{yellow!35}0.530 & 0.364 & 0.338 & \textbf{\cellcolor{green!30}0.549} & \cellcolor{orange!25}0.363 & \cellcolor{yellow!35}0.464 & 0.263 & 0.309 & \textbf{\cellcolor{green!30}0.494} \\
         &  & LPIPS $\downarrow$ & 0.433 & \cellcolor{yellow!35}0.360 & \cellcolor{orange!25}0.426 & 0.502 & \textbf{\cellcolor{green!30}0.357} & \cellcolor{orange!25}0.460 & \cellcolor{yellow!35}0.394 & 0.469 & 0.531 & \textbf{\cellcolor{green!30}0.385} & \cellcolor{orange!25}0.498 & \cellcolor{yellow!35}0.440 & 0.522 & 0.557 & \textbf{\cellcolor{green!30}0.427} \\
         & \multirow{3}{*}{Bonsai} & PSNR $\uparrow$ & \cellcolor{yellow!35}21.99 & 19.03 & \textbf{\cellcolor{green!30}22.90} & 12.42 & \cellcolor{orange!25}19.86 & \cellcolor{yellow!35}20.38 & 16.68 & \textbf{\cellcolor{green!30}21.24} & 11.79 & \cellcolor{orange!25}19.16 & \cellcolor{orange!25}18.09 & 14.10 & \textbf{\cellcolor{green!30}19.17} & 11.31 & \cellcolor{yellow!35}18.24 \\
         &  & SSIM $\uparrow$ & 0.726 & \textbf{\cellcolor{green!30}0.774} & \cellcolor{yellow!35}0.764 & 0.473 & \cellcolor{orange!25}0.736 & 0.676 & \cellcolor{yellow!35}0.701 & \textbf{\cellcolor{green!30}0.706} & 0.426 & \cellcolor{orange!25}0.698 & \cellcolor{orange!25}0.602 & 0.587 & \cellcolor{yellow!35}0.624 & 0.359 & \textbf{\cellcolor{green!30}0.651} \\
         &  & LPIPS $\downarrow$ & 0.388 & \cellcolor{yellow!35}0.322 & \textbf{\cellcolor{green!30}0.310} & 0.427 & \cellcolor{orange!25}0.344 & 0.422 & \cellcolor{yellow!35}0.367 & \textbf{\cellcolor{green!30}0.351} & 0.453 & \cellcolor{orange!25}0.377 & 0.466 & \cellcolor{orange!25}0.424 & \textbf{\cellcolor{green!30}0.402} & 0.487 & \cellcolor{yellow!35}0.417 \\
         & \multirow{3}{*}{Counter} & PSNR $\uparrow$ & \cellcolor{orange!25}21.16 & 20.14 & \cellcolor{yellow!35}21.30 & 16.26 & \textbf{\cellcolor{green!30}22.83} & \cellcolor{yellow!35}19.83 & 17.94 & \cellcolor{orange!25}19.34 & 14.85 & \textbf{\cellcolor{green!30}21.58} & \cellcolor{yellow!35}17.93 & 15.19 & \cellcolor{orange!25}17.03 & 13.30 & \textbf{\cellcolor{green!30}19.92} \\
         &  & SSIM $\uparrow$ & 0.691 & \cellcolor{yellow!35}0.780 & \cellcolor{orange!25}0.753 & 0.639 & \textbf{\cellcolor{green!30}0.797} & 0.650 & \cellcolor{yellow!35}0.725 & \cellcolor{orange!25}0.691 & 0.586 & \textbf{\cellcolor{green!30}0.764} & 0.588 & \cellcolor{yellow!35}0.634 & \cellcolor{orange!25}0.604 & 0.504 & \textbf{\cellcolor{green!30}0.717} \\
         &  & LPIPS $\downarrow$ & 0.381 & \cellcolor{yellow!35}0.297 & \textbf{\cellcolor{green!30}0.295} & 0.380 & \cellcolor{orange!25}0.301 & 0.413 & \cellcolor{orange!25}0.339 & \cellcolor{yellow!35}0.336 & 0.409 & \textbf{\cellcolor{green!30}0.334} & 0.455 & \cellcolor{orange!25}0.396 & \cellcolor{yellow!35}0.394 & 0.458 & \textbf{\cellcolor{green!30}0.377} \\
         & \multirow{3}{*}{Garden} & PSNR $\uparrow$ & \cellcolor{yellow!35}18.86 & \cellcolor{orange!25}17.83 & 14.35 & 16.75 & \textbf{\cellcolor{green!30}19.36} & \cellcolor{yellow!35}17.41 & \cellcolor{orange!25}16.44 & 12.77 & 15.94 & \textbf{\cellcolor{green!30}18.26} & \cellcolor{yellow!35}15.33 & 14.75 & 11.09 & \cellcolor{orange!25}15.07 & \textbf{\cellcolor{green!30}16.95} \\
         &  & SSIM $\uparrow$ & 0.491 & \cellcolor{yellow!35}0.597 & 0.468 & \cellcolor{orange!25}0.510 & \textbf{\cellcolor{green!30}0.604} & 0.435 & \cellcolor{yellow!35}0.531 & 0.383 & \cellcolor{orange!25}0.467 & \textbf{\cellcolor{green!30}0.546} & 0.360 & \cellcolor{yellow!35}0.446 & 0.284 & \cellcolor{orange!25}0.418 & \textbf{\cellcolor{green!30}0.476} \\
         &  & LPIPS $\downarrow$ & \cellcolor{orange!25}0.402 & \textbf{\cellcolor{green!30}0.304} & 0.407 & 0.409 & \cellcolor{yellow!35}0.313 & \cellcolor{orange!25}0.435 & \textbf{\cellcolor{green!30}0.352} & 0.462 & 0.443 & \cellcolor{yellow!35}0.356 & 0.483 & \textbf{\cellcolor{green!30}0.415} & 0.526 & \cellcolor{orange!25}0.481 & \cellcolor{yellow!35}0.416 \\
         & \multirow{3}{*}{Kitchen} & PSNR $\uparrow$ & \cellcolor{yellow!35}19.24 & 17.37 & \cellcolor{orange!25}18.34 & 14.27 & \textbf{\cellcolor{green!30}19.31} & \cellcolor{yellow!35}17.69 & 15.19 & \cellcolor{orange!25}16.09 & 13.36 & \textbf{\cellcolor{green!30}17.96} & \cellcolor{yellow!35}15.68 & 12.89 & \cellcolor{orange!25}13.52 & 11.92 & \textbf{\cellcolor{green!30}16.40} \\
         &  & SSIM $\uparrow$ & 0.659 & \cellcolor{yellow!35}0.733 & \cellcolor{orange!25}0.726 & 0.615 & \textbf{\cellcolor{green!30}0.766} & 0.603 & \cellcolor{yellow!35}0.657 & \cellcolor{orange!25}0.651 & 0.569 & \textbf{\cellcolor{green!30}0.717} & 0.525 & \cellcolor{yellow!35}0.550 & \cellcolor{orange!25}0.548 & 0.467 & \textbf{\cellcolor{green!30}0.651} \\
         &  & LPIPS $\downarrow$ & 0.369 & \textbf{\cellcolor{green!30}0.257} & \cellcolor{orange!25}0.267 & 0.327 & \cellcolor{yellow!35}0.258 & 0.412 & \cellcolor{yellow!35}0.310 & \cellcolor{orange!25}0.320 & 0.361 & \textbf{\cellcolor{green!30}0.299} & 0.466 & \cellcolor{yellow!35}0.385 & \cellcolor{orange!25}0.394 & 0.437 & \textbf{\cellcolor{green!30}0.358} \\
         & \multirow{3}{*}{Room} & PSNR $\uparrow$ & \cellcolor{orange!25}23.17 & \cellcolor{yellow!35}24.30 & \textbf{\cellcolor{green!30}24.34} & 18.03 & 21.51 & \cellcolor{orange!25}21.81 & \textbf{\cellcolor{green!30}22.44} & \cellcolor{yellow!35}22.20 & 16.76 & 20.78 & 19.56 & \textbf{\cellcolor{green!30}19.89} & \cellcolor{orange!25}19.63 & 14.73 & \cellcolor{yellow!35}19.78 \\
         &  & SSIM $\uparrow$ & 0.762 & \textbf{\cellcolor{green!30}0.852} & \cellcolor{yellow!35}0.822 & 0.715 & \cellcolor{orange!25}0.796 & 0.729 & \textbf{\cellcolor{green!30}0.819} & \cellcolor{orange!25}0.773 & 0.669 & \cellcolor{yellow!35}0.774 & 0.675 & \textbf{\cellcolor{green!30}0.766} & \cellcolor{orange!25}0.696 & 0.588 & \cellcolor{yellow!35}0.744 \\
         &  & LPIPS $\downarrow$ & 0.367 & \textbf{\cellcolor{green!30}0.272} & \cellcolor{yellow!35}0.284 & 0.386 & \cellcolor{orange!25}0.314 & 0.395 & \textbf{\cellcolor{green!30}0.303} & \cellcolor{yellow!35}0.324 & 0.417 & \cellcolor{orange!25}0.339 & 0.437 & \textbf{\cellcolor{green!30}0.352} & \cellcolor{orange!25}0.377 & 0.465 & \cellcolor{yellow!35}0.376 \\
         & \multirow{3}{*}{Stump} & PSNR $\uparrow$ & \cellcolor{orange!25}20.00 & \cellcolor{yellow!35}22.13 & 18.06 & 16.86 & \textbf{\cellcolor{green!30}22.34} & \cellcolor{orange!25}18.93 & \cellcolor{yellow!35}20.49 & 16.61 & 16.19 & \textbf{\cellcolor{green!30}21.50} & \cellcolor{orange!25}17.50 & \cellcolor{yellow!35}18.51 & 15.05 & 15.65 & \textbf{\cellcolor{green!30}20.12} \\
         &  & SSIM $\uparrow$ & 0.487 & \cellcolor{yellow!35}0.677 & \cellcolor{orange!25}0.492 & 0.430 & \textbf{\cellcolor{green!30}0.683} & \cellcolor{orange!25}0.442 & \cellcolor{yellow!35}0.623 & 0.409 & 0.390 & \textbf{\cellcolor{green!30}0.655} & \cellcolor{orange!25}0.377 & \cellcolor{yellow!35}0.540 & 0.303 & 0.353 & \textbf{\cellcolor{green!30}0.599} \\
         &  & LPIPS $\downarrow$ & 0.446 & \cellcolor{yellow!35}0.314 & \cellcolor{orange!25}0.409 & 0.476 & \textbf{\cellcolor{green!30}0.305} & 0.471 & \cellcolor{yellow!35}0.352 & \cellcolor{orange!25}0.451 & 0.502 & \textbf{\cellcolor{green!30}0.329} & \cellcolor{orange!25}0.505 & \cellcolor{yellow!35}0.406 & 0.505 & 0.526 & \textbf{\cellcolor{green!30}0.374} \\
        \cmidrule(lr){2-18}
         & \multirow{3}{*}{\textbf{Mip-NeRF 360}} & PSNR $\uparrow$ & \cellcolor{yellow!35}19.78 & \cellcolor{orange!25}19.40 & 18.01 & 14.85 & \textbf{\cellcolor{green!30}20.18} & \cellcolor{yellow!35}18.54 & \cellcolor{orange!25}17.75 & 16.33 & 14.04 & \textbf{\cellcolor{green!30}19.30} & \cellcolor{yellow!35}16.76 & \cellcolor{orange!25}15.79 & 14.45 & 13.16 & \textbf{\cellcolor{green!30}18.16} \\
         &  & SSIM $\uparrow$ & 0.564 & \textbf{\cellcolor{green!30}0.665} & \cellcolor{orange!25}0.569 & 0.487 & \cellcolor{yellow!35}0.664 & \cellcolor{orange!25}0.521 & \cellcolor{yellow!35}0.612 & 0.501 & 0.446 & \textbf{\cellcolor{green!30}0.629} & \cellcolor{orange!25}0.458 & \cellcolor{yellow!35}0.535 & 0.413 & 0.391 & \textbf{\cellcolor{green!30}0.580} \\
         &  & LPIPS $\downarrow$ & 0.425 & \textbf{\cellcolor{green!30}0.337} & \cellcolor{orange!25}0.384 & 0.450 & \cellcolor{yellow!35}0.343 & 0.454 & \cellcolor{yellow!35}0.375 & \cellcolor{orange!25}0.426 & 0.478 & \textbf{\cellcolor{green!30}0.374} & 0.494 & \cellcolor{yellow!35}0.429 & \cellcolor{orange!25}0.480 & 0.516 & \textbf{\cellcolor{green!30}0.417} \\
        \specialrule{0.2pt}{1pt}{1pt}
        \rdelim\{{6}{*} & \multirow{3}{*}{DrJohnson} & PSNR $\uparrow$ & \textbf{\cellcolor{green!30}24.48} & \cellcolor{yellow!35}23.88 & 21.68 & 22.17 & \cellcolor{orange!25}23.74 & \cellcolor{yellow!35}22.92 & \cellcolor{orange!25}21.84 & 19.42 & 19.88 & \textbf{\cellcolor{green!30}22.95} & \cellcolor{yellow!35}20.69 & \cellcolor{orange!25}19.14 & 16.59 & 18.00 & \textbf{\cellcolor{green!30}21.92} \\
         &  & SSIM $\uparrow$ & \cellcolor{orange!25}0.812 & \textbf{\cellcolor{green!30}0.832} & 0.770 & 0.776 & \cellcolor{yellow!35}0.813 & \cellcolor{orange!25}0.781 & \textbf{\cellcolor{green!30}0.798} & 0.711 & 0.730 & \cellcolor{yellow!35}0.796 & \cellcolor{orange!25}0.730 & \cellcolor{yellow!35}0.743 & 0.608 & 0.688 & \textbf{\cellcolor{green!30}0.773} \\
         &  & LPIPS $\downarrow$ & 0.371 & \textbf{\cellcolor{green!30}0.320} & \cellcolor{yellow!35}0.333 & 0.399 & \cellcolor{orange!25}0.346 & 0.401 & \textbf{\cellcolor{green!30}0.354} & \cellcolor{orange!25}0.368 & 0.429 & \cellcolor{yellow!35}0.367 & 0.441 & \cellcolor{yellow!35}0.400 & \cellcolor{orange!25}0.418 & 0.461 & \textbf{\cellcolor{green!30}0.396} \\
         & \multirow{3}{*}{Playroom} & PSNR $\uparrow$ & \cellcolor{yellow!35}24.85 & \cellcolor{orange!25}24.70 & 22.36 & 16.85 & \textbf{\cellcolor{green!30}25.08} & \cellcolor{yellow!35}23.18 & \cellcolor{orange!25}22.67 & 19.84 & 15.25 & \textbf{\cellcolor{green!30}23.95} & \cellcolor{yellow!35}20.36 & \cellcolor{orange!25}19.90 & 16.44 & 13.39 & \textbf{\cellcolor{green!30}22.39} \\
         &  & SSIM $\uparrow$ & \cellcolor{orange!25}0.827 & \textbf{\cellcolor{green!30}0.850} & 0.802 & 0.727 & \cellcolor{yellow!35}0.849 & \cellcolor{orange!25}0.801 & \cellcolor{yellow!35}0.825 & 0.755 & 0.686 & \textbf{\cellcolor{green!30}0.831} & \cellcolor{orange!25}0.757 & \cellcolor{yellow!35}0.786 & 0.675 & 0.620 & \textbf{\cellcolor{green!30}0.807} \\
         &  & LPIPS $\downarrow$ & 0.359 & \textbf{\cellcolor{green!30}0.314} & \cellcolor{orange!25}0.337 & 0.454 & \cellcolor{yellow!35}0.325 & 0.388 & \textbf{\cellcolor{green!30}0.342} & \cellcolor{orange!25}0.377 & 0.491 & \cellcolor{yellow!35}0.350 & \cellcolor{orange!25}0.432 & \cellcolor{yellow!35}0.386 & 0.440 & 0.541 & \textbf{\cellcolor{green!30}0.385} \\
        \cmidrule(lr){2-18}
         & \multirow{3}{*}{\textbf{Deep Blending}} & PSNR $\uparrow$ & \textbf{\cellcolor{green!30}24.67} & \cellcolor{orange!25}24.29 & 22.02 & 19.51 & \cellcolor{yellow!35}24.41 & \cellcolor{yellow!35}23.05 & \cellcolor{orange!25}22.25 & 19.63 & 17.56 & \textbf{\cellcolor{green!30}23.45} & \cellcolor{yellow!35}20.53 & \cellcolor{orange!25}19.52 & 16.52 & 15.69 & \textbf{\cellcolor{green!30}22.16} \\
         &  & SSIM $\uparrow$ & \cellcolor{orange!25}0.820 & \textbf{\cellcolor{green!30}0.841} & 0.786 & 0.752 & \cellcolor{yellow!35}0.831 & \cellcolor{orange!25}0.791 & \cellcolor{yellow!35}0.812 & 0.733 & 0.708 & \textbf{\cellcolor{green!30}0.814} & \cellcolor{orange!25}0.744 & \cellcolor{yellow!35}0.765 & 0.642 & 0.654 & \textbf{\cellcolor{green!30}0.790} \\
         &  & LPIPS $\downarrow$ & 0.365 & \textbf{\cellcolor{green!30}0.317} & \cellcolor{yellow!35}0.335 & 0.426 & \cellcolor{orange!25}0.336 & 0.395 & \textbf{\cellcolor{green!30}0.348} & \cellcolor{orange!25}0.372 & 0.460 & \cellcolor{yellow!35}0.359 & 0.437 & \cellcolor{yellow!35}0.393 & \cellcolor{orange!25}0.429 & 0.501 & \textbf{\cellcolor{green!30}0.390} \\
        \specialrule{0.2pt}{1pt}{1pt}
        \rdelim\{{6}{*} & \multirow{3}{*}{Train} & PSNR $\uparrow$ & \cellcolor{yellow!35}16.61 & 15.15 & \cellcolor{orange!25}15.47 & 10.68 & \textbf{\cellcolor{green!30}16.88} & \cellcolor{yellow!35}15.59 & 13.79 & \cellcolor{orange!25}14.12 & 10.01 & \textbf{\cellcolor{green!30}16.21} & \cellcolor{yellow!35}14.20 & 12.05 & \cellcolor{orange!25}12.29 & 8.72 & \textbf{\cellcolor{green!30}15.32} \\
         &  & SSIM $\uparrow$ & 0.573 & \textbf{\cellcolor{green!30}0.629} & \cellcolor{orange!25}0.606 & 0.462 & \cellcolor{yellow!35}0.612 & 0.525 & \cellcolor{yellow!35}0.572 & \cellcolor{orange!25}0.544 & 0.420 & \textbf{\cellcolor{green!30}0.579} & \cellcolor{orange!25}0.463 & \cellcolor{yellow!35}0.494 & 0.458 & 0.338 & \textbf{\cellcolor{green!30}0.537} \\
         &  & LPIPS $\downarrow$ & 0.415 & \cellcolor{yellow!35}0.373 & \textbf{\cellcolor{green!30}0.362} & 0.469 & \cellcolor{orange!25}0.382 & 0.455 & \cellcolor{orange!25}0.419 & \textbf{\cellcolor{green!30}0.408} & 0.497 & \cellcolor{yellow!35}0.417 & 0.505 & \cellcolor{orange!25}0.481 & \cellcolor{yellow!35}0.473 & 0.534 & \textbf{\cellcolor{green!30}0.461} \\
         & \multirow{3}{*}{Truck} & PSNR $\uparrow$ & \cellcolor{yellow!35}18.61 & \cellcolor{orange!25}16.96 & 16.76 & 12.79 & \textbf{\cellcolor{green!30}19.38} & \cellcolor{yellow!35}17.25 & \cellcolor{orange!25}15.98 & 14.94 & 12.32 & \textbf{\cellcolor{green!30}18.52} & \cellcolor{yellow!35}15.47 & \cellcolor{orange!25}14.71 & 12.64 & 11.73 & \textbf{\cellcolor{green!30}17.37} \\
         &  & SSIM $\uparrow$ & 0.638 & \textbf{\cellcolor{green!30}0.746} & \cellcolor{orange!25}0.657 & 0.591 & \cellcolor{yellow!35}0.739 & \cellcolor{orange!25}0.584 & \textbf{\cellcolor{green!30}0.714} & 0.583 & 0.567 & \cellcolor{yellow!35}0.706 & 0.514 & \cellcolor{yellow!35}0.661 & 0.484 & \cellcolor{orange!25}0.536 & \textbf{\cellcolor{green!30}0.662} \\
         &  & LPIPS $\downarrow$ & 0.367 & \textbf{\cellcolor{green!30}0.247} & \cellcolor{orange!25}0.324 & 0.376 & \cellcolor{yellow!35}0.272 & 0.408 & \textbf{\cellcolor{green!30}0.278} & \cellcolor{orange!25}0.376 & 0.399 & \cellcolor{yellow!35}0.307 & 0.461 & \textbf{\cellcolor{green!30}0.331} & 0.443 & \cellcolor{orange!25}0.431 & \cellcolor{yellow!35}0.354 \\
        \cmidrule(lr){2-18}
         & \multirow{3}{*}{\textbf{Tanks \& Temples}} & PSNR $\uparrow$ & \cellcolor{yellow!35}17.61 & 16.05 & \cellcolor{orange!25}16.11 & 11.74 & \textbf{\cellcolor{green!30}18.13} & \cellcolor{yellow!35}16.42 & \cellcolor{orange!25}14.89 & 14.53 & 11.17 & \textbf{\cellcolor{green!30}17.37} & \cellcolor{yellow!35}14.83 & \cellcolor{orange!25}13.38 & 12.46 & 10.22 & \textbf{\cellcolor{green!30}16.35} \\
         &  & SSIM $\uparrow$ & 0.606 & \textbf{\cellcolor{green!30}0.687} & \cellcolor{orange!25}0.632 & 0.526 & \cellcolor{yellow!35}0.675 & 0.555 & \textbf{\cellcolor{green!30}0.643} & \cellcolor{orange!25}0.564 & 0.494 & \cellcolor{yellow!35}0.643 & \cellcolor{orange!25}0.488 & \cellcolor{yellow!35}0.577 & 0.471 & 0.437 & \textbf{\cellcolor{green!30}0.599} \\
         &  & LPIPS $\downarrow$ & 0.391 & \textbf{\cellcolor{green!30}0.310} & \cellcolor{orange!25}0.343 & 0.423 & \cellcolor{yellow!35}0.327 & 0.431 & \textbf{\cellcolor{green!30}0.348} & \cellcolor{orange!25}0.392 & 0.448 & \cellcolor{yellow!35}0.362 & 0.483 & \textbf{\cellcolor{green!30}0.406} & \cellcolor{orange!25}0.458 & 0.482 & \cellcolor{yellow!35}0.407 \\
        \bottomrule
    \end{tabular}}
\end{table*}

\begin{table*}[t]
    \centering
    \caption{Prune-only results at prune ratios 0.95, 0.97, and 0.99 over all 13 scenes. Rows are scenes split into PSNR / SSIM / LPIPS, with each bracket followed by a per-dataset average row, and columns are grouped by prune ratio and method. Bold indicates best per scene/metric within each prune-ratio group; colored cells mark the top three methods.}
    \label{tab:prune_only_aggressive}
    \tiny
    \setlength{\tabcolsep}{1.4pt}
    \resizebox{\textwidth}{!}{%
    \begin{tabular}{@{}c@{\,}ll !{\vrule width 0.8pt} c c c c c !{\vrule width 0.8pt} c c c c c !{\vrule width 0.8pt} c c c c c}
        \toprule
        & \multirow{2}{*}{Scene} & \multirow{2}{*}{Metric} & \multicolumn{5}{c}{Prune 0.95} & \multicolumn{5}{c}{Prune 0.97} & \multicolumn{5}{c}{Prune 0.99} \\
        \cmidrule(lr){4-8}\cmidrule(lr){9-13}\cmidrule(lr){14-18}
        & & & GHAP & PUP & Trim & Unif & Ours & GHAP & PUP & Trim & Unif & Ours & GHAP & PUP & Trim & Unif & Ours \\
        \midrule
        \rdelim\{{27}{*} & \multirow{3}{*}{Flowers} & PSNR $\uparrow$ & \cellcolor{orange!25}12.08 & \cellcolor{yellow!35}12.45 & 9.54 & 9.65 & \textbf{\cellcolor{green!30}14.74} & \cellcolor{orange!25}10.94 & \cellcolor{yellow!35}11.49 & 8.85 & 9.44 & \textbf{\cellcolor{green!30}13.92} & 8.91 & \cellcolor{yellow!35}10.31 & 8.03 & \cellcolor{orange!25}9.15 & \textbf{\cellcolor{green!30}12.60} \\
         &  & SSIM $\uparrow$ & \cellcolor{orange!25}0.198 & \cellcolor{yellow!35}0.278 & 0.140 & 0.146 & \textbf{\cellcolor{green!30}0.322} & \cellcolor{orange!25}0.159 & \cellcolor{yellow!35}0.220 & 0.101 & 0.132 & \textbf{\cellcolor{green!30}0.289} & 0.092 & \cellcolor{yellow!35}0.141 & 0.052 & \cellcolor{orange!25}0.110 & \textbf{\cellcolor{green!30}0.241} \\
         &  & LPIPS $\downarrow$ & \cellcolor{orange!25}0.642 & \cellcolor{yellow!35}0.584 & 0.671 & 0.682 & \textbf{\cellcolor{green!30}0.581} & \cellcolor{orange!25}0.680 & \cellcolor{yellow!35}0.625 & 0.711 & 0.699 & \textbf{\cellcolor{green!30}0.617} & 0.761 & \cellcolor{yellow!35}0.690 & 0.771 & \cellcolor{orange!25}0.727 & \textbf{\cellcolor{green!30}0.687} \\
         & \multirow{3}{*}{Treehill} & PSNR $\uparrow$ & \cellcolor{yellow!35}14.50 & \cellcolor{orange!25}14.18 & 9.98 & 11.99 & \textbf{\cellcolor{green!30}17.13} & \cellcolor{orange!25}13.01 & \cellcolor{yellow!35}13.27 & 9.56 & 11.72 & \textbf{\cellcolor{green!30}16.51} & 10.24 & \cellcolor{yellow!35}11.61 & 9.05 & \cellcolor{orange!25}10.62 & \textbf{\cellcolor{green!30}15.51} \\
         &  & SSIM $\uparrow$ & 0.308 & \cellcolor{yellow!35}0.384 & 0.118 & \cellcolor{orange!25}0.324 & \textbf{\cellcolor{green!30}0.457} & 0.265 & \cellcolor{yellow!35}0.336 & 0.089 & \cellcolor{orange!25}0.306 & \textbf{\cellcolor{green!30}0.422} & 0.159 & \cellcolor{orange!25}0.242 & 0.059 & \cellcolor{yellow!35}0.245 & \textbf{\cellcolor{green!30}0.380} \\
         &  & LPIPS $\downarrow$ & \cellcolor{orange!25}0.594 & \cellcolor{yellow!35}0.577 & 0.634 & 0.604 & \textbf{\cellcolor{green!30}0.537} & 0.622 & \cellcolor{yellow!35}0.608 & 0.655 & \cellcolor{orange!25}0.619 & \textbf{\cellcolor{green!30}0.571} & 0.674 & \cellcolor{yellow!35}0.654 & 0.682 & \cellcolor{orange!25}0.655 & \textbf{\cellcolor{green!30}0.626} \\
         & \multirow{3}{*}{Bicycle} & PSNR $\uparrow$ & 13.66 & \cellcolor{yellow!35}14.91 & 10.97 & \cellcolor{orange!25}13.66 & \textbf{\cellcolor{green!30}16.45} & 12.13 & \cellcolor{yellow!35}13.80 & 10.34 & \cellcolor{orange!25}13.46 & \textbf{\cellcolor{green!30}15.60} & 10.38 & \cellcolor{orange!25}11.78 & 9.70 & \cellcolor{yellow!35}13.07 & \textbf{\cellcolor{green!30}14.40} \\
         &  & SSIM $\uparrow$ & 0.254 & \cellcolor{yellow!35}0.365 & 0.145 & \cellcolor{orange!25}0.278 & \textbf{\cellcolor{green!30}0.416} & 0.178 & \cellcolor{yellow!35}0.304 & 0.097 & \cellcolor{orange!25}0.263 & \textbf{\cellcolor{green!30}0.376} & 0.081 & \cellcolor{orange!25}0.192 & 0.042 & \cellcolor{yellow!35}0.234 & \textbf{\cellcolor{green!30}0.318} \\
         &  & LPIPS $\downarrow$ & \cellcolor{orange!25}0.560 & \cellcolor{yellow!35}0.510 & 0.594 & 0.587 & \textbf{\cellcolor{green!30}0.492} & 0.604 & \cellcolor{yellow!35}0.554 & 0.635 & \cellcolor{orange!25}0.602 & \textbf{\cellcolor{green!30}0.534} & 0.679 & \cellcolor{yellow!35}0.627 & 0.693 & \cellcolor{orange!25}0.633 & \textbf{\cellcolor{green!30}0.611} \\
         & \multirow{3}{*}{Bonsai} & PSNR $\uparrow$ & \cellcolor{orange!25}14.93 & 11.75 & \cellcolor{yellow!35}16.03 & 10.60 & \textbf{\cellcolor{green!30}16.94} & \cellcolor{orange!25}12.94 & 10.73 & \cellcolor{yellow!35}14.08 & 10.11 & \textbf{\cellcolor{green!30}15.98} & \cellcolor{orange!25}10.71 & 9.75 & \cellcolor{yellow!35}11.42 & 9.60 & \textbf{\cellcolor{green!30}14.36} \\
         &  & SSIM $\uparrow$ & \cellcolor{orange!25}0.474 & 0.415 & \cellcolor{yellow!35}0.488 & 0.245 & \textbf{\cellcolor{green!30}0.585} & \cellcolor{orange!25}0.380 & 0.310 & \cellcolor{yellow!35}0.397 & 0.186 & \textbf{\cellcolor{green!30}0.545} & \cellcolor{orange!25}0.219 & 0.171 & \cellcolor{yellow!35}0.231 & 0.093 & \textbf{\cellcolor{green!30}0.474} \\
         &  & LPIPS $\downarrow$ & 0.527 & \cellcolor{orange!25}0.497 & \cellcolor{yellow!35}0.475 & 0.534 & \textbf{\cellcolor{green!30}0.468} & 0.561 & \cellcolor{orange!25}0.533 & \cellcolor{yellow!35}0.514 & 0.552 & \textbf{\cellcolor{green!30}0.499} & 0.605 & \cellcolor{orange!25}0.568 & \cellcolor{yellow!35}0.564 & 0.573 & \textbf{\cellcolor{green!30}0.546} \\
         & \multirow{3}{*}{Counter} & PSNR $\uparrow$ & \cellcolor{yellow!35}15.24 & 12.51 & \cellcolor{orange!25}13.95 & 11.62 & \textbf{\cellcolor{green!30}17.49} & \cellcolor{yellow!35}13.52 & 11.47 & \cellcolor{orange!25}12.42 & 10.94 & \textbf{\cellcolor{green!30}16.07} & \cellcolor{yellow!35}11.00 & 10.09 & \cellcolor{orange!25}10.23 & 9.64 & \textbf{\cellcolor{green!30}13.78} \\
         &  & SSIM $\uparrow$ & \cellcolor{orange!25}0.498 & \cellcolor{yellow!35}0.499 & 0.471 & 0.363 & \textbf{\cellcolor{green!30}0.647} & \cellcolor{yellow!35}0.438 & \cellcolor{orange!25}0.423 & 0.390 & 0.281 & \textbf{\cellcolor{green!30}0.606} & \cellcolor{yellow!35}0.303 & \cellcolor{orange!25}0.281 & 0.238 & 0.148 & \textbf{\cellcolor{green!30}0.536} \\
         &  & LPIPS $\downarrow$ & 0.518 & \cellcolor{yellow!35}0.477 & \cellcolor{orange!25}0.477 & 0.531 & \textbf{\cellcolor{green!30}0.443} & 0.556 & \cellcolor{yellow!35}0.521 & \cellcolor{orange!25}0.523 & 0.563 & \textbf{\cellcolor{green!30}0.485} & 0.616 & \cellcolor{yellow!35}0.589 & \cellcolor{orange!25}0.594 & 0.611 & \textbf{\cellcolor{green!30}0.553} \\
         & \multirow{3}{*}{Garden} & PSNR $\uparrow$ & 12.45 & \cellcolor{orange!25}12.48 & 9.21 & \cellcolor{yellow!35}13.92 & \textbf{\cellcolor{green!30}15.25} & 10.77 & \cellcolor{orange!25}11.26 & 8.42 & \cellcolor{yellow!35}13.28 & \textbf{\cellcolor{green!30}14.29} & 8.48 & \cellcolor{orange!25}9.36 & 7.53 & \cellcolor{yellow!35}10.13 & \textbf{\cellcolor{green!30}12.58} \\
         &  & SSIM $\uparrow$ & 0.253 & \cellcolor{orange!25}0.330 & 0.165 & \cellcolor{yellow!35}0.348 & \textbf{\cellcolor{green!30}0.387} & 0.190 & \cellcolor{orange!25}0.266 & 0.113 & \cellcolor{yellow!35}0.306 & \textbf{\cellcolor{green!30}0.342} & 0.091 & \cellcolor{orange!25}0.163 & 0.053 & \cellcolor{yellow!35}0.182 & \textbf{\cellcolor{green!30}0.276} \\
         &  & LPIPS $\downarrow$ & 0.556 & \cellcolor{yellow!35}0.509 & 0.604 & \cellcolor{orange!25}0.537 & \textbf{\cellcolor{green!30}0.503} & 0.607 & \cellcolor{yellow!35}0.566 & 0.649 & \cellcolor{orange!25}0.573 & \textbf{\cellcolor{green!30}0.555} & 0.700 & \cellcolor{yellow!35}0.663 & 0.724 & \cellcolor{orange!25}0.668 & \textbf{\cellcolor{green!30}0.647} \\
         & \multirow{3}{*}{Kitchen} & PSNR $\uparrow$ & \cellcolor{yellow!35}12.86 & 10.12 & \cellcolor{orange!25}10.60 & 9.37 & \textbf{\cellcolor{green!30}14.44} & \cellcolor{yellow!35}11.06 & 8.81 & \cellcolor{orange!25}9.33 & 7.91 & \textbf{\cellcolor{green!30}13.44} & \cellcolor{yellow!35}8.31 & 7.16 & \cellcolor{orange!25}7.54 & 6.48 & \textbf{\cellcolor{green!30}12.02} \\
         &  & SSIM $\uparrow$ & \cellcolor{yellow!35}0.404 & 0.392 & \cellcolor{orange!25}0.402 & 0.292 & \textbf{\cellcolor{green!30}0.556} & \cellcolor{yellow!35}0.326 & 0.304 & \cellcolor{orange!25}0.322 & 0.198 & \textbf{\cellcolor{green!30}0.502} & \cellcolor{yellow!35}0.205 & 0.168 & \cellcolor{orange!25}0.193 & 0.080 & \textbf{\cellcolor{green!30}0.421} \\
         &  & LPIPS $\downarrow$ & 0.544 & \cellcolor{yellow!35}0.506 & \cellcolor{orange!25}0.512 & 0.556 & \textbf{\cellcolor{green!30}0.448} & 0.589 & \cellcolor{yellow!35}0.576 & \cellcolor{orange!25}0.581 & 0.618 & \textbf{\cellcolor{green!30}0.506} & \cellcolor{yellow!35}0.662 & \cellcolor{orange!25}0.676 & 0.680 & 0.684 & \textbf{\cellcolor{green!30}0.608} \\
         & \multirow{3}{*}{Room} & PSNR $\uparrow$ & \cellcolor{orange!25}16.33 & \cellcolor{yellow!35}16.89 & 15.97 & 12.36 & \textbf{\cellcolor{green!30}18.18} & \cellcolor{orange!25}14.36 & \cellcolor{yellow!35}15.27 & 13.79 & 11.49 & \textbf{\cellcolor{green!30}17.12} & \cellcolor{orange!25}11.29 & \cellcolor{yellow!35}12.71 & 10.74 & 9.76 & \textbf{\cellcolor{green!30}15.29} \\
         &  & SSIM $\uparrow$ & \cellcolor{orange!25}0.585 & \cellcolor{yellow!35}0.672 & 0.568 & 0.459 & \textbf{\cellcolor{green!30}0.695} & \cellcolor{orange!25}0.514 & \cellcolor{yellow!35}0.607 & 0.481 & 0.391 & \textbf{\cellcolor{green!30}0.665} & \cellcolor{orange!25}0.352 & \cellcolor{yellow!35}0.469 & 0.290 & 0.244 & \textbf{\cellcolor{green!30}0.598} \\
         &  & LPIPS $\downarrow$ & 0.501 & \textbf{\cellcolor{green!30}0.428} & \cellcolor{orange!25}0.454 & 0.536 & \cellcolor{yellow!35}0.434 & 0.543 & \cellcolor{yellow!35}0.475 & \cellcolor{orange!25}0.501 & 0.567 & \textbf{\cellcolor{green!30}0.471} & 0.613 & \cellcolor{yellow!35}0.551 & \cellcolor{orange!25}0.583 & 0.618 & \textbf{\cellcolor{green!30}0.523} \\
         & \multirow{3}{*}{Stump} & PSNR $\uparrow$ & \cellcolor{orange!25}15.31 & \cellcolor{yellow!35}16.14 & 13.45 & 15.07 & \textbf{\cellcolor{green!30}18.02} & 14.06 & \cellcolor{yellow!35}14.87 & 12.77 & \cellcolor{orange!25}14.69 & \textbf{\cellcolor{green!30}16.85} & 12.38 & \cellcolor{orange!25}13.27 & 11.99 & \cellcolor{yellow!35}14.00 & \textbf{\cellcolor{green!30}15.19} \\
         &  & SSIM $\uparrow$ & 0.273 & \cellcolor{yellow!35}0.409 & 0.180 & \cellcolor{orange!25}0.313 & \textbf{\cellcolor{green!30}0.491} & 0.205 & \cellcolor{yellow!35}0.330 & 0.125 & \cellcolor{orange!25}0.286 & \textbf{\cellcolor{green!30}0.421} & 0.090 & \cellcolor{orange!25}0.215 & 0.064 & \cellcolor{yellow!35}0.235 & \textbf{\cellcolor{green!30}0.321} \\
         &  & LPIPS $\downarrow$ & 0.558 & \cellcolor{yellow!35}0.482 & 0.575 & \cellcolor{orange!25}0.553 & \textbf{\cellcolor{green!30}0.455} & 0.592 & \cellcolor{yellow!35}0.527 & 0.611 & \cellcolor{orange!25}0.572 & \textbf{\cellcolor{green!30}0.508} & 0.664 & \cellcolor{yellow!35}0.598 & 0.660 & \cellcolor{orange!25}0.610 & \textbf{\cellcolor{green!30}0.594} \\
        \cmidrule(lr){2-18}
         & \multirow{3}{*}{\textbf{Mip-NeRF 360}} & PSNR $\uparrow$ & \cellcolor{yellow!35}14.15 & \cellcolor{orange!25}13.49 & 12.19 & 12.02 & \textbf{\cellcolor{green!30}16.52} & \cellcolor{yellow!35}12.53 & \cellcolor{orange!25}12.33 & 11.06 & 11.45 & \textbf{\cellcolor{green!30}15.53} & 10.19 & \cellcolor{yellow!35}10.67 & 9.58 & \cellcolor{orange!25}10.27 & \textbf{\cellcolor{green!30}13.97} \\
         &  & SSIM $\uparrow$ & \cellcolor{orange!25}0.361 & \cellcolor{yellow!35}0.416 & 0.297 & 0.308 & \textbf{\cellcolor{green!30}0.506} & \cellcolor{orange!25}0.295 & \cellcolor{yellow!35}0.344 & 0.235 & 0.261 & \textbf{\cellcolor{green!30}0.463} & \cellcolor{orange!25}0.177 & \cellcolor{yellow!35}0.227 & 0.136 & 0.174 & \textbf{\cellcolor{green!30}0.396} \\
         &  & LPIPS $\downarrow$ & 0.556 & \cellcolor{yellow!35}0.508 & \cellcolor{orange!25}0.555 & 0.569 & \textbf{\cellcolor{green!30}0.484} & \cellcolor{orange!25}0.595 & \cellcolor{yellow!35}0.554 & 0.598 & 0.596 & \textbf{\cellcolor{green!30}0.527} & 0.664 & \cellcolor{yellow!35}0.624 & 0.661 & \cellcolor{orange!25}0.642 & \textbf{\cellcolor{green!30}0.600} \\
        \specialrule{0.2pt}{1pt}{1pt}
        \rdelim\{{6}{*} & \multirow{3}{*}{DrJohnson} & PSNR $\uparrow$ & \cellcolor{yellow!35}16.95 & \cellcolor{orange!25}15.64 & 13.01 & 15.00 & \textbf{\cellcolor{green!30}20.17} & \cellcolor{yellow!35}14.77 & \cellcolor{orange!25}13.67 & 11.32 & 12.72 & \textbf{\cellcolor{green!30}18.83} & \cellcolor{yellow!35}11.30 & \cellcolor{orange!25}10.94 & 9.39 & 9.47 & \textbf{\cellcolor{green!30}16.20} \\
         &  & SSIM $\uparrow$ & \cellcolor{orange!25}0.630 & \cellcolor{yellow!35}0.645 & 0.413 & 0.597 & \textbf{\cellcolor{green!30}0.733} & \cellcolor{orange!25}0.557 & \cellcolor{yellow!35}0.561 & 0.288 & 0.492 & \textbf{\cellcolor{green!30}0.703} & \cellcolor{orange!25}0.360 & \cellcolor{yellow!35}0.372 & 0.111 & 0.194 & \textbf{\cellcolor{green!30}0.630} \\
         &  & LPIPS $\downarrow$ & 0.502 & \cellcolor{yellow!35}0.467 & \cellcolor{orange!25}0.493 & 0.510 & \textbf{\cellcolor{green!30}0.439} & 0.536 & \cellcolor{yellow!35}0.510 & \cellcolor{orange!25}0.532 & 0.545 & \textbf{\cellcolor{green!30}0.468} & 0.592 & \cellcolor{yellow!35}0.570 & \cellcolor{orange!25}0.580 & 0.602 & \textbf{\cellcolor{green!30}0.516} \\
         & \multirow{3}{*}{Playroom} & PSNR $\uparrow$ & \cellcolor{yellow!35}15.98 & \cellcolor{orange!25}15.77 & 11.66 & 9.93 & \textbf{\cellcolor{green!30}19.94} & \cellcolor{orange!25}13.19 & \cellcolor{yellow!35}13.34 & 8.96 & 8.33 & \textbf{\cellcolor{green!30}18.39} & \cellcolor{orange!25}8.98 & \cellcolor{yellow!35}9.33 & 5.92 & 6.22 & \textbf{\cellcolor{green!30}15.04} \\
         &  & SSIM $\uparrow$ & \cellcolor{orange!25}0.677 & \cellcolor{yellow!35}0.713 & 0.529 & 0.451 & \textbf{\cellcolor{green!30}0.771} & \cellcolor{orange!25}0.609 & \cellcolor{yellow!35}0.656 & 0.405 & 0.359 & \textbf{\cellcolor{green!30}0.748} & \cellcolor{orange!25}0.442 & \cellcolor{yellow!35}0.485 & 0.152 & 0.184 & \textbf{\cellcolor{green!30}0.690} \\
         &  & LPIPS $\downarrow$ & \cellcolor{orange!25}0.510 & \cellcolor{yellow!35}0.468 & 0.557 & 0.644 & \textbf{\cellcolor{green!30}0.439} & \cellcolor{orange!25}0.573 & \cellcolor{yellow!35}0.531 & 0.650 & 0.700 & \textbf{\cellcolor{green!30}0.474} & \cellcolor{orange!25}0.715 & \cellcolor{yellow!35}0.672 & 0.787 & 0.791 & \textbf{\cellcolor{green!30}0.537} \\
        \cmidrule(lr){2-18}
         & \multirow{3}{*}{\textbf{Deep Blending}} & PSNR $\uparrow$ & \cellcolor{yellow!35}16.46 & \cellcolor{orange!25}15.70 & 12.34 & 12.47 & \textbf{\cellcolor{green!30}20.06} & \cellcolor{yellow!35}13.98 & \cellcolor{orange!25}13.51 & 10.14 & 10.53 & \textbf{\cellcolor{green!30}18.61} & \cellcolor{yellow!35}10.14 & \cellcolor{orange!25}10.13 & 7.66 & 7.84 & \textbf{\cellcolor{green!30}15.62} \\
         &  & SSIM $\uparrow$ & \cellcolor{orange!25}0.653 & \cellcolor{yellow!35}0.679 & 0.471 & 0.524 & \textbf{\cellcolor{green!30}0.752} & \cellcolor{orange!25}0.583 & \cellcolor{yellow!35}0.608 & 0.347 & 0.426 & \textbf{\cellcolor{green!30}0.725} & \cellcolor{orange!25}0.401 & \cellcolor{yellow!35}0.428 & 0.132 & 0.189 & \textbf{\cellcolor{green!30}0.660} \\
         &  & LPIPS $\downarrow$ & \cellcolor{orange!25}0.506 & \cellcolor{yellow!35}0.468 & 0.525 & 0.577 & \textbf{\cellcolor{green!30}0.439} & \cellcolor{orange!25}0.554 & \cellcolor{yellow!35}0.520 & 0.591 & 0.623 & \textbf{\cellcolor{green!30}0.471} & \cellcolor{orange!25}0.653 & \cellcolor{yellow!35}0.621 & 0.683 & 0.696 & \textbf{\cellcolor{green!30}0.526} \\
        \specialrule{0.2pt}{1pt}{1pt}
        \rdelim\{{6}{*} & \multirow{3}{*}{Train} & PSNR $\uparrow$ & \cellcolor{yellow!35}11.81 & \cellcolor{orange!25}10.18 & 9.72 & 7.95 & \textbf{\cellcolor{green!30}14.01} & \cellcolor{yellow!35}10.32 & \cellcolor{orange!25}8.99 & 8.40 & 7.38 & \textbf{\cellcolor{green!30}13.11} & \cellcolor{yellow!35}8.00 & \cellcolor{orange!25}7.43 & 6.84 & 6.49 & \textbf{\cellcolor{green!30}11.41} \\
         &  & SSIM $\uparrow$ & \cellcolor{orange!25}0.368 & \cellcolor{yellow!35}0.390 & 0.322 & 0.255 & \textbf{\cellcolor{green!30}0.481} & \cellcolor{orange!25}0.310 & \cellcolor{yellow!35}0.321 & 0.240 & 0.201 & \textbf{\cellcolor{green!30}0.449} & \cellcolor{orange!25}0.212 & \cellcolor{yellow!35}0.221 & 0.135 & 0.120 & \textbf{\cellcolor{green!30}0.400} \\
         &  & LPIPS $\downarrow$ & 0.579 & \cellcolor{yellow!35}0.561 & \cellcolor{orange!25}0.561 & 0.594 & \textbf{\cellcolor{green!30}0.523} & 0.621 & \cellcolor{yellow!35}0.604 & \cellcolor{orange!25}0.609 & 0.624 & \textbf{\cellcolor{green!30}0.561} & 0.683 & \cellcolor{orange!25}0.665 & 0.670 & \cellcolor{yellow!35}0.665 & \textbf{\cellcolor{green!30}0.620} \\
         & \multirow{3}{*}{Truck} & PSNR $\uparrow$ & \cellcolor{orange!25}12.71 & \cellcolor{yellow!35}12.98 & 9.35 & 10.93 & \textbf{\cellcolor{green!30}15.62} & \cellcolor{orange!25}10.94 & \cellcolor{yellow!35}11.96 & 7.74 & 9.83 & \textbf{\cellcolor{green!30}14.54} & \cellcolor{orange!25}7.88 & \cellcolor{yellow!35}10.46 & 5.62 & 7.34 & \textbf{\cellcolor{green!30}12.55} \\
         &  & SSIM $\uparrow$ & 0.414 & \cellcolor{yellow!35}0.566 & 0.323 & \cellcolor{orange!25}0.490 & \textbf{\cellcolor{green!30}0.590} & 0.353 & \cellcolor{yellow!35}0.506 & 0.228 & \cellcolor{orange!25}0.393 & \textbf{\cellcolor{green!30}0.544} & \cellcolor{orange!25}0.237 & \cellcolor{yellow!35}0.420 & 0.074 & 0.207 & \textbf{\cellcolor{green!30}0.470} \\
         &  & LPIPS $\downarrow$ & 0.534 & \textbf{\cellcolor{green!30}0.432} & 0.538 & \cellcolor{orange!25}0.483 & \cellcolor{yellow!35}0.436 & 0.582 & \cellcolor{yellow!35}0.495 & 0.588 & \cellcolor{orange!25}0.555 & \textbf{\cellcolor{green!30}0.490} & 0.667 & \textbf{\cellcolor{green!30}0.588} & 0.666 & \cellcolor{orange!25}0.651 & \cellcolor{yellow!35}0.592 \\
        \cmidrule(lr){2-18}
         & \multirow{3}{*}{\textbf{Tanks \& Temples}} & PSNR $\uparrow$ & \cellcolor{yellow!35}12.26 & \cellcolor{orange!25}11.58 & 9.53 & 9.44 & \textbf{\cellcolor{green!30}14.81} & \cellcolor{yellow!35}10.63 & \cellcolor{orange!25}10.48 & 8.07 & 8.60 & \textbf{\cellcolor{green!30}13.83} & \cellcolor{orange!25}7.94 & \cellcolor{yellow!35}8.94 & 6.23 & 6.92 & \textbf{\cellcolor{green!30}11.98} \\
         &  & SSIM $\uparrow$ & \cellcolor{orange!25}0.391 & \cellcolor{yellow!35}0.478 & 0.323 & 0.372 & \textbf{\cellcolor{green!30}0.535} & \cellcolor{orange!25}0.332 & \cellcolor{yellow!35}0.414 & 0.234 & 0.297 & \textbf{\cellcolor{green!30}0.496} & \cellcolor{orange!25}0.224 & \cellcolor{yellow!35}0.320 & 0.104 & 0.163 & \textbf{\cellcolor{green!30}0.435} \\
         &  & LPIPS $\downarrow$ & 0.557 & \cellcolor{yellow!35}0.496 & 0.550 & \cellcolor{orange!25}0.538 & \textbf{\cellcolor{green!30}0.480} & 0.601 & \cellcolor{yellow!35}0.549 & 0.599 & \cellcolor{orange!25}0.590 & \textbf{\cellcolor{green!30}0.525} & 0.675 & \cellcolor{yellow!35}0.627 & 0.668 & \cellcolor{orange!25}0.658 & \textbf{\cellcolor{green!30}0.606} \\
        \bottomrule
    \end{tabular}}
\end{table*}

\begin{table*}[t]
    \centering
    \caption{Per-dataset average prune-only results at prune ratios 0.80, 0.85, 0.90, 0.95, 0.97, and 0.99. Rows are prune ratios and methods; columns are grouped by dataset and metric. Bold indicates best per dataset/metric/ratio; colored cells mark the top three methods.}
    \label{tab:prune_only_dataset_averages_all_ratios}
    \tiny
    \setlength{\tabcolsep}{2.2pt}
    \resizebox{\textwidth}{!}{%
    \begin{tabular}{@{}ll !{\vrule width 0.8pt} c c c c c c c c c}
        \toprule
        \multirow{2}{*}{Prune} & \multirow{2}{*}{Method} & \multicolumn{3}{c}{Mip-NeRF 360} & \multicolumn{3}{c}{Deep Blending} & \multicolumn{3}{c}{Tanks \& Temples} \\
        \cmidrule(lr){3-5}\cmidrule(lr){6-8}\cmidrule(lr){9-11}
        & & PSNR $\uparrow$ & SSIM $\uparrow$ & LPIPS $\downarrow$ & PSNR $\uparrow$ & SSIM $\uparrow$ & LPIPS $\downarrow$ & PSNR $\uparrow$ & SSIM $\uparrow$ & LPIPS $\downarrow$ \\
        \midrule
        \multirow{5}{*}{0.80} & GHAP & \cellcolor{yellow!35}19.78 & 0.564 & 0.425 & \textbf{\cellcolor{green!30}24.67} & \cellcolor{orange!25}0.820 & 0.365 & \cellcolor{yellow!35}17.61 & 0.606 & 0.391 \\
         & PUP & \cellcolor{orange!25}19.40 & \textbf{\cellcolor{green!30}0.665} & \textbf{\cellcolor{green!30}0.337} & \cellcolor{orange!25}24.29 & \textbf{\cellcolor{green!30}0.841} & \textbf{\cellcolor{green!30}0.317} & 16.05 & \textbf{\cellcolor{green!30}0.687} & \textbf{\cellcolor{green!30}0.310} \\
         & Trim & 18.01 & \cellcolor{orange!25}0.569 & \cellcolor{orange!25}0.384 & 22.02 & 0.786 & \cellcolor{yellow!35}0.335 & \cellcolor{orange!25}16.11 & \cellcolor{orange!25}0.632 & \cellcolor{orange!25}0.343 \\
         & Unif & 14.85 & 0.487 & 0.450 & 19.51 & 0.752 & 0.426 & 11.74 & 0.526 & 0.423 \\
         & Ours & \textbf{\cellcolor{green!30}20.18} & \cellcolor{yellow!35}0.664 & \cellcolor{yellow!35}0.343 & \cellcolor{yellow!35}24.41 & \cellcolor{yellow!35}0.831 & \cellcolor{orange!25}0.336 & \textbf{\cellcolor{green!30}18.13} & \cellcolor{yellow!35}0.675 & \cellcolor{yellow!35}0.327 \\
        \cmidrule(lr){1-11}
        \multirow{5}{*}{0.85} & GHAP & \cellcolor{yellow!35}18.54 & \cellcolor{orange!25}0.521 & 0.454 & \cellcolor{yellow!35}23.05 & \cellcolor{orange!25}0.791 & 0.395 & \cellcolor{yellow!35}16.42 & 0.555 & 0.431 \\
         & PUP & \cellcolor{orange!25}17.75 & \cellcolor{yellow!35}0.612 & \cellcolor{yellow!35}0.375 & \cellcolor{orange!25}22.25 & \cellcolor{yellow!35}0.812 & \textbf{\cellcolor{green!30}0.348} & \cellcolor{orange!25}14.89 & \textbf{\cellcolor{green!30}0.643} & \textbf{\cellcolor{green!30}0.348} \\
         & Trim & 16.33 & 0.501 & \cellcolor{orange!25}0.426 & 19.63 & 0.733 & \cellcolor{orange!25}0.372 & 14.53 & \cellcolor{orange!25}0.564 & \cellcolor{orange!25}0.392 \\
         & Unif & 14.04 & 0.446 & 0.478 & 17.56 & 0.708 & 0.460 & 11.17 & 0.494 & 0.448 \\
         & Ours & \textbf{\cellcolor{green!30}19.30} & \textbf{\cellcolor{green!30}0.629} & \textbf{\cellcolor{green!30}0.374} & \textbf{\cellcolor{green!30}23.45} & \textbf{\cellcolor{green!30}0.814} & \cellcolor{yellow!35}0.359 & \textbf{\cellcolor{green!30}17.37} & \cellcolor{yellow!35}0.643 & \cellcolor{yellow!35}0.362 \\
        \cmidrule(lr){1-11}
        \multirow{5}{*}{0.90} & GHAP & \cellcolor{yellow!35}16.76 & \cellcolor{orange!25}0.458 & 0.494 & \cellcolor{yellow!35}20.53 & \cellcolor{orange!25}0.744 & 0.437 & \cellcolor{yellow!35}14.83 & \cellcolor{orange!25}0.488 & 0.483 \\
         & PUP & \cellcolor{orange!25}15.79 & \cellcolor{yellow!35}0.535 & \cellcolor{yellow!35}0.429 & \cellcolor{orange!25}19.52 & \cellcolor{yellow!35}0.765 & \cellcolor{yellow!35}0.393 & \cellcolor{orange!25}13.38 & \cellcolor{yellow!35}0.577 & \textbf{\cellcolor{green!30}0.406} \\
         & Trim & 14.45 & 0.413 & \cellcolor{orange!25}0.480 & 16.52 & 0.642 & \cellcolor{orange!25}0.429 & 12.46 & 0.471 & \cellcolor{orange!25}0.458 \\
         & Unif & 13.16 & 0.391 & 0.516 & 15.69 & 0.654 & 0.501 & 10.22 & 0.437 & 0.482 \\
         & Ours & \textbf{\cellcolor{green!30}18.16} & \textbf{\cellcolor{green!30}0.580} & \textbf{\cellcolor{green!30}0.417} & \textbf{\cellcolor{green!30}22.16} & \textbf{\cellcolor{green!30}0.790} & \textbf{\cellcolor{green!30}0.390} & \textbf{\cellcolor{green!30}16.35} & \textbf{\cellcolor{green!30}0.599} & \cellcolor{yellow!35}0.407 \\
        \cmidrule(lr){1-11}
        \multirow{5}{*}{0.95} & GHAP & \cellcolor{yellow!35}14.15 & \cellcolor{orange!25}0.361 & 0.556 & \cellcolor{yellow!35}16.46 & \cellcolor{orange!25}0.653 & \cellcolor{orange!25}0.506 & \cellcolor{yellow!35}12.26 & \cellcolor{orange!25}0.391 & 0.557 \\
         & PUP & \cellcolor{orange!25}13.49 & \cellcolor{yellow!35}0.416 & \cellcolor{yellow!35}0.508 & \cellcolor{orange!25}15.70 & \cellcolor{yellow!35}0.679 & \cellcolor{yellow!35}0.468 & \cellcolor{orange!25}11.58 & \cellcolor{yellow!35}0.478 & \cellcolor{yellow!35}0.496 \\
         & Trim & 12.19 & 0.297 & \cellcolor{orange!25}0.555 & 12.34 & 0.471 & 0.525 & 9.53 & 0.323 & 0.550 \\
         & Unif & 12.02 & 0.308 & 0.569 & 12.47 & 0.524 & 0.577 & 9.44 & 0.372 & \cellcolor{orange!25}0.538 \\
         & Ours & \textbf{\cellcolor{green!30}16.52} & \textbf{\cellcolor{green!30}0.506} & \textbf{\cellcolor{green!30}0.484} & \textbf{\cellcolor{green!30}20.06} & \textbf{\cellcolor{green!30}0.752} & \textbf{\cellcolor{green!30}0.439} & \textbf{\cellcolor{green!30}14.81} & \textbf{\cellcolor{green!30}0.535} & \textbf{\cellcolor{green!30}0.480} \\
        \cmidrule(lr){1-11}
        \multirow{5}{*}{0.97} & GHAP & \cellcolor{yellow!35}12.53 & \cellcolor{orange!25}0.295 & \cellcolor{orange!25}0.595 & \cellcolor{yellow!35}13.98 & \cellcolor{orange!25}0.583 & \cellcolor{orange!25}0.554 & \cellcolor{yellow!35}10.63 & \cellcolor{orange!25}0.332 & 0.601 \\
         & PUP & \cellcolor{orange!25}12.33 & \cellcolor{yellow!35}0.344 & \cellcolor{yellow!35}0.554 & \cellcolor{orange!25}13.51 & \cellcolor{yellow!35}0.608 & \cellcolor{yellow!35}0.520 & \cellcolor{orange!25}10.48 & \cellcolor{yellow!35}0.414 & \cellcolor{yellow!35}0.549 \\
         & Trim & 11.06 & 0.235 & 0.598 & 10.14 & 0.347 & 0.591 & 8.07 & 0.234 & 0.599 \\
         & Unif & 11.45 & 0.261 & 0.596 & 10.53 & 0.426 & 0.623 & 8.60 & 0.297 & \cellcolor{orange!25}0.590 \\
         & Ours & \textbf{\cellcolor{green!30}15.53} & \textbf{\cellcolor{green!30}0.463} & \textbf{\cellcolor{green!30}0.527} & \textbf{\cellcolor{green!30}18.61} & \textbf{\cellcolor{green!30}0.725} & \textbf{\cellcolor{green!30}0.471} & \textbf{\cellcolor{green!30}13.83} & \textbf{\cellcolor{green!30}0.496} & \textbf{\cellcolor{green!30}0.525} \\
        \cmidrule(lr){1-11}
        \multirow{5}{*}{0.99} & GHAP & 10.19 & \cellcolor{orange!25}0.177 & 0.664 & \cellcolor{yellow!35}10.14 & \cellcolor{orange!25}0.401 & \cellcolor{orange!25}0.653 & \cellcolor{orange!25}7.94 & \cellcolor{orange!25}0.224 & 0.675 \\
         & PUP & \cellcolor{yellow!35}10.67 & \cellcolor{yellow!35}0.227 & \cellcolor{yellow!35}0.624 & \cellcolor{orange!25}10.13 & \cellcolor{yellow!35}0.428 & \cellcolor{yellow!35}0.621 & \cellcolor{yellow!35}8.94 & \cellcolor{yellow!35}0.320 & \cellcolor{yellow!35}0.627 \\
         & Trim & 9.58 & 0.136 & 0.661 & 7.66 & 0.132 & 0.683 & 6.23 & 0.104 & 0.668 \\
         & Unif & \cellcolor{orange!25}10.27 & 0.174 & \cellcolor{orange!25}0.642 & 7.84 & 0.189 & 0.696 & 6.92 & 0.163 & \cellcolor{orange!25}0.658 \\
         & Ours & \textbf{\cellcolor{green!30}13.97} & \textbf{\cellcolor{green!30}0.396} & \textbf{\cellcolor{green!30}0.600} & \textbf{\cellcolor{green!30}15.62} & \textbf{\cellcolor{green!30}0.660} & \textbf{\cellcolor{green!30}0.526} & \textbf{\cellcolor{green!30}11.98} & \textbf{\cellcolor{green!30}0.435} & \textbf{\cellcolor{green!30}0.606} \\
        \bottomrule
    \end{tabular}}
\end{table*}

\FloatBarrier

\subsection{Dataset-level prune-only curves and delta trends}
The following plots show how each method degrades as the prune ratio increases. The delta panels use a sign convention in which positive values indicate an advantage for our method for all three metrics.

\begin{figure}[!htbp]
    \centering
    \includegraphics[width=\textwidth]{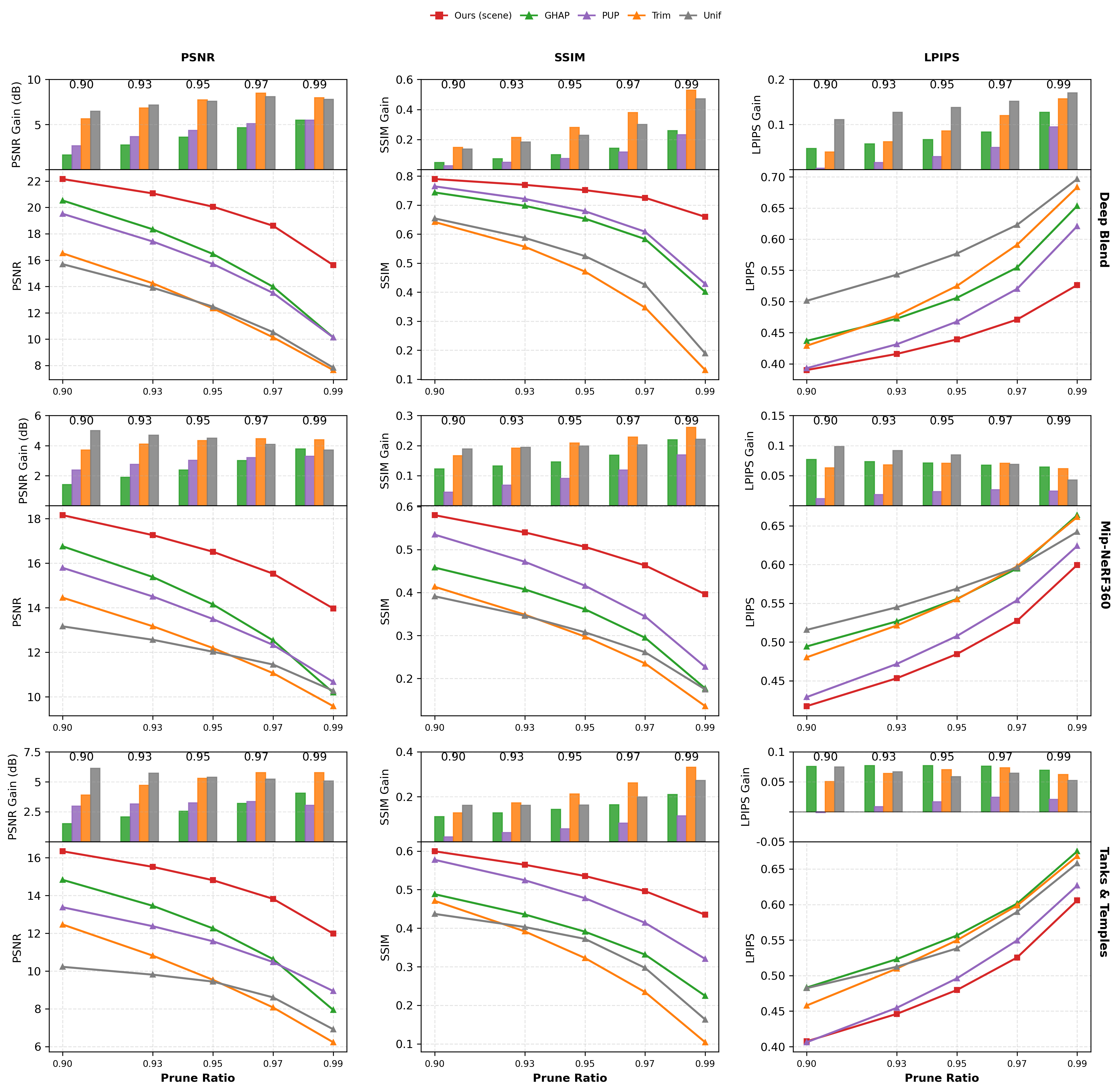}
    \caption{Prune-only averages per dataset.}
    \label{fig:combined_prune_delta_tanks_temples}
\end{figure}

\FloatBarrier

\subsection{Short-recovery results}
The first group fixes our per-scene variant and shows how its quality changes with prune ratio and recovery budget. The second group compares our method with all competing methods at the most aggressive prune ratios.

\begin{figure}[!htbp]
    \centering
    \includegraphics[width=\textwidth]{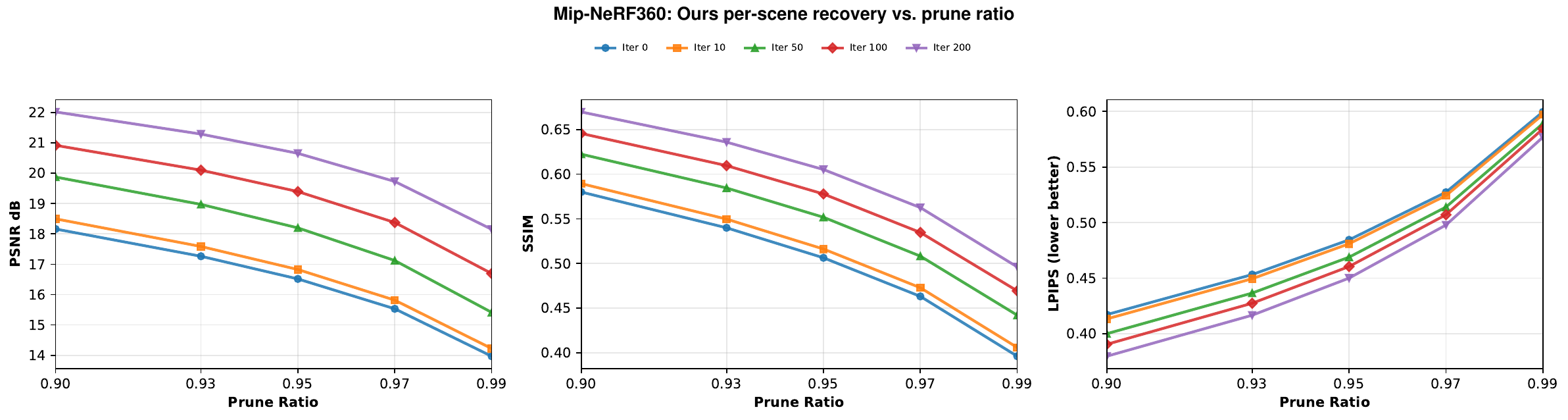}
    \caption{Transposed short-recovery curves for our scene-variant on mip-NeRF 360. The x-axis is prune ratio and each curve corresponds to a recovery iteration.}
    \label{fig:short_recovery_core_prune_axis_mipnerf360}
\end{figure}

\begin{figure}[!htbp]
    \centering
    \includegraphics[width=\textwidth]{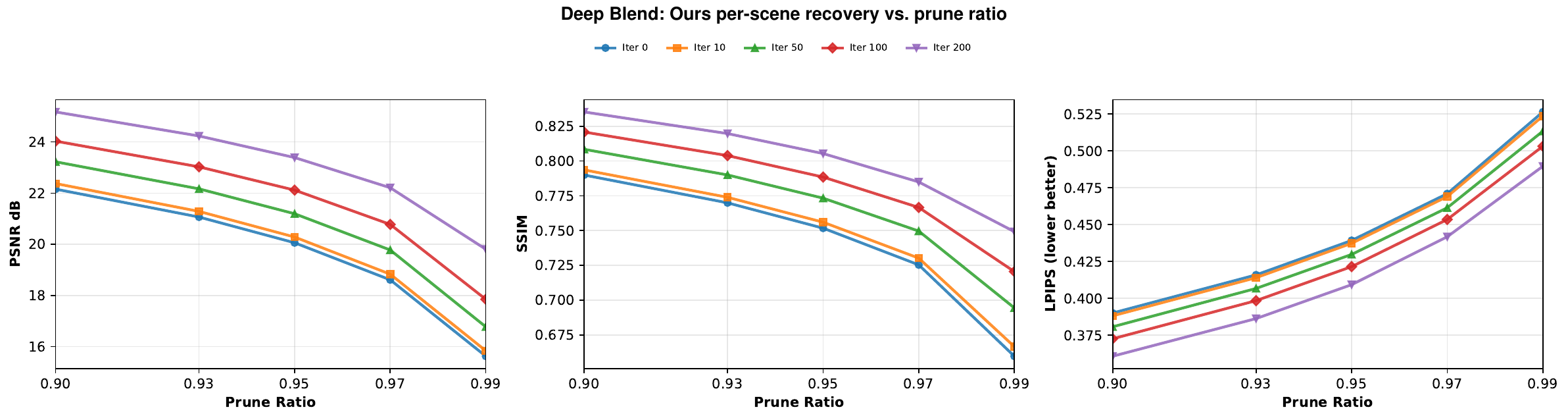}
    \caption{Transposed short-recovery curves for our scene-variant on Deep Blending, using the same layout as \cref{fig:short_recovery_core_prune_axis_mipnerf360}.}
    \label{fig:short_recovery_core_prune_axis_deep_blend}
\end{figure}

\begin{figure}[!htbp]
    \centering
    \includegraphics[width=\textwidth]{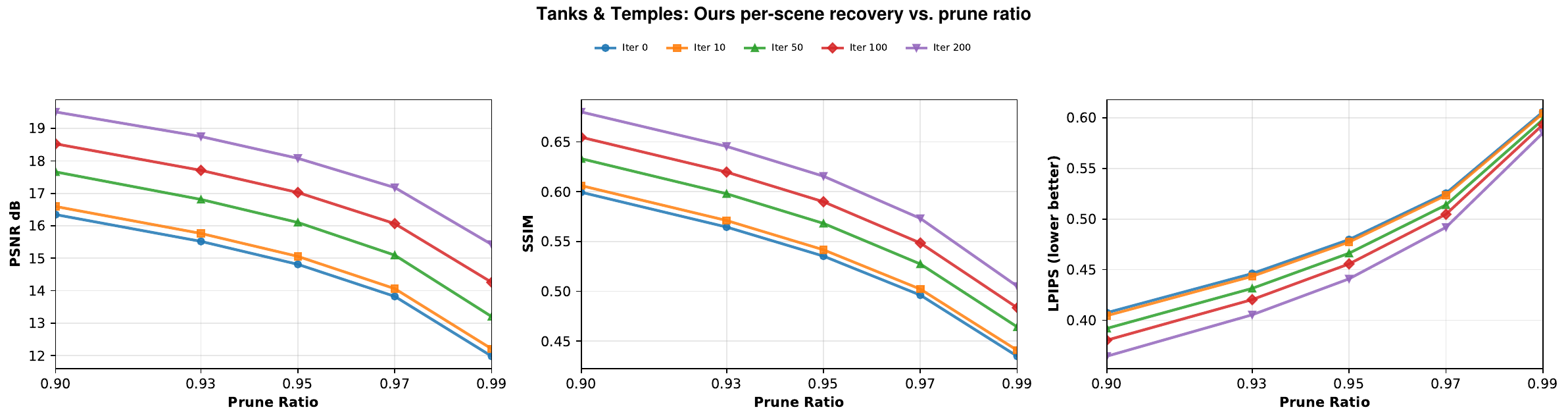}
    \caption{Transposed short-recovery curves for our scene-variant on Tanks \& Temples, using the same layout as \cref{fig:short_recovery_core_prune_axis_mipnerf360}.}
    \label{fig:short_recovery_core_prune_axis_tanks_temples}
\end{figure}

\FloatBarrier

\begin{figure}[p]
    \centering
    \includegraphics[width=0.82\textwidth]{paper_fill/figures/recovery_competitors_mipnerf360.pdf}
    \caption{Short-recovery comparison on mip-NeRF 360 at prune ratios 0.95, 0.97, and 0.99. Curves compare our scene-variant against GHAP, PUP 3D-GS, Trimming the Fat, and Uniform.}
    \label{fig:short_recovery_competitors_mipnerf360}

    \vspace{0.55em}
    \includegraphics[width=0.82\textwidth]{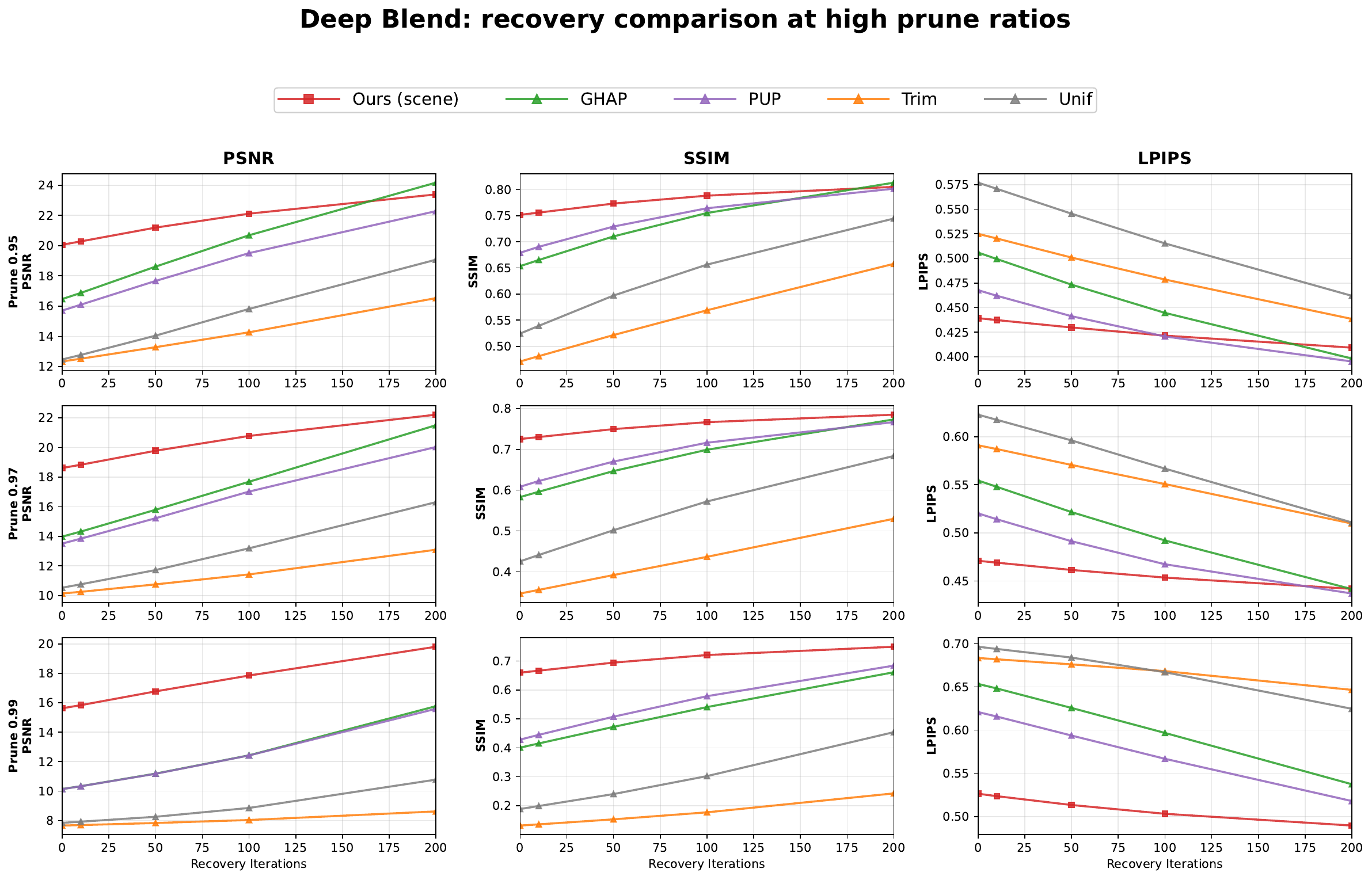}
    \caption{Short-recovery comparison on Deep Blending at prune ratios 0.95, 0.97, and 0.99.}
    \label{fig:short_recovery_competitors_deep_blend}
\end{figure}
\clearpage

\begin{center}
\begin{minipage}{\textwidth}
    \centering
    \includegraphics[width=0.78\textwidth]{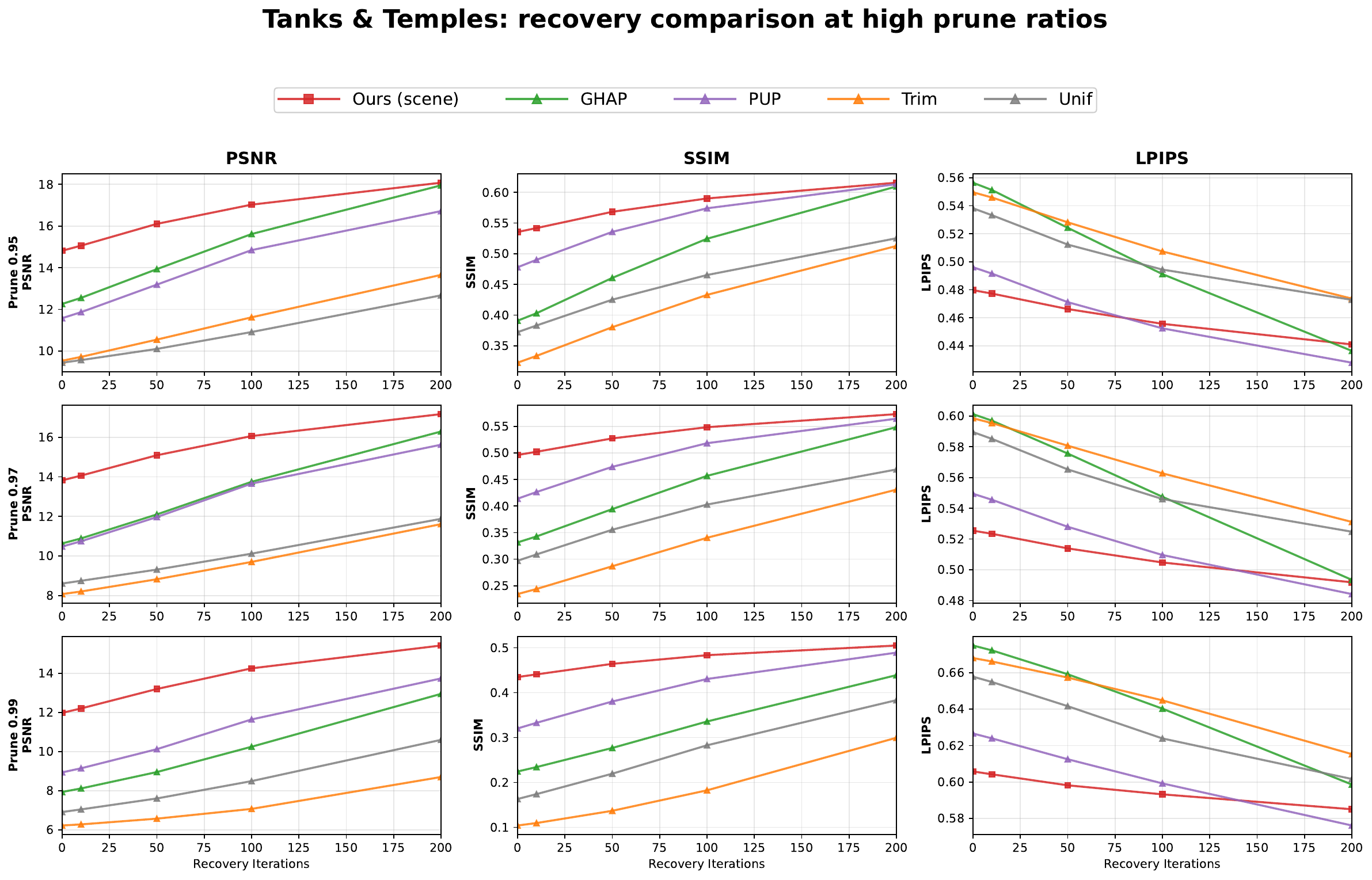}
    \captionof{figure}{Short-recovery comparison on Tanks \& Temples at prune ratios 0.95, 0.97, and 0.99.}
    \label{fig:short_recovery_competitors_tanks_temples}
\end{minipage}
\end{center}

\subsection{Prune-only per-scene graphs}
The dataset averages are expanded below into 39 scene-level panels. These plots show where the average trend is uniform and where individual scenes behave differently.

This appendix section expands the main average-over-scenes prune-only graphs into per-scene plots. The three figures below cover 13 scenes $\times$ 3 metrics, for a total of 39 prune-only panels. Each panel plots metric value on the y-axis against prune ratio on the x-axis, with curves for \method variants and the competing methods. These plots support per-scene variability analysis and show whether averaged trends hide scene-specific failures or strengths.

\begin{center}
\begin{minipage}{\textwidth}
    \centering
    \includegraphics[width=0.78\textwidth]{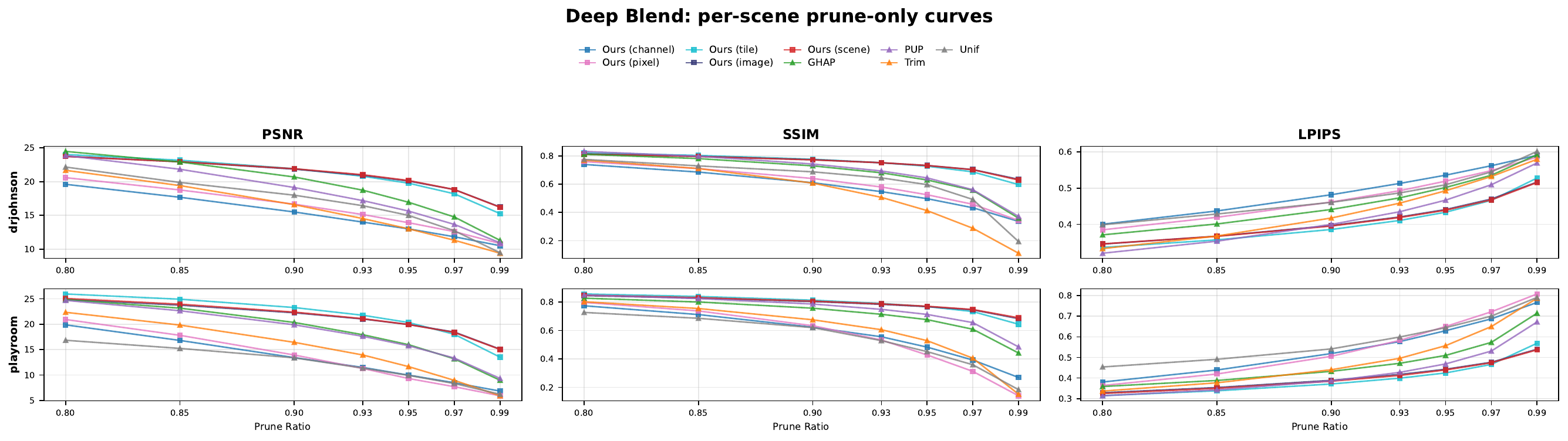}
    \captionof{figure}{Per-scene prune-only curves for Deep Blending.}
    \label{fig:appendix_per_scene_prune_only_deep_blend}

    \vspace{0.35em}
    \includegraphics[width=0.78\textwidth]{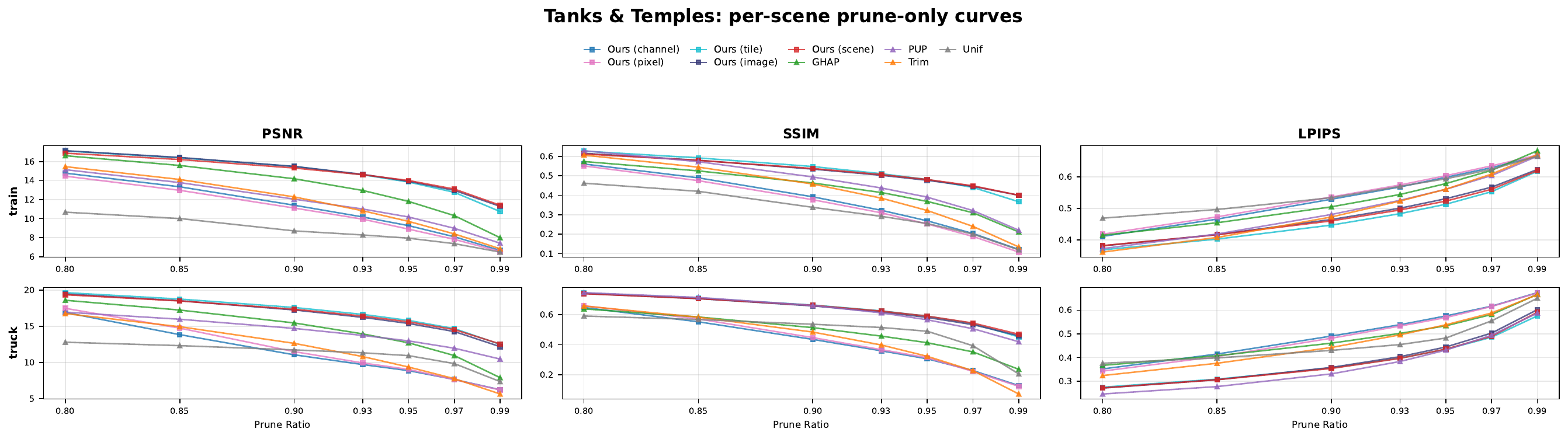}
    \captionof{figure}{Per-scene prune-only curves for Tanks \& Temples.}
    \label{fig:appendix_per_scene_prune_only_tanks_temples}
\end{minipage}
\end{center}

\begin{figure}[!htbp]
    \centering
    \includegraphics[width=0.96\textwidth]{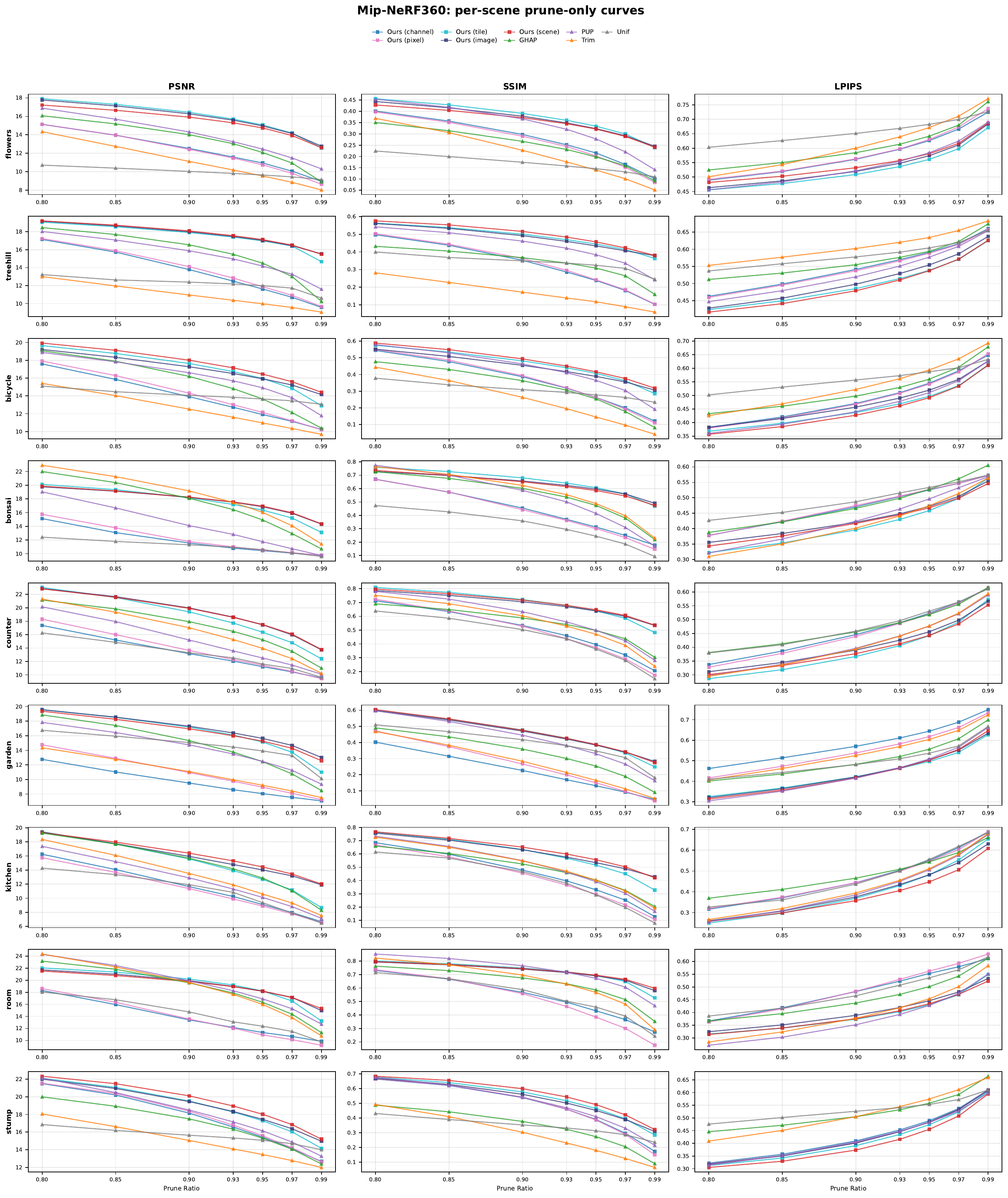}
    \caption{Per-scene prune-only curves for mip-NeRF 360.}
    \label{fig:appendix_per_scene_prune_only_mipnerf360}
\end{figure}

\FloatBarrier

\subsection{Sensitivity ablations and wider recovery curves}
The following figures report the complete ablations of our method and the wider short-recovery comparison. Together they test whether the relative ordering of the aggregation levels persists after limited optimization and across all three datasets.

The following figures report the full ablations of our method, including per-channel, per-pixel, per-tile, per-image, and per-scene aggregation levels, as well as implementation-specific L1/L2 and color/nocolor variants where needed for reproducibility. The final three figures give the wider short-recovery comparison across prune ratios 0.90, 0.93, 0.95, 0.97, and 0.99.

\begin{figure}[!htbp]
    \centering
    \includegraphics[width=\textwidth]{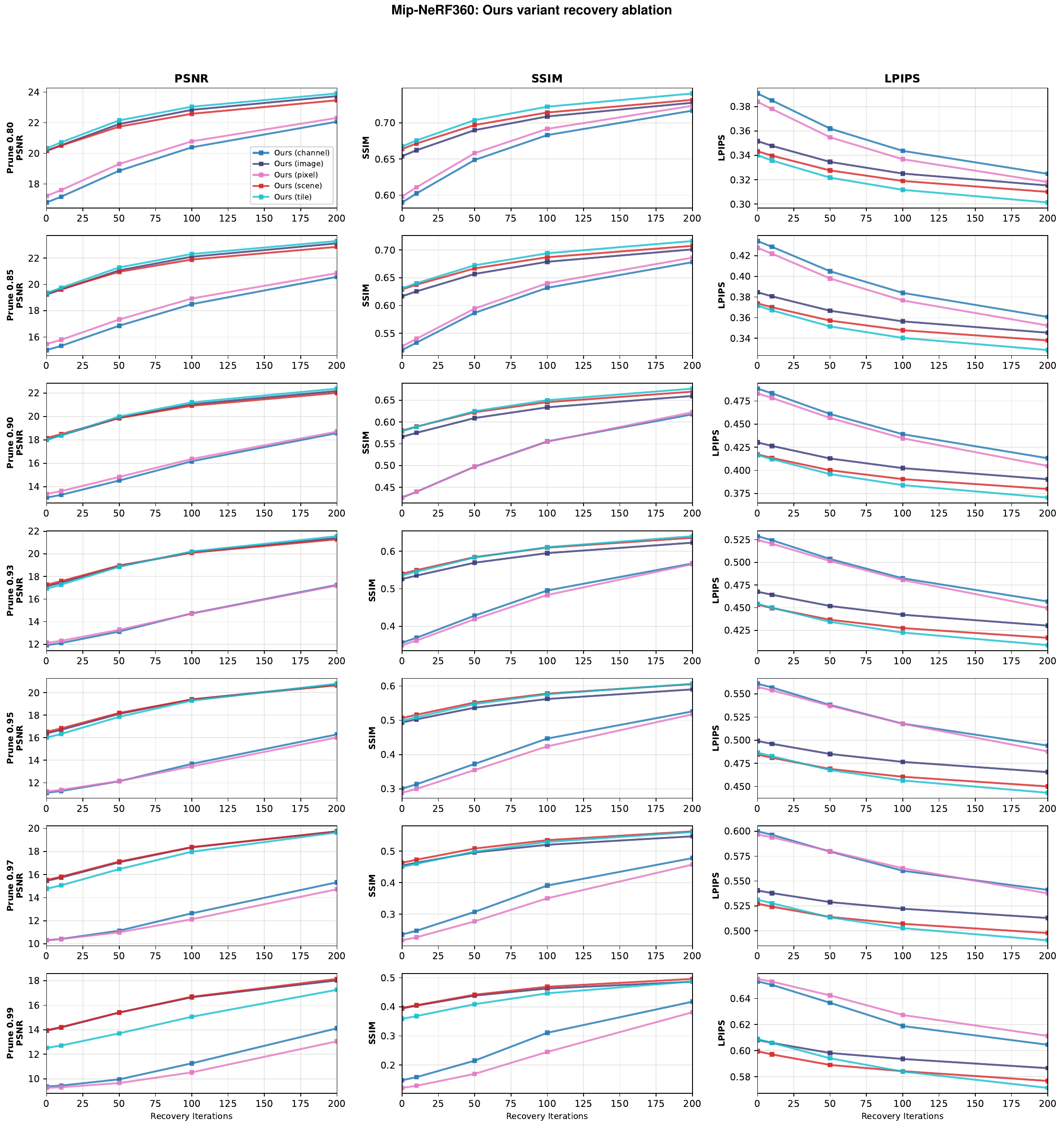}
    \caption{Recovery ablation across our variants for mip-NeRF 360.}
    \label{fig:appendix_ablations_mipnerf360}
\end{figure}

\begin{figure}[!htbp]
    \centering
    \includegraphics[width=\textwidth]{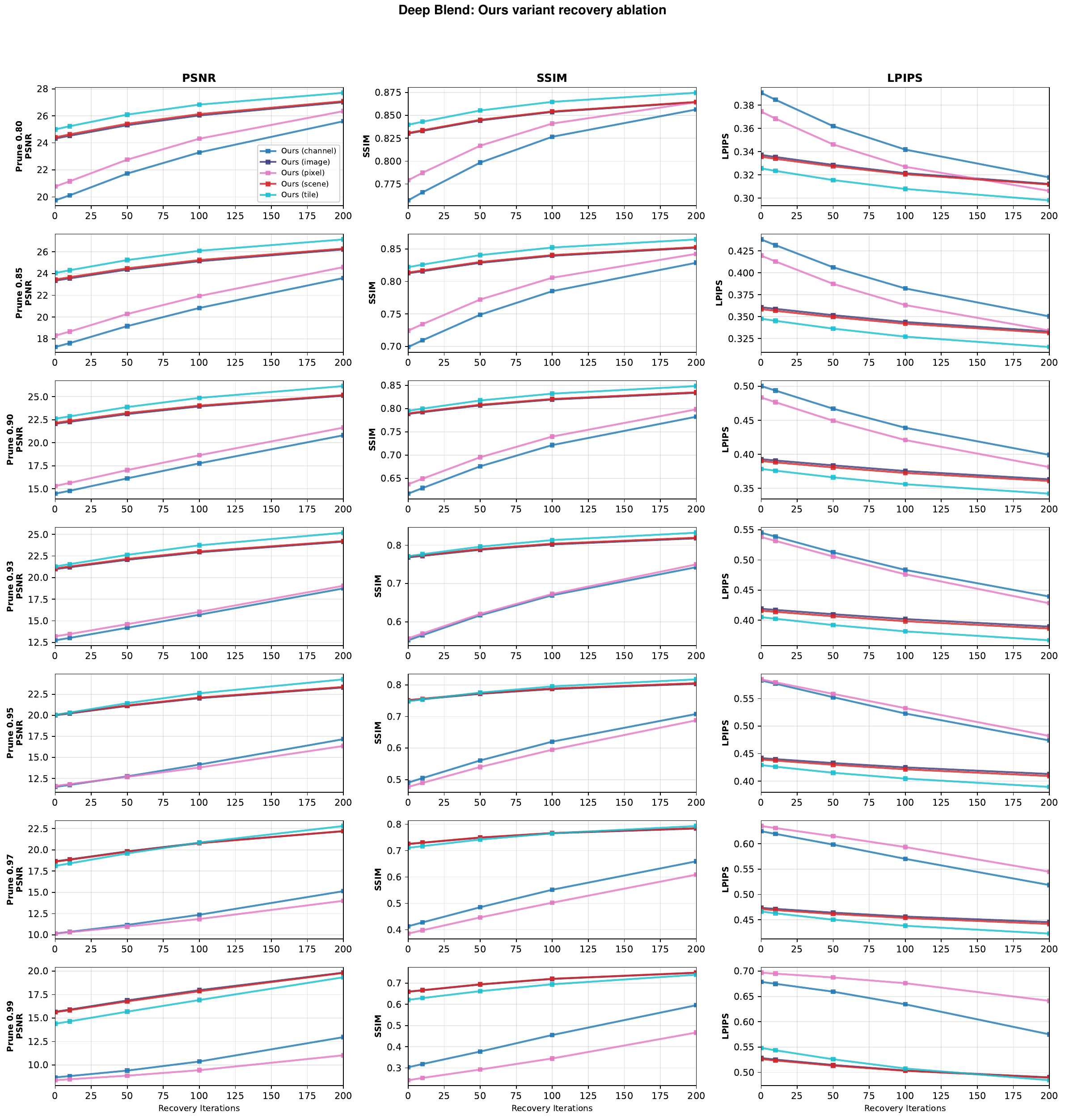}
    \caption{Recovery ablation across our variants for Deep Blending.}
    \label{fig:appendix_ablations_deep_blend}
\end{figure}

\begin{figure}[!htbp]
    \centering
    \includegraphics[width=\textwidth]{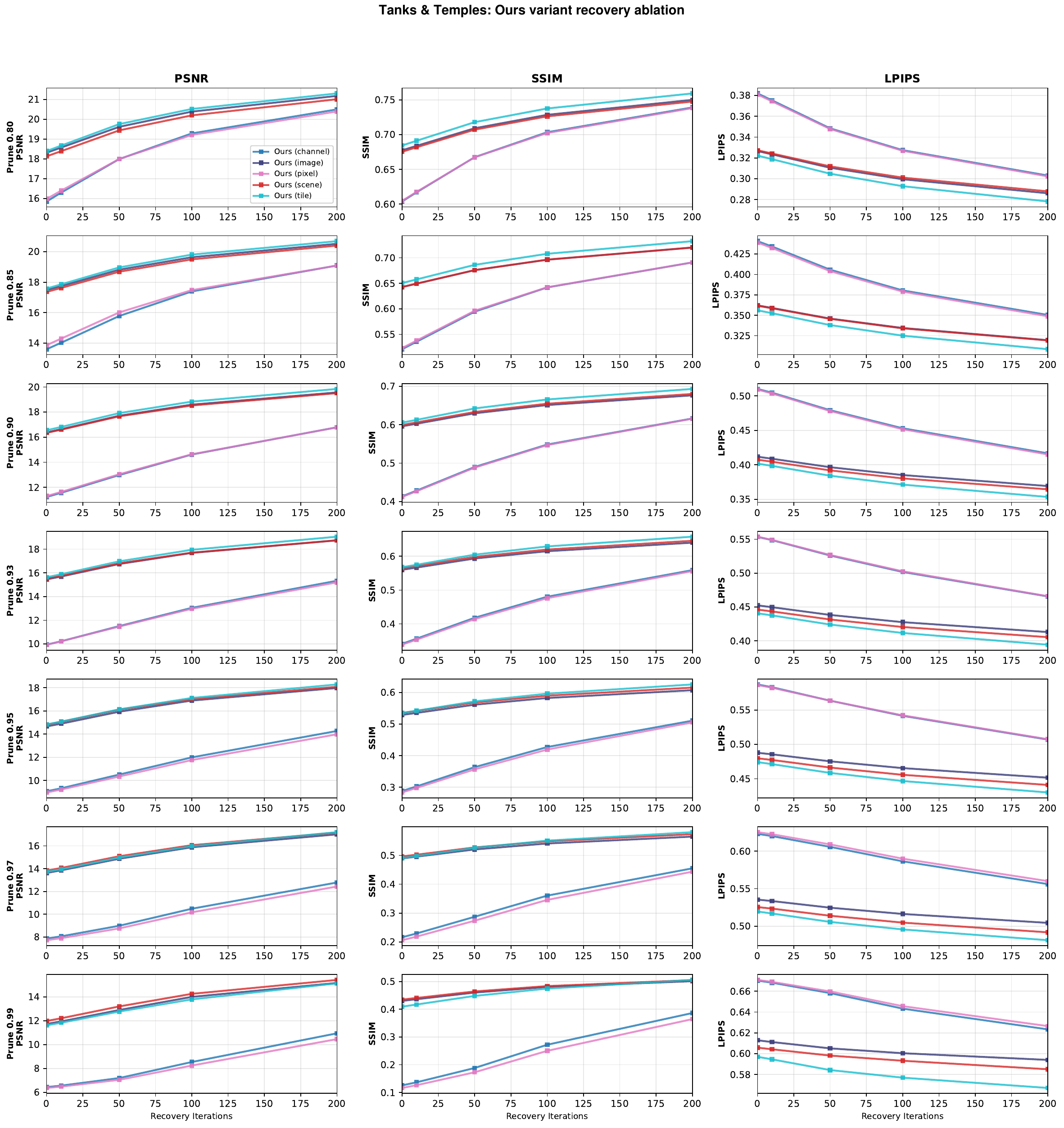}
    \caption{Recovery ablation across our variants for Tanks \& Temples.}
    \label{fig:appendix_ablations_tanks_temples}
\end{figure}

\begin{figure}[p]
    \centering
    \includegraphics[width=0.68\textwidth]{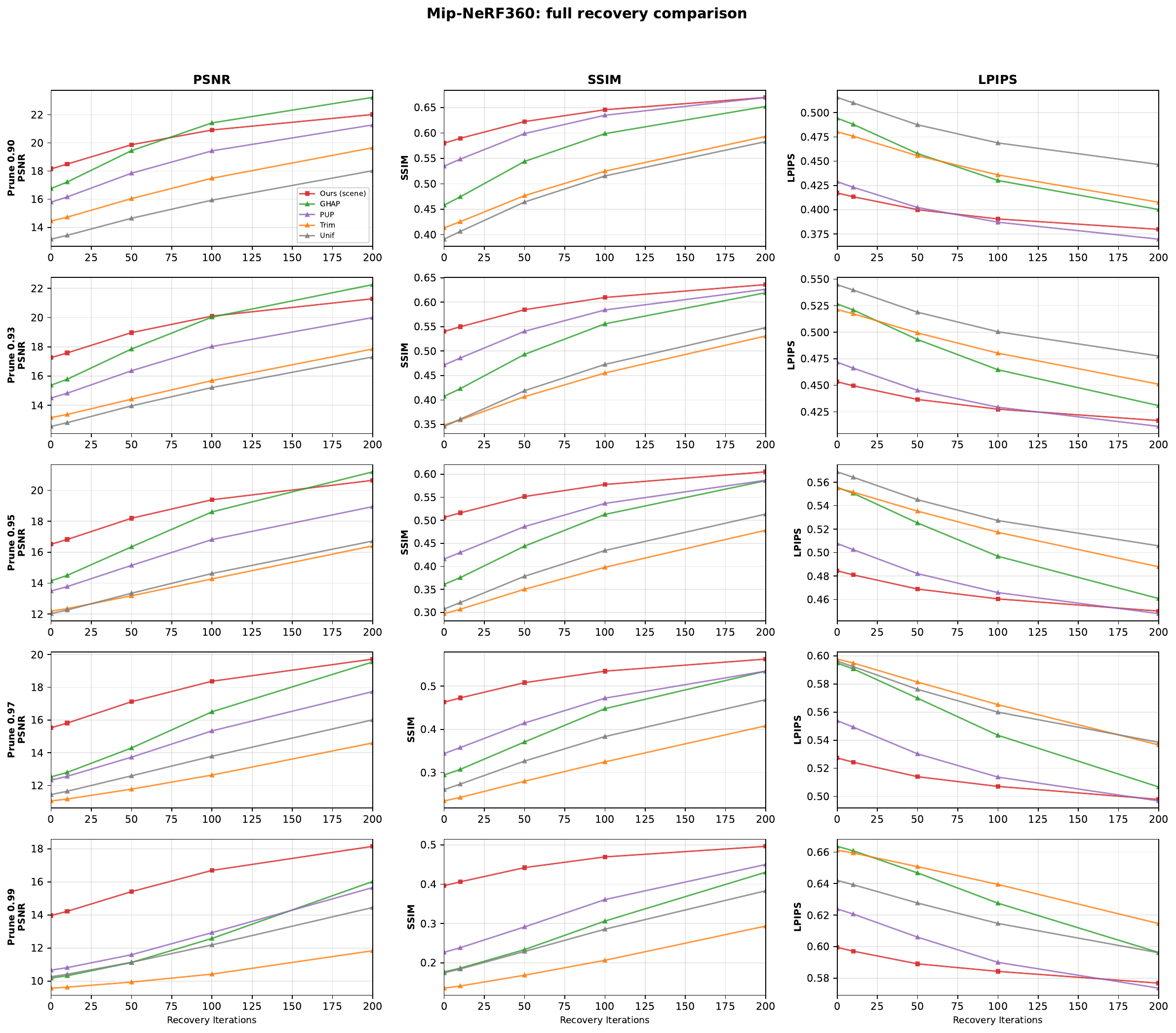}
    \caption{Full short-recovery comparison for mip-NeRF 360 across prune ratios 0.90--0.99.}
    \label{fig:appendix_recovery_competitors_full_mipnerf360}

    \vspace{0.45em}
    \includegraphics[width=0.68\textwidth]{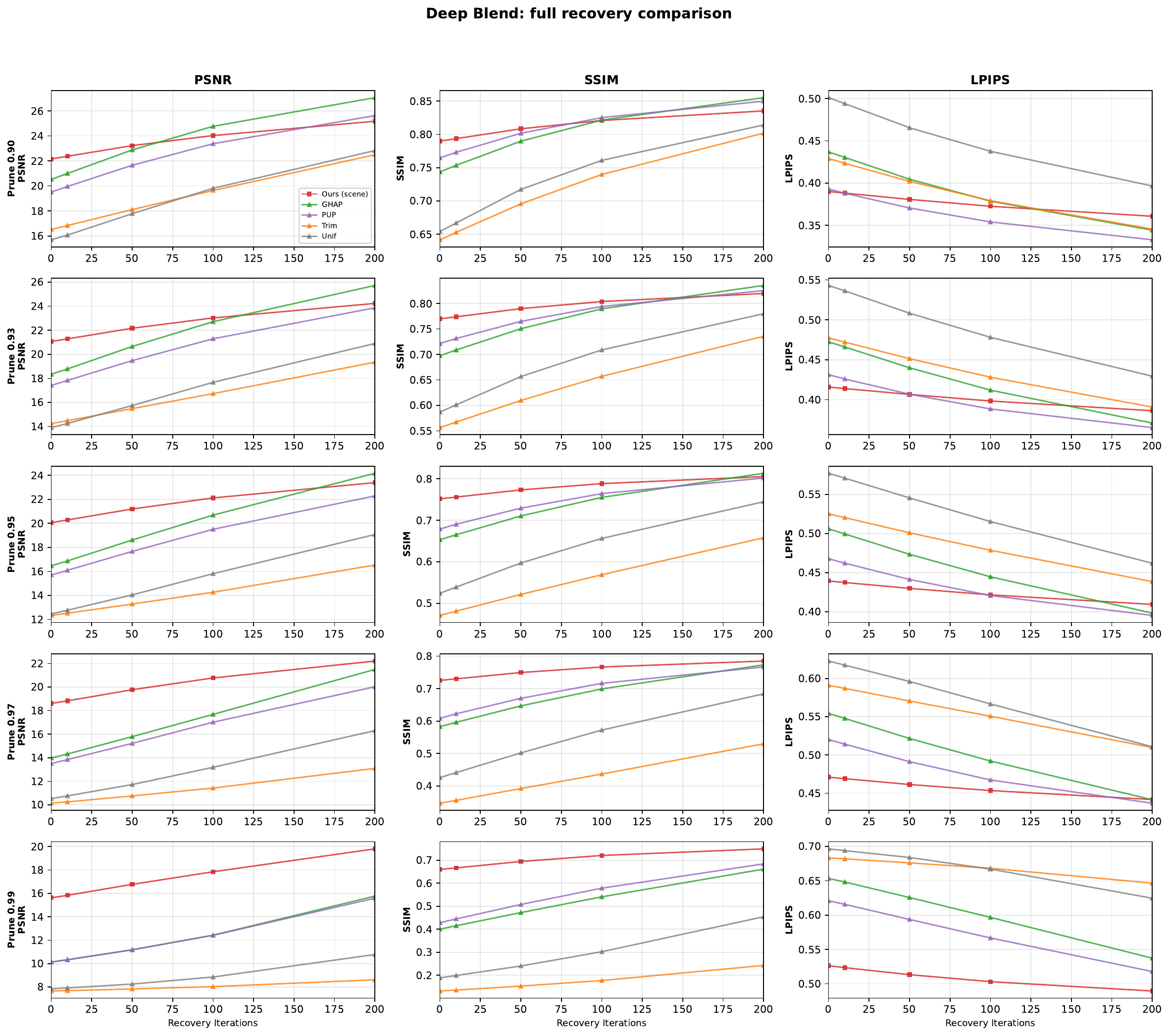}
    \caption{Full short-recovery comparison for Deep Blending across prune ratios 0.90--0.99.}
    \label{fig:appendix_recovery_competitors_full_deep_blend}
\end{figure}

\begin{figure}[p]
    \centering
    \includegraphics[width=0.68\textwidth]{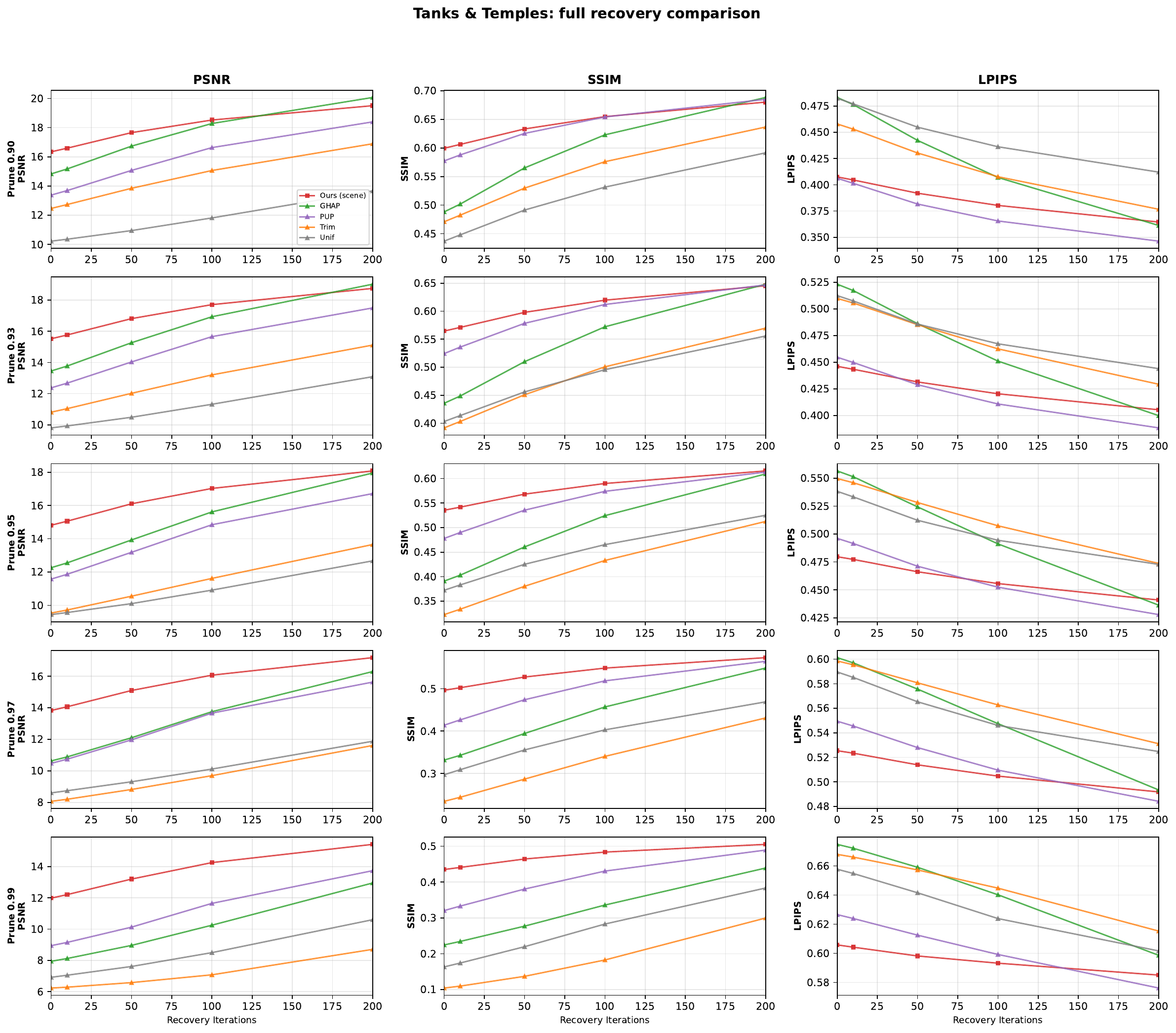}
    \caption{Full short-recovery comparison for Tanks \& Temples across prune ratios 0.90--0.99.}
    \label{fig:appendix_recovery_competitors_full_tanks_temples}

    \vspace{0.65em}
    \includegraphics[width=0.76\textwidth]{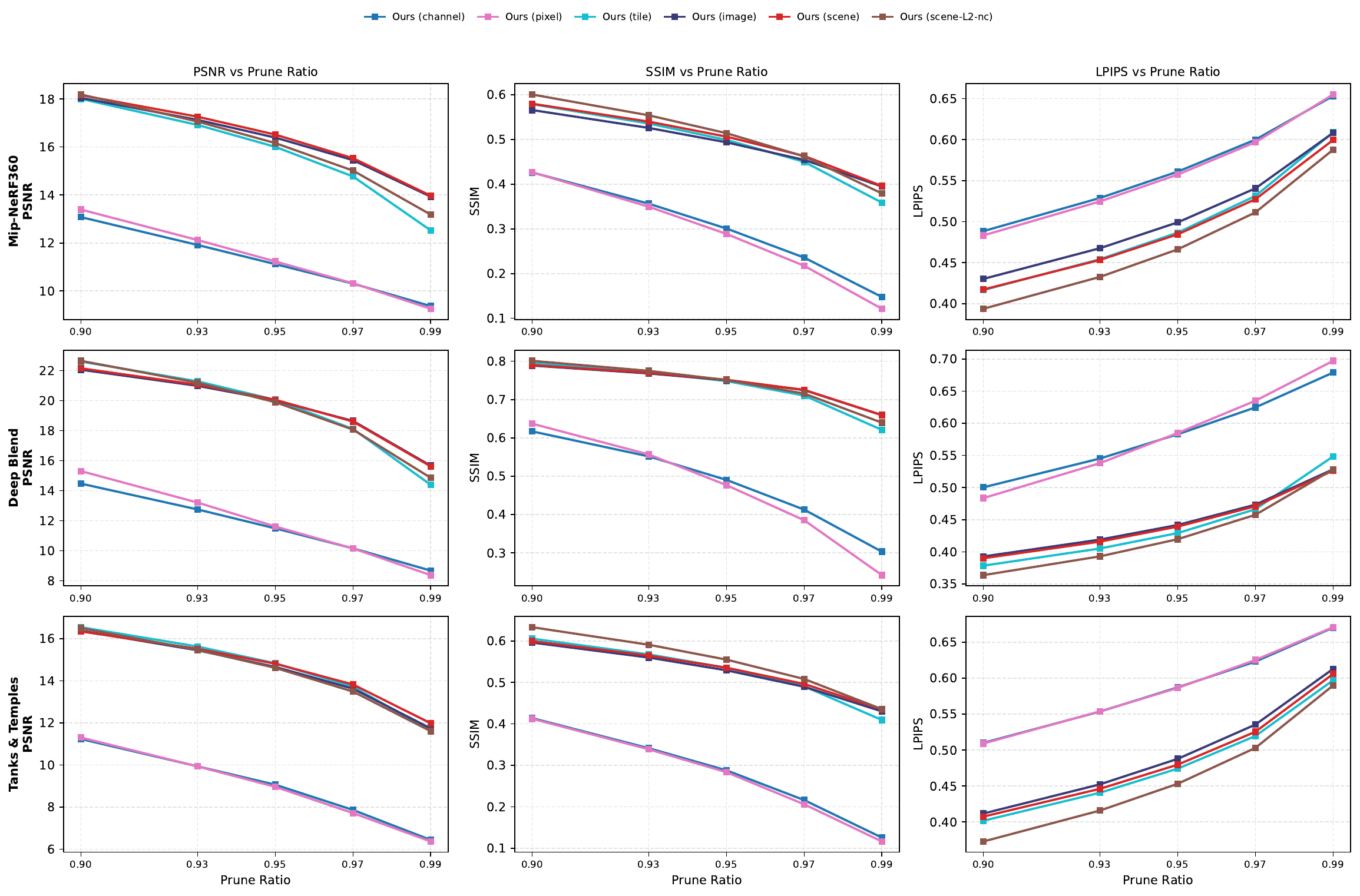}
    \caption{Ablation study.}
    \label{fig:appendix_prune_variants_all_datasets}
\end{figure}
\FloatBarrier

\begin{figure}[!htbp]
    \centering
    \includegraphics[height=0.82\textheight,keepaspectratio]{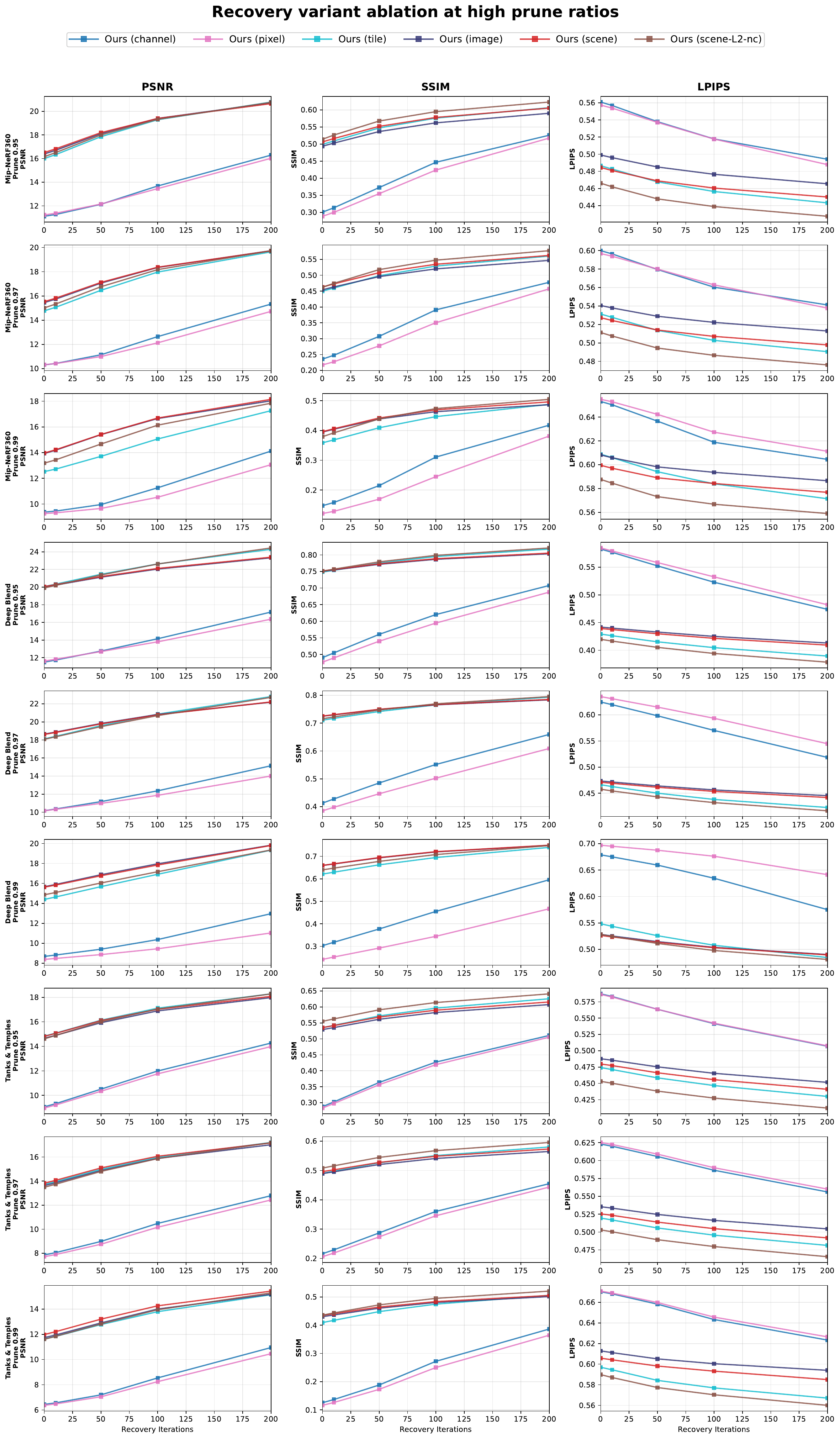}
    \caption{Ablation study.}
    \label{fig:appendix_recovery_variants_all_datasets}
\end{figure}
\FloatBarrier

\section{Theoretical Details and Complete Proofs}
This section gives the complete formal development in the same logical order as the main paper: setup and notation, unrestricted impossibility, the finite-query fixed-objective theorem, transfer to true rendering, and extension from representatives to compact query regions.

\subsection{Full setup and notation}\label{sec:appendix-setup}

For readability, we restate the two core definitions used throughout the appendix.

\begin{definition}[Multiplicative $\eps$-coreset for 3DGS, restated]\label{def:eps-coreset-app}
Let $N>0$, let $G=\{g_1,\dots,g_N\}$ be a 3DGS scene, and let $\mathcal U\subseteq\cQ$ be a family of rendering queries. Let $w\in[0,\infty)^N$ represent a weighted pruned scene. Recall $A$, the nonnegative scalar objective of the full scene, and let $A_w$ be the corresponding objective induced by $w$. We call $w$ an \emph{$\eps$-coreset} for $G$ on $\mathcal U$ if
\[
\Abs{A(G,q)-A_w(G,q)}\le \eps A(G,q)
\qquad\text{for every }q\in\mathcal U.
\]
\end{definition}

\begin{definition}[Weighted rendering validity, restated]\label{def:valid-app}
A weight vector $w\in[0,\infty)^N$ is called \emph{valid} on $\cQf$ if
\[
0\le w_i\rho(g_i,q)<1
\qquad\text{for every }q\in\cQf\text{ and every }i\in[N].
\]
\end{definition}

\begin{remark}
The theoretical objective $A_w^{\mathrm{th}}(G,q)$ is a classical coreset-style weighted sum of fixed itemwise terms $a(G,g_i,q)$. The true rendered objective $A_w(G,q)$ is the physically meaningful quantity of interest but is coupled through the reduced-scene transmittances $T_w(G,g_i,q)$.
\end{remark}

\subsection{General-case impossibility: restated theorem and proof}
\label{sec:appendix-no-general}

We now give the full construction behind \cref{thm:no-general-main}. The
construction uses the finite projected footprints used by rasterized 3DGS
implementations. Mathematically, an untruncated Gaussian has infinite support,
so exact footprint isolation is not available for the ideal infinite-support
kernel. In practical splatting, however, each projected Gaussian is evaluated
only on a bounded screen-space footprint, such as a constant-radius
Mahalanobis ellipse, commonly described as a $3\sigma$ footprint. The
following proof is stated for this truncated-footprint rendered objective.

\begin{theorem}[General-case impossibility, restated]
\label{thm:no-general-app}
For every $N\ge 1$ and every $0<\eps<1$, there exist a 3DGS scene $G=\{g_1,\dots,g_N\}$ and a query family $\cQ$ such that no weighted reduced scene specified $w\in[0,\infty)^N$ with $\|w\|_0<N$ is an $\eps$-coreset for the true rendered objective $A_w$ on $\cQ$.
\end{theorem}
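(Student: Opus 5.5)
The plan is to build an explicit scene in which each Gaussian is the sole contributor to some query. Once that holds, deleting any Gaussian produces a relative error of exactly $1$ at that query. The construction uses the truncated-footprint rendered objective described above: $k(g_i,q)=0$ whenever pixel $x$ lies outside the bounded screen-space ellipse of $g_i$ under pose $C$.

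\textbf{Step 1 (scene).} Fix a single camera pose $C_0=(I,0)$ and the channel $\lambda=R$. Choose $N$ means $\mu_i=(3iD,0,Z)$ with depth $Z>0$. Choose isotropic covariances $\Sigma_i=\sigma^2 I$, with $\sigma$ small enough that each projected $3\sigma$ ellipse has screen radius $r$. Choose $D$ large enough that the projected centers $x_i$ lie in the image domain and are pairwise more than $2r$ apart. Then the screen footprints are pairwise disjoint. Set $\alpha_i=1/2$ and $c_i\equiv 1$ for every viewing direction, for example via a constant DC spherical-harmonic term. The exact values of $\alpha_i$ and $c_i$ do not matter as long as they are positive.

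\textbf{Step 2 (isolation).} Let $q_i=(C_0,x_i,R)$ and $\cQ=\{q_1,\dots,q_N\}$. At $q_i$, only $g_i$ has a nonzero kernel value, and $k(g_i,q_i)=1$ because the projected center maximizes the kernel. Hence $\rho(g_j,q_i)=0$ for $j\ne i$, and every factor $1-\rho(g_j,q_i)$ equals $1$. This gives $T(G,g_i,q_i)=1$ regardless of the ordering $\prec_{G,q_i}$, so
\[
A(G,q_i)=\alpha_i c_i=\tfrac12>0 .
\]

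\textbf{Step 3 (failure of any reduced scene).} Take any $w\in[0,\infty)^N$ with $\|w\|_0<N$. By pigeonhole there is an index $i$ with $w_i=0$. At $q_i$, every term of $A_w(G,q_i)$ vanishes. For $j\ne i$ the term is zero because $\rho(g_j,q_i)=0$. For $j=i$ it is zero because $w_i=0$. Note that $T_w$ is irrelevant here, since each term is multiplied by $w_j\rho(g_j,q_i)=0$; this also sidesteps any validity concern. Therefore
\[
\Abs{A(G,q_i)-A_w(G,q_i)}=A(G,q_i)>\eps A(G,q_i)
\]
for every $\eps<1$, so $w$ is not an $\eps$-coreset.

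\textbf{Main obstacle.} The only delicate point is the geometry in Step 1: the projected footprints must be genuinely disjoint and each $x_i$ must fall inside the fixed image domain. I would handle this by first fixing the intrinsics and the image bounds. I would then pick $\sigma$ so that the screen radius is $r=3f\sigma/Z$, and pick the spacing so that the projected centers are $3fD/Z>2r$ apart. If the image is too small to hold $N$ separated centers, I would fall back to $N$ different camera poses $C_i$, each translated so that $g_i$ projects to the image center. I would also place all means sufficiently far from one another in world space so that, under $C_i$, the other Gaussians are behind the camera or outside the footprint. This works because the statement lets us choose both $G$ and $\cQ$. Finally, I would note that under the ideal infinite-support kernel this exact isolation fails, which is why the theorem is stated for the truncated-footprint objective.
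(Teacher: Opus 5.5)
Your proposal is correct and is essentially the paper's proof. Both build a finite query family in which each $g_i$ is the unique Gaussian whose truncated footprint covers the queried pixel, so $A(G,q_i)=a(G,g_i,q_i)>0$. Any $w$ with $\|w\|_0<N$ then has some $w_i=0$, which forces $A_w(G,q_i)=0$ and relative error $1$. The only difference is cosmetic: your main construction varies the pixel under one fixed pose, while the paper fixes one pixel $x_0$ and varies the pose $C_i$ (your fallback). The paper's choice sidesteps the question of how many well-separated footprints fit in a fixed image domain when $N$ is large.
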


\begin{proof}
Fix $N\ge 1$ and $0<\eps<1$. We construct a scene and a finite query family
for which each query isolates exactly one Gaussian.

Fix one image-plane location $x_0$ and one color channel $\lambda\in\Lambda$.
Choose $N$ camera poses $C_1,\dots,C_N$
and construct $N$ Gaussians $g_1,\dots,g_N$ such that, for every
$i\in[N]$, under camera pose $C_i$, the truncated projected footprint of
$g_i$ contains $x_0$, while the truncated projected footprint of every
$g_j$ with $j\neq i$ does not contain $x_0$. This is possible in the
unrestricted setting: place the Gaussian centers far apart in 3D, choose
camera pose $C_i$ to view $g_i$ at the selected pixel $x_0$, and choose the
projected covariances sufficiently small so that all other Gaussians lie
outside the finite rasterized footprint at that pixel.

For each Gaussian $g_i=(\mu_i,\Sigma_i,\alpha_i,c_i)$, choose
$\Sigma_i$ positive definite and choose $\alpha_i\in(0,1)$. Decrease the
opacities if necessary so that
\[
0\le \rho(g_j,q_i)<1
\qquad
\text{for every } i,j\in[N].
\]
This is possible because the query family constructed below is finite.
Choose the scalar appearance in channel $\lambda$ so that the isolated
Gaussian has positive color contribution at its corresponding query.

Define the query family
\[
\cQ:=\{q_1,\dots,q_N\},
\qquad
q_i:=(C_i,x_0,\lambda).
\]
By construction of the truncated projected footprints, for every
$i\in[N]$,
\[
k(g_i,q_i)>0,
\]
whereas for every $j\neq i$,
\[
k(g_j,q_i)=0.
\]
Therefore
\[
\rho(g_i,q_i)=\alpha_i k(g_i,q_i)>0,
\qquad
\rho(g_j,q_i)=0
\quad\text{for every }j\neq i.
\]

Since $0\le \rho(g_j,q_i)<1$ for all $i,j$, every prefix transmittance
factor $1-\rho(g_j,q_i)$ is positive. In particular,
\[
T(G,g_i,q_i)>0.
\]
Thus, at query $q_i$, Gaussian $g_i$ is the unique positive contributor:
\[
a(G,g_i,q_i)
=
c_i(q_i)\rho(g_i,q_i)T(G,g_i,q_i)>0,
\]
while for every $j\neq i$,
\[
a(G,g_j,q_i)
=
c_j(q_i)\rho(g_j,q_i)T(G,g_j,q_i)=0,
\]
because $\rho(g_j,q_i)=0$. Consequently,
\[
A(G,q_i)
=
\sum_{j=1}^N a(G,g_j,q_i)
=
a(G,g_i,q_i)
>
0.
\]

Now let $w\in[0,\infty)^N$ be any weighted pruned scene with
$\|w\|_0<N$. Then there exists an index $i\in[N]$ such that $w_i=0$.
Consider the corresponding query $q_i$. For every $j\neq i$, Gaussian
$g_j$ has zero truncated-footprint contribution at $q_i$. The only Gaussian
with positive full-scene contribution at $q_i$ is $g_i$, but its reduced
weight is zero. Therefore
\[
A_w(G,q_i)=0.
\]
On the other hand,
\[
A(G,q_i)>0.
\]
Hence
\[
\Abs{A(G,q_i)-A_w(G,q_i)}
=
A(G,q_i)
>
\eps A(G,q_i),
\]
because $0<\eps<1$. Thus $w$ fails the multiplicative coreset guarantee at
query $q_i$.

Since the argument holds for every $w$ with $\|w\|_0<N$, no weighted
reduced scene supported on fewer than $N$ Gaussians is an $\eps$-coreset for
this scene and the query family $\cQ$.
\end{proof}

\subsection{Full proof of the finite-query fixed-objective theorem}\label{sec:appendix-finite-proof}

\begin{lemma}[Multiplicative Chernoff for bounded variables]\label{lem:chernoff-app}
Let $Z_1,\dots,Z_m$ be i.i.d. random variables with $0\le Z_t\le 1$ and $\mathbb E[Z_t]=\mu$. Then for every $0<\eps\le 1$,
\[
\Pr\!\left[\left|\frac1m\sum_{t=1}^m Z_t-\mu\right|>\eps\mu\right]\le 2\exp\!\left(-\frac{\eps^2\mu m}{3}\right).
\]
\end{lemma}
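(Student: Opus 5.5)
The plan is the standard two-sided multiplicative Chernoff argument via the moment generating function, treating the upper and lower tails separately and combining them with a union bound. Write $X=\sum_{t=1}^m Z_t$, so $\mathbb E[X]=m\mu$, and note that the event $\left|\frac1m X-\mu\right|>\eps\mu$ is the union of $\{X>(1+\eps)m\mu\}$ and $\{X<(1-\eps)m\mu\}$. It therefore suffices to bound each tail by $\exp(-\eps^2\mu m/3)$. If $\mu=0$, then $Z_t=0$ almost surely and the event is empty, so we may assume $\mu>0$.

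\textbf{Step 1 (MGF bound).} For $0\le z\le 1$ and any real $\lambda$, convexity of $z\mapsto e^{\lambda z}$ gives $e^{\lambda z}\le 1+z(e^\lambda-1)$. Taking expectations yields $\mathbb E[e^{\lambda Z_t}]\le 1+\mu(e^\lambda-1)\le \exp(\mu(e^\lambda-1))$. By independence, $\mathbb E[e^{\lambda X}]\le \exp(m\mu(e^\lambda-1))$.

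\textbf{Step 2 (upper tail).} For $\lambda>0$, Markov's inequality gives $\Pr[X\ge(1+\eps)m\mu]\le \exp\bigl(m\mu(e^\lambda-1)-\lambda(1+\eps)m\mu\bigr)$. Choosing $\lambda=\log(1+\eps)$ gives the bound $\bigl(e^{\eps}/(1+\eps)^{1+\eps}\bigr)^{m\mu}$. It remains to check the elementary inequality $(1+\eps)\log(1+\eps)-\eps\ge \eps^2/3$ for $0<\eps\le1$. Let $f(\eps)=(1+\eps)\log(1+\eps)-\eps-\eps^2/3$. Then $f(0)=0$, $f'(\eps)=\log(1+\eps)-2\eps/3$, with $f'(0)=0$, and $f''(\eps)=1/(1+\eps)-2/3$. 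So $f'$ increases on $[0,1/2]$ and decreases on $[1/2,1]$. Since $f'(1)=\log 2-2/3>0$, we get $f'\ge0$ on $[0,1]$, hence $f\ge0$.

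\textbf{Step 3 (lower tail).} Apply the same argument with $\lambda=\log(1-\eps)<0$, taking $\eps<1$ first; the case $\eps=1$ follows by a limit, or directly from $\Pr[X<0]=0$. This gives $\Pr[X\le(1-\eps)m\mu]\le \bigl(e^{-\eps}/(1-\eps)^{1-\eps}\bigr)^{m\mu}$. Using $(1-\eps)\log(1-\eps)\ge -\eps+\eps^2/2$, which follows from the Taylor series, the exponent is at most $-\eps^2 m\mu/2\le -\eps^2m\mu/3$.

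\textbf{Step 4.} A union bound over the two tails gives the factor $2$, which completes the proof.

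The main obstacle is the calculus inequality in Step 2. This is where the constant $3$ arises, and it requires the restriction $\eps\le1$. The second-derivative sign analysis above handles it.
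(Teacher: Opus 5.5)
Your proof is correct and complete: the convexity-based MGF bound, the choices $\lambda=\log(1+\eps)$ and $\lambda=\log(1-\eps)$, the sign analysis of $f''$ giving $(1+\eps)\log(1+\eps)-\eps\ge \eps^2/3$ on $(0,1]$, the series bound for the lower tail, and the separate treatment of $\eps=1$ via $\Pr[X<0]=0$ all check out. The paper does not prove this lemma; it invokes it as the standard multiplicative Chernoff bound, and your argument is the textbook derivation that this citation relies on.
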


\begin{theorem}[Query size dependent 3DGS coreset, restated]\label{thm:finite-th-app}
Let $G$ be a 3DGS scene, and let $\cQf\subseteq\cQ$ be a finite set of representative queries after discarding queries with $A(G,q)=0$. Let $\eps_c,\delta\in(0,1)$, and let $w$ be the random weight vector produced by the sensitivity-based sampling rule of \cref{sec:finite-th-main}. If
\[
m\ge \frac{3S}{\eps_c^2}\log\!\left(\frac{2|\cQf|}{\delta}\right),
\]
then, with probability at least $1-\delta$, $w$ is an $\eps_c$-coreset for the fixed-objective relaxation $A^{\mathrm{th}}$ on $\cQf$, i.e.,
\[
\Abs{A(G,q)-A_w^{\mathrm{th}}(G,q)}\le \eps_c A(G,q)
\qquad\text{for all }q\in\cQf.
\]
Moreover, $\|w\|_0\le m$.
\end{theorem}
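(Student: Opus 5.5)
We prove \cref{thm:finite-th-app} by the standard sensitivity-sampling argument: write $A_w^{\mathrm{th}}$ as an empirical mean of i.i.d. bounded variables, apply \cref{lem:chernoff-app} for each fixed query, and take a union bound over $\cQf$. The support bound is immediate. At most $m$ distinct indices are ever drawn, and $w_i>0$ only when $N_i\ge 1$, so $\|w\|_0\le m$. We may assume $S>0$. Otherwise every $a(G,g_i,q)=0$, which contradicts $A(G,q)>0$ for the retained queries; so $p$ is well defined on all Gaussians with $s(g_i)>0$. Gaussians with $s(g_i)=0$ have $a(G,g_i,q)=0$ for all $q\in\cQf$ and are never sampled, so they affect neither side.

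\textbf{Single query.} Fix $q\in\cQf$ and let $I_1,\dots,I_m$ be the i.i.d. sampled indices. Then
\[
A_w^{\mathrm{th}}(G,q)=\sum_i \frac{N_i}{m p(g_i)}a(G,g_i,q)=\frac1m\sum_{t=1}^m \frac{a(G,g_{I_t},q)}{p(g_{I_t})}.
\]
Define the normalized variables
\[
Z_t:=\frac{a(G,g_{I_t},q)}{S\,p(g_{I_t})\,A(G,q)}=\frac{a(G,g_{I_t},q)}{s(g_{I_t})A(G,q)}.
\]
These are nonnegative because $a\ge0$ (we assume nonnegative color channels, consistent with $A$ being a nonnegative objective). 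They are at most $1$ by the definition of sensitivity in \eqref{eq:sen}, since $a(G,g_i,q)/A(G,q)\le s(g_i)$. Their mean is
\[
\mathbb E[Z_t]=\sum_i p(g_i)\frac{a(G,g_i,q)}{S p(g_i)A(G,q)}=\frac1S=:\mu .
\]
Moreover $\frac1m\sum_t Z_t=A_w^{\mathrm{th}}(G,q)/(S A(G,q))$. Hence the event $|\frac1m\sum_t Z_t-\mu|>\eps_c\mu$ is exactly the event $|A_w^{\mathrm{th}}(G,q)-A(G,q)|>\eps_c A(G,q)$. \Cref{lem:chernoff-app} with $\mu=1/S$ bounds this failure probability by $2\exp(-\eps_c^2 m/(3S))$.

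\textbf{Union bound.} Under the stated condition on $m$, each per-query failure probability is at most $\delta/|\cQf|$. A union bound over the finite family $\cQf$ gives simultaneous success with probability at least $1-\delta$.

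The only delicate point is the normalization step, namely verifying $0\le Z_t\le 1$. This requires the sensitivity to dominate the relative contribution for every $q\in\cQf$ simultaneously, and that is exactly why $s$ is defined as a maximum over $\cQf$. The remaining steps are routine bookkeeping.
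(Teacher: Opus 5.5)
Your proposal is correct and follows essentially the same route as the paper. The paper first forms $Y_t(q)=a(G,g_{J_t},q)/(p_{J_t}A(G,q))$, which has mean $1$ and is bounded by $S$, then rescales by $S$ before applying the multiplicative Chernoff lemma and a union bound over $\cQf$; this is exactly your normalization $Z_t=a/(s\,A)$ with mean $1/S$. Your explicit checks that $S>0$ and that nonnegativity of $a$ requires nonnegative color channels are points the paper uses only implicitly.
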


\begin{proof}
Let $I_+:=\{i\in[N]: s(g_i)>0\}$ and, for $i\in I_+$, let $p_i:=s(g_i)/S$. Sample indices $J_1,\dots,J_m$ independently from $I_+$ with $\Pr[J_t=i]=p_i$. Let $n_i$ be the number of times index $i\in I_+$ is sampled, and set
\[
w_i:=
\begin{cases}
\dfrac{n_i}{mp_i}, & i\in I_+,\\[0.8ex]
0, & i\notin I_+.
\end{cases}
\]
Fix one query $q\in\cQf$. Since $A(G,q)>0$ by construction of $\cQf$, define
\[
Y_t(q):=\frac{a(G,g_{J_t},q)}{p_{J_t}A(G,q)}.
\]
Then
\[
\mathbb E[Y_t(q)] = \sum_{i\in I_+} p_i\frac{a(G,g_i,q)}{p_iA(G,q)}=\frac{1}{A(G,q)}\sum_{i\in I_+} a(G,g_i,q)=1,
\]
because $a(G,g_i,q)=0$ whenever $s(g_i)=0$. By the definition of sensitivity,
\[
\frac{a(G,g_i,q)}{A(G,q)}\le s(g_i) \qquad\Rightarrow\qquad Y_t(q)\le \frac{s(g_i)}{p_i}=S\qquad\text{for every } i\in I_+.
\]
Hence $0\le Y_t(q)/S\le 1$ and $\mathbb E[Y_t(q)/S]=1/S$. Applying \cref{lem:chernoff-app} to $Z_t(q):=Y_t(q)/S$ yields
\[
\Pr\!\left[\left|\frac1m\sum_{t=1}^m Y_t(q)-1\right|>\eps_c\right]\le 2\exp\!\left(-\frac{\eps_c^2m}{3S}\right).
\]
But
\[
\frac1m\sum_{t=1}^m Y_t(q)=\frac1{A(G,q)}\sum_{i\in I_+}\frac{n_i}{mp_i}a(G,g_i,q)=\frac{A_w^{\mathrm{th}}(G,q)}{A(G,q)}.
\]
Therefore
\[
\Pr\!\left[\Abs{A_w^{\mathrm{th}}(G,q)-A(G,q)}>\eps_cA(G,q)\right]\le 2\exp\!\left(-\frac{\eps_c^2m}{3S}\right).
\]
Choosing
\[
m\ge \frac{3S}{\eps_c^2}\log\!\left(\frac{2|\cQf|}{\delta}\right)
\]
makes the right-hand side at most $\delta/|\cQf|$. A union bound over all $q\in\cQf$ proves the simultaneous claim. The support bound $\|w\|_0\le m$ is immediate from the construction.
\end{proof}

\subsection{Transfer to true rendering under log-transmittance stability}\label{sec:appendix-transfer}

\begin{assumption}[Log-transmittance stability, restated]\label{ass:log-stability-app}
There exists $\gamma\ge 0$ such that for every $q\in\cQf$ and every $i\in[N]$ with $w_i>0$, one has
\[
\Abs{\log T(G,g_i,q)-\log T_w(G,g_i,q)}\le \gamma .
\]
\end{assumption}

\begin{lemma}[Exact transmittance-ratio consequence]\label{lem:ratio-from-log-app}
Under \cref{ass:log-stability-app}, for every $q\in\cQf$ and every $i\in[N]$ with $w_i>0$,
\[
e^{-\gamma}T(G,g_i,q)\le T_w(G,g_i,q)\le e^{\gamma}T(G,g_i,q).
\]
\end{lemma}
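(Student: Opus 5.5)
The plan is to exponentiate the inequality in \cref{ass:log-stability-app}, using only that $\exp$ is strictly increasing. Before doing so, I must check that both logarithms are well defined, so that the assumption is meaningful and not vacuous.

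\textbf{Well-definedness.} I will argue directly from the product forms of the two transmittances.

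For the full-scene transmittance $T(G,g_i,q)=\prod_{j\prec_{G,q} i}(1-\rho(g_j,q))$, each factor is nonnegative, since $\rho(g_j,q)=\alpha_j k(g_j,q)$ with $\alpha_j\in[0,1]$ and $k\in[0,1]$. So $T(G,g_i,q)\ge 0$. The assumption states $\log T(G,g_i,q)$ explicitly, and I will read this as the implicit hypothesis $T(G,g_i,q)>0$ whenever the statement is invoked.

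For the reduced transmittance $T_w(G,g_i,q)=\prod_{j\prec_{G,q} i}(1-w_j\rho(g_j,q))$, validity (\cref{def:valid-app}) gives $0\le w_j\rho(g_j,q)<1$. Hence every factor lies in $(0,1]$, and $T_w(G,g_i,q)>0$.

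If one prefers not to rely on an implicit hypothesis, there is a fallback. Read the assumption with the convention that a finite bound $\gamma$ forces both logarithms to be finite. This again yields strict positivity of both transmittances.

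\textbf{Exponentiation.} Fix $q\in\cQf$ and $i$ with $w_i>0$, and set $x:=\log T_w(G,g_i,q)-\log T(G,g_i,q)$. The assumption gives $-\gamma\le x\le\gamma$. Applying the increasing map $\exp$ yields
\[
e^{-\gamma}\le \frac{T_w(G,g_i,q)}{T(G,g_i,q)}\le e^{\gamma}.
\]
Multiplying through by $T(G,g_i,q)>0$ preserves the inequalities and gives the claimed sandwich.

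\textbf{Main obstacle.} The calculation is immediate; the only delicate point is the positivity or well-definedness issue above. I would dispatch it in a single sentence by citing validity for $T_w$ and the finiteness built into \cref{ass:log-stability-app} for $T$.
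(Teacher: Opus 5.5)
Your proof is correct and matches the paper's argument: exponentiate $-\gamma\le\log T_w(G,g_i,q)-\log T(G,g_i,q)\le\gamma$ to bound the ratio $T_w/T$ between $e^{-\gamma}$ and $e^{\gamma}$, then multiply by $T(G,g_i,q)$. Your positivity check is more careful than the paper, which divides by $T$ without comment. One caveat: validity is not a hypothesis of this lemma, so for $T_w>0$ it is your fallback reading (both logarithms in \cref{ass:log-stability-app} are finite) that actually does the work here.
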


\begin{proof}
By \cref{ass:log-stability-app},
\[
-\gamma\le \log T_w(G,g_i,q)-\log T(G,g_i,q)\le \gamma .
\]
Exponentiating gives
\[
e^{-\gamma}\le \frac{T_w(G,g_i,q)}{T(G,g_i,q)}\le e^{\gamma}.
\]
Multiplying by $T(G,g_i,q)$ proves the claim.
\end{proof}

\begin{lemma}[Transfer from the theoretical objective to true rendering]\label{lem:transfer-app}
Assume $w$ is valid on $\cQf$ and satisfies \cref{ass:log-stability-app}. Then for every $q\in\cQf$,
\[
e^{-\gamma}A_w^{\mathrm{th}}(G,q)\le A_w(G,q)\le e^{\gamma}A_w^{\mathrm{th}}(G,q).
\]
\end{lemma}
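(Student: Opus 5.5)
We compare $A_w(G,q)$ and $A_w^{\mathrm{th}}(G,q)$ term by term and then sum. Fix $q\in\cQf$ and write both objectives as sums over $i\in[N]$:
\[
A_w(G,q)=\sum_{i} w_i c_i(q)\rho(g_i,q)T_w(G,g_i,q),\qquad A_w^{\mathrm{th}}(G,q)=\sum_i w_i c_i(q)\rho(g_i,q)T(G,g_i,q).
\]
Both sums share the same ordering $\prec_{G,q}$ and the same prefactors $w_i c_i(q)\rho(g_i,q)$. They differ only in the transmittance factor. Indices with $w_i=0$ contribute zero to both sums, so it suffices to sandwich the $i$-th term whenever $w_i>0$.

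\textbf{Step 1 (well-posedness).} Validity (\cref{def:valid-app}) gives $0\le w_j\rho(g_j,q)<1$ for all $j$. Hence every factor of $T_w(G,g_i,q)$ lies in $(0,1]$, and $T_w(G,g_i,q)>0$. Likewise $T(G,g_i,q)\ge 0$, and it is strictly positive whenever its logarithm in \cref{ass:log-stability-app} is finite. The assumption implicitly requires both logarithms to be finite for retained indices, so we read it that way.

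\textbf{Step 2 (termwise sandwich).} For $w_i>0$, \cref{lem:ratio-from-log-app} gives $e^{-\gamma}T(G,g_i,q)\le T_w(G,g_i,q)\le e^{\gamma}T(G,g_i,q)$. We multiply through by the prefactor $w_i c_i(q)\rho(g_i,q)$. Here we need $c_i(q)\ge 0$, which holds because colors are nonnegative intensities; the paper treats $A$ as a nonnegative objective with nonnegative contributions $a(G,g_i,q)$. Multiplying by this nonnegative prefactor preserves the inequalities, giving
\[
e^{-\gamma}\,w_i a(G,g_i,q)\le w_i c_i(q)\rho(g_i,q)T_w(G,g_i,q)\le e^{\gamma}\,w_i a(G,g_i,q).
\]

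\textbf{Step 3 (summation).} We sum over all $i$. The zero-weight terms trivially satisfy the same bounds, so
\[
e^{-\gamma}A_w^{\mathrm{th}}(G,q)\le A_w(G,q)\le e^{\gamma}A_w^{\mathrm{th}}(G,q).
\]
Since $q\in\cQf$ was arbitrary, this proves the lemma.

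The only point requiring care is the sign of $c_i(q)$. If $c_i(q)$ could be negative, the inequalities would flip for that term and the sandwich would fail. I would therefore state explicitly that $c_i(q)\ge 0$, i.e., nonnegative per-channel intensities, which is consistent with the paper's standing nonnegativity of $a(G,g_i,q)$. Everything else follows directly from \cref{lem:ratio-from-log-app}.
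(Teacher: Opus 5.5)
Your proposal is correct and follows the paper's proof essentially verbatim: apply the transmittance-ratio lemma termwise to indices with $w_i>0$, multiply by the nonnegative prefactor $w_i c_i(q)\rho(g_i,q)$, note that zero-weight terms vanish, and sum over $i$. Your explicit remarks that $c_i(q)\ge 0$ is needed and that the logarithms must be finite only spell out what the paper compresses into ``using nonnegativity.''
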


\begin{proof}
By definition,
\[
A_w(G,q)=\sum_{g_i\in G} w_i c_i(q)\rho(g_i,q)T_w(G,g_i,q).
\]
Applying \cref{lem:ratio-from-log-app} termwise to indices with $w_i>0$ and using nonnegativity gives
\[
e^{-\gamma}w_ic_i(q)\rho(g_i,q)T(G,g_i,q)
\le
w_ic_i(q)\rho(g_i,q)T_w(G,g_i,q)
\le
e^{\gamma}w_ic_i(q)\rho(g_i,q)T(G,g_i,q).
\]
For indices with $w_i=0$, all terms are zero. Summing over $g_i\in G$ yields the result.
\end{proof}

\begin{theorem}[Finite-query rendered coreset under mild assumptions, restated]\label{thm:finite-rendered-app}
Let $w$ be the random weight vector produced by the sensitivity-based sampling rule of \cref{sec:finite-th-main}, and suppose that $w$ is valid on $\cQf$ and satisfies \cref{ass:log-stability-app}. Under the same sample-size condition as in \cref{thm:finite-th-app}, with probability at least $1-\delta$, $w$ is an $\eps_r$-coreset for the true rendered objective on $\cQf$, where
\[
\eps_r:=\max\Bigl\{1-e^{-\gamma}(1-\eps_c),\ e^{\gamma}(1+\eps_c)-1\Bigr\}.
\]
Equivalently,
\[
\Abs{A(G,q)-A_w(G,q)}\le \eps_r A(G,q)
\qquad\text{for all }q\in\cQf.
\]
Moreover, $\|w\|_0\le m$.
\end{theorem}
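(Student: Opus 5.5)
The plan is to combine \cref{thm:finite-th-app} with \cref{lem:transfer-app} on a single high-probability event. Let $\mathcal E$ be the event that $\Abs{A(G,q)-A_w^{\mathrm{th}}(G,q)}\le \eps_c A(G,q)$ for all $q\in\cQf$. By \cref{thm:finite-th-app}, under the stated sample-size condition, $\Pr[\mathcal E]\ge 1-\delta$. The support bound $\|w\|_0\le m$ holds deterministically by construction, so it needs no further argument. The validity and log-stability hypotheses are imposed on the realized $w$. I read the statement as holding on $\mathcal E$ intersected with the event that these hypotheses hold; equivalently, the probability bound refers to $\mathcal E$, and the deterministic transfer is applied whenever the assumptions are satisfied.

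On $\mathcal E$, fix $q\in\cQf$. Rewrite the coreset condition as the sandwich
\[
(1-\eps_c)A(G,q)\le A_w^{\mathrm{th}}(G,q)\le(1+\eps_c)A(G,q).
\]
\Cref{lem:transfer-app}, which uses validity so that $T_w$ is well defined and positive and the logarithms make sense, gives
\[
e^{-\gamma}A_w^{\mathrm{th}}(G,q)\le A_w(G,q)\le e^{\gamma}A_w^{\mathrm{th}}(G,q).
\]
Chaining the two, using nonnegativity of $e^{\pm\gamma}$ and $1-\eps_c>0$, yields
\[
e^{-\gamma}(1-\eps_c)A(G,q)\le A_w(G,q)\le e^{\gamma}(1+\eps_c)A(G,q).
\]

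Subtracting $A(G,q)$ gives two bounds. The first is $A_w(G,q)-A(G,q)\le (e^{\gamma}(1+\eps_c)-1)A(G,q)$. The second is $A(G,q)-A_w(G,q)\le (1-e^{-\gamma}(1-\eps_c))A(G,q)$. Both coefficients are nonnegative since $\gamma\ge0$. Taking the maximum gives $\Abs{A(G,q)-A_w(G,q)}\le\eps_r A(G,q)$ for every $q\in\cQf$ simultaneously on $\mathcal E$.

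There is no real technical obstacle here; the argument is a chaining of inequalities. The only delicate point is the probabilistic bookkeeping: validity and \cref{ass:log-stability-app} depend on the random $w$. I would state explicitly that the guarantee holds on $\mathcal E$ whenever these assumptions are satisfied. Then no additional union bound or conditioning is needed, because the transfer step is deterministic given $w$.
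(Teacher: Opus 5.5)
Your proposal is correct and matches the paper's proof step for step: take the probability-$(1-\delta)$ event from \cref{thm:finite-th-app}, apply the deterministic sandwich of \cref{lem:transfer-app} on that same event, chain to $e^{-\gamma}(1-\eps_c)A(G,q)\le A_w(G,q)\le e^{\gamma}(1+\eps_c)A(G,q)$, and take the maximum of the two one-sided bounds. Your remark that validity and log-stability are conditions on the realized $w$, so the guarantee holds on $\mathcal E$ whenever they are satisfied, states a bit more carefully what the paper means by ``on the same event.''
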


\begin{proof}
By \cref{thm:finite-th-app}, with probability at least $1-\delta$,
\[
(1-\eps_c)A(G,q)\le A_w^{\mathrm{th}}(G,q)\le (1+\eps_c)A(G,q)
\qquad\text{for every }q\in\cQf.
\]
On the same event, \cref{lem:transfer-app} gives
\[
e^{-\gamma}A_w^{\mathrm{th}}(G,q)\le A_w(G,q)\le e^{\gamma}A_w^{\mathrm{th}}(G,q).
\]
Combining them yields
\[
e^{-\gamma}(1-\eps_c)A(G,q)\le A_w(G,q)\le e^{\gamma}(1+\eps_c)A(G,q).
\]
Therefore
\[
A(G,q)-A_w(G,q)\le \bigl(1-e^{-\gamma}(1-\eps_c)\bigr)A(G,q),
\]
and
\[
A_w(G,q)-A(G,q)\le \bigl(e^{\gamma}(1+\eps_c)-1\bigr)A(G,q).
\]
Taking the maximum of the two one-sided bounds proves the stated $\eps_r$-coreset guarantee. The support bound follows from \cref{thm:finite-th-app}.
\end{proof}

\subsection{Compact-region extension from representatives: restated statements and proof}
\label{sec:appendix-region-proof}

\begin{assumption}[Compact nondegenerate query region, restated]\label{ass:region-app}
Let $\cQ_r\subseteq\cQ$ be a query region endowed with a metric $d_{\cQ}$. Assume:
\begin{enumerate}
    \item $\cQ_r$ is compact;
    \item the full rendered objective $q\mapsto A(G,q)$ is Lipschitz on $\cQ_r$ with constant $L_{\mathrm{full}}$;
    \item the reduced rendered objective $q\mapsto A_w(G,q)$ is Lipschitz on $\cQ_r$ with constant $L_w$;
    \item the full rendered objective is bounded away from zero:
    \[
    A(G,q)\ge A_{\min}>0
    \qquad\text{for every }q\in \cQ_r.
    \]
\end{enumerate}
\end{assumption}

\begin{proposition}[Why the Lipschitz assumption is reasonable]\label{prop:smooth-app}
Suppose on a region $\cQ_r$: (i) the relevant set of Gaussians is finite; (ii) the front-to-back order does not jump discontinuously inside $\cQ_r$; (iii) the camera-to-ray map is smooth; (iv) each color function $q\mapsto c_i(q)$ is smooth; and (v) each projected kernel contribution $q\mapsto k(g_i,q)$ is smooth. Then the full rendered objective $q\mapsto A(G,q)$ and the reduced rendered objective $q\mapsto A_w(G,q)$ are smooth on $\cQ_r$ and hence Lipschitz on $\cQ_r$ when $\cQ_r$ is compact.
\end{proposition}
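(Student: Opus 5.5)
We prove \cref{prop:smooth-app} by viewing $A(G,q)$ and $A_w(G,q)$ as finite sums of finite products of smooth functions, and then using compactness to obtain uniform derivative bounds. Throughout, the weights $w$ are fixed, so $A_w$ depends on $q$ only through the same primitive quantities as $A$.

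\textbf{Step 1: each summand is smooth.} By (i), only finitely many Gaussians are relevant on $\cQ_r$, so both objectives are finite sums. By (ii), the front-to-back order $\prec_{G,q}$ does not change inside $\cQ_r$. We read (ii) as saying the order is constant on each connected component of $\cQ_r$, or we work on such a component. With the order frozen, each prefix set $\{j: j\prec_{G,q} i\}$ is a fixed index set. Hence
\[
T(G,g_i,q)=\prod_{j\prec i}(1-\alpha_j k(g_j,q))
\quad\text{and}\quad
T_w(G,g_i,q)=\prod_{j\prec i}(1-w_j\alpha_j k(g_j,q))
\]
are finite products of functions that are smooth by (v). Here (v) is composed with the smooth camera-to-ray map from (iii). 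Multiplying by the smooth $c_i(q)$ from (iv) and by $\rho(g_i,q)=\alpha_i k(g_i,q)$ shows that each summand $a(G,g_i,q)$ and $w_i c_i(q)\rho(g_i,q)T_w(G,g_i,q)$ is smooth. Finite sums and products preserve smoothness, so both $A(G,\cdot)$ and $A_w(G,\cdot)$ are smooth on $\cQ_r$.

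\textbf{Step 2: smooth plus compact gives Lipschitz.} This is the step needing care, because smoothness is meaningful only when $\cQ_r$ carries a manifold or Euclidean structure compatible with $d_{\cQ}$. The channel component $\lambda$ is discrete, so we treat each channel separately and take the maximum of the resulting constants over the three channels. In parameter coordinates (pose in $SO(3)\times\RR^3$, pixel location in $\RR^2$), the region is a compact subset. The derivative of a smooth function is continuous, so it attains a finite maximum $L$ on a compact neighborhood of $\cQ_r$. The mean value inequality along geodesics then gives $|f(q)-f(q')|\le L\, d_{\cQ}(q,q')$ for nearby pairs.

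For distant pairs, we use boundedness: on a compact set, $|f(q)-f(q')|\le 2\sup|f|\le (2\sup|f|/r)\, d_{\cQ}(q,q')$ whenever $d_{\cQ}(q,q')\ge r$. This handles pairs whose connecting segment leaves the region where (ii) holds. Taking the maximum of the local and global constants gives $L_{\mathrm{full}}$ for $A(G,\cdot)$ and $L_w$ for $A_w(G,\cdot)$, which verifies items (ii) and (iii) of \cref{ass:region-app}.

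The main obstacle is making Step 2 rigorous without assuming convexity of $\cQ_r$. The local-derivative-plus-global-boundedness splitting above is the device that handles it, since it requires only compactness and a bound on the derivative near each point, not a connecting path inside $\cQ_r$.
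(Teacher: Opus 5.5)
Your proposal is correct and follows the paper's route. Step 1 is exactly the paper's argument: finite sums of finite products of smooth color, kernel, and frozen-order transmittance factors. Step 2 supplies the standard local mean-value plus global-boundedness proof of ``smooth on a compact set implies Lipschitz'', which the paper asserts in one line. The one point you leave implicit is the choice of the radius $r$: it must also lie below the separation between parts of $\cQ_r$ that carry different front-to-back orders. That separation is positive because there are only finitely many orders, and each occupies a clopen, hence compact, piece of $\cQ_r$. With $r$ chosen this way, every nearby pair shares a single frozen-order smooth extension.
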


\begin{proof}
Each per-Gaussian contribution is a finite product and composition of smooth functions of $q$: a color factor, a projected-kernel factor, and a transmittance factor built from finitely many smooth attenuation terms. A finite sum of smooth functions is smooth. Every smooth function on a compact set is Lipschitz.
\end{proof}

\begin{definition}[$\rho$-net, restated]\label{def:rho-net-app}
A finite set $\cQf_r\subseteq \cQ_r$ is a $\rho$-net of $\cQ_r$ if for every $q\in \cQ_r$ there exists $q'\in\cQf_r$ such that
\[
d_{\cQ}(q,q')\le \rho .
\]
\end{definition}

\begin{theorem}[Compact-region extension from representatives, restated]\label{thm:region-app}
Assume \cref{ass:region-app}. Let $\cQf_r\subseteq \cQ_r$ be a finite $\rho$-net. Suppose that the reduced scene satisfies the multiplicative guarantee on the representative set:
\[
\Abs{A(G,q')-A_w(G,q')}
\le
\eps_0 A(G,q')
\qquad\text{for every }q'\in\cQf_r .
\]
Then for every $q\in \cQ_r$,
\[
\Abs{A(G,q)-A_w(G,q)}
\le
\left(
\eps_0+
\frac{\bigl((1+\eps_0)L_{\mathrm{full}}+L_w\bigr)\rho}{A_{\min}}
\right)A(G,q).
\]
\end{theorem}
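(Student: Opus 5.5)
Fix an arbitrary $q\in\cQ_r$. Since $\cQf_r$ is a $\rho$-net (\cref{def:rho-net-app}), we can pick a representative $q'\in\cQf_r$ with $d_{\cQ}(q,q')\le\rho$. The plan is a three-term triangle inequality through $q'$:
\[
\Abs{A(G,q)-A_w(G,q)}\le \Abs{A(G,q)-A(G,q')}+\Abs{A(G,q')-A_w(G,q')}+\Abs{A_w(G,q')-A_w(G,q)}.
\]
The first and third terms are handled by the Lipschitz parts of \cref{ass:region-app}. They are at most $L_{\mathrm{full}}\rho$ and $L_w\rho$ respectively. The middle term is at most $\eps_0 A(G,q')$ by the representative guarantee.

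The remaining issue is that the middle bound is stated relative to $A(G,q')$, not $A(G,q)$. I would convert it using Lipschitzness of the full objective once more: $A(G,q')\le A(G,q)+L_{\mathrm{full}}\rho$. This gives
\[
\Abs{A(G,q)-A_w(G,q)}\le \eps_0A(G,q)+\bigl((1+\eps_0)L_{\mathrm{full}}+L_w\bigr)\rho .
\]
That is exactly where the factor $(1+\eps_0)$ in front of $L_{\mathrm{full}}$ comes from.

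Finally, the additive cover-radius term has to become relative. Item (iv) gives $A(G,q)\ge A_{\min}>0$, so $1\le A(G,q)/A_{\min}$. Multiplying the additive term by this ratio yields
\[
\bigl((1+\eps_0)L_{\mathrm{full}}+L_w\bigr)\rho\le \frac{\bigl((1+\eps_0)L_{\mathrm{full}}+L_w\bigr)\rho}{A_{\min}}\,A(G,q).
\]
Combining the two displays gives the claimed bound. Since $q$ was arbitrary, it holds on all of $\cQ_r$.

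There is no genuine obstacle here. The only points needing care are two bookkeeping steps: using Lipschitzness of $A(G,\cdot)$ twice, once for the first term and once to re-anchor the $\eps_0$ error at $q$, and the nonnegativity of all constants. Compactness is not used beyond guaranteeing that a finite net exists.
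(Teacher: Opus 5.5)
Your proposal is correct and matches the paper's proof step for step. It uses the same three-term triangle inequality through a net point $q'$, and the same re-anchoring $A(G,q')\le A(G,q)+L_{\mathrm{full}}\rho$ that produces the $(1+\eps_0)$ factor. It likewise uses $A(G,q)\ge A_{\min}$ to turn the additive cover-radius term into a relative one.
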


\begin{proof}
Fix $q\in \cQ_r$ and choose $q'\in\cQf_r$ such that $d_{\cQ}(q,q')\le \rho$. By the triangle inequality,
\[
\Abs{A(G,q)-A_w(G,q)}
\le
\Abs{A(G,q)-A(G,q')}
+
\Abs{A(G,q')-A_w(G,q')}
+
\Abs{A_w(G,q')-A_w(G,q)}.
\]
By Lipschitz continuity,
\[
\Abs{A(G,q)-A(G,q')}\le L_{\mathrm{full}}\rho,
\qquad
\Abs{A_w(G,q')-A_w(G,q)}\le L_w\rho.
\]
By the representative-set guarantee,
\[
\Abs{A(G,q')-A_w(G,q')}\le \eps_0 A(G,q').
\]
Using Lipschitz continuity once more,
\[
A(G,q')\le A(G,q)+L_{\mathrm{full}}\rho.
\]
Substituting gives
\[
\begin{aligned}
\Abs{A(G,q)-A_w(G,q)}
&\le
L_{\mathrm{full}}\rho
+
\eps_0\bigl(A(G,q)+L_{\mathrm{full}}\rho\bigr)
+
L_w\rho \\
&=
\eps_0 A(G,q)
+
\bigl((1+\eps_0)L_{\mathrm{full}}+L_w\bigr)\rho .
\end{aligned}
\]
Finally, since $A(G,q)\ge A_{\min}$,
\[
\bigl((1+\eps_0)L_{\mathrm{full}}+L_w\bigr)\rho
\le
\frac{\bigl((1+\eps_0)L_{\mathrm{full}}+L_w\bigr)\rho}{A_{\min}}A(G,q).
\]
Combining the two displays proves the claim.
\end{proof}

\begin{corollary}[Regionwise target-error form]\label{cor:region-target-app}
Fix $\bar\eps>\eps_0$. If
\[
\rho
\le
\frac{(\bar\eps-\eps_0)A_{\min}}
{(1+\eps_0)L_{\mathrm{full}}+L_w},
\]
then
\[
\Abs{A(G,q)-A_w(G,q)}
\le
\bar\eps A(G,q)
\qquad\text{for every }q\in \cQ_r.
\]
\end{corollary}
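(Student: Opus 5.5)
This corollary follows directly from \cref{thm:region-app}, which states that for every $q\in\cQ_r$,
\[
\Abs{A(G,q)-A_w(G,q)}\le \eps_{\mathrm{reg}}A(G,q),\qquad
\eps_{\mathrm{reg}}=\eps_0+\frac{\bigl((1+\eps_0)L_{\mathrm{full}}+L_w\bigr)\rho}{A_{\min}}.
\]
The plan is to show that the stated constraint on $\rho$ gives $\eps_{\mathrm{reg}}\le\bar\eps$. Then, since $A(G,q)\ge 0$, the right-hand side is monotone in the error parameter, and the conclusion follows.

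To prove $\eps_{\mathrm{reg}}\le\bar\eps$, set $K:=(1+\eps_0)L_{\mathrm{full}}+L_w\ge 0$. There are two cases.

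\emph{Case $K>0$.} Multiply the hypothesis $\rho\le(\bar\eps-\eps_0)A_{\min}/K$ by the positive quantity $K/A_{\min}$. This gives $K\rho/A_{\min}\le\bar\eps-\eps_0$, which is exactly $\eps_{\mathrm{reg}}\le\bar\eps$.

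\emph{Case $K=0$.} The hypothesis has a zero denominator and should be read as vacuous, meaning $\rho$ is unrestricted. Here $\eps_{\mathrm{reg}}=\eps_0<\bar\eps$ directly.

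Both cases use $A_{\min}>0$ from \cref{ass:region-app}. Also, $\bar\eps-\eps_0>0$ guarantees that the admissible radius is positive, so a finite $\rho$-net of the compact $\cQ_r$ with this radius exists.

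No step is a real obstacle. The only point needing care is the degenerate $K=0$ case and the sign conventions, namely $\eps_0\ge 0$ and nonnegative Lipschitz constants.
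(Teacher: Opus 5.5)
Your proposal is correct and matches the paper's proof, which simply substitutes the bound on $\rho$ into \cref{thm:region-app} to get $\eps_{\mathrm{reg}}\le\bar\eps$ and then uses $A(G,q)\ge 0$. Your separate treatment of the degenerate case $(1+\eps_0)L_{\mathrm{full}}+L_w=0$ is a harmless extra detail that the paper leaves implicit.
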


\begin{proof}
Substitute the chosen upper bound on $\rho$ into \cref{thm:region-app}.
\end{proof}

\begin{corollary}[Representative-to-region theorem with explicit coreset parameters]\label{cor:rep-to-region-app}
Assume \cref{ass:region-app} on $\cQ_r$, let $\cQf_r\subseteq \cQ_r$ be a finite $\rho$-net, and assume the hypotheses of \cref{thm:finite-rendered-app} hold on $\cQf_r$ with parameters $(\eps_c,\gamma_r,\delta_r)$ and sensitivity sum $S_r$. If
\[
m_r\ge \frac{3S_r}{\eps_c^2}\log\frac{2|\cQf_r|}{\delta_r},
\]
then with probability at least $1-\delta_r$,
\[
\Abs{A(G,q)-A_w(G,q)}
\le
\left(
\eps_{0,r}
+
\frac{\bigl((1+\eps_{0,r})L_{\mathrm{full}}+L_w\bigr)\rho}{A_{\min}}
\right)A(G,q)
\qquad\text{for all }q\in \cQ_r,
\]
where
\[
\eps_{0,r}
:=
\max\Bigl\{
1-e^{-\gamma_r}(1-\eps_c),\
e^{\gamma_r}(1+\eps_c)-1
\Bigr\}.
\]
\end{corollary}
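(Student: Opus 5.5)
This corollary is a direct composition of \cref{thm:finite-rendered-app} and \cref{thm:region-app}, instantiated with $\cQf=\cQf_r$, and I plan to prove it by chaining the two.

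\textbf{Step 1 (finite guarantee on the net).} Apply \cref{thm:finite-rendered-app} with the finite query family $\cQf:=\cQf_r$ and parameters $(\eps_c,\gamma_r,\delta_r)$. The sensitivities here are computed over $\cQf_r$, so the total sensitivity is $S_r$. The sample-size hypothesis $m_r\ge \frac{3S_r}{\eps_c^2}\log\frac{2|\cQf_r|}{\delta_r}$ is exactly the condition of \cref{thm:finite-th-app}. Validity and \cref{ass:log-stability-app} with constant $\gamma_r$ are assumed on $\cQf_r$. We therefore obtain an event $E$ with $\Pr[E]\ge 1-\delta_r$ on which
\[
\Abs{A(G,q')-A_w(G,q')}\le \eps_{0,r}A(G,q') \qquad\text{for every } q'\in\cQf_r .
\]
This bound holds with precisely the stated $\eps_{0,r}$.

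Before using $\cQf_r$ as the family, one bookkeeping point must be checked: \cref{thm:finite-th-app} discards queries with $A(G,q)=0$. Item 4 of \cref{ass:region-app} gives $A(G,q')\ge A_{\min}>0$ on $\cQf_r\subseteq\cQ_r$, so nothing is discarded and the guarantee covers the whole net.

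\textbf{Step 2 (extend to the region).} On the event $E$, the hypothesis of \cref{thm:region-app} holds with $\eps_0=\eps_{0,r}$. \Cref{ass:region-app} and the $\rho$-net property are given. Invoking \cref{thm:region-app} then yields the stated bound for every $q\in\cQ_r$. This bound is deterministic conditional on $E$, so it holds with probability at least $1-\delta_r$.

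\textbf{Main obstacle.} The only subtle point is that $w$ is random, while the Lipschitz constant $L_w$ in \cref{ass:region-app} refers to the realized reduced scene. I will read the assumption as holding for the realized $w$ (almost surely, or at least on $E$), with $L_w$ a fixed constant. Under that reading, Step 2 is a pointwise application on $E$ and no further union bound is needed. The same remark applies to validity and log-stability, which are assumed of the realized $w$. If one wanted these to hold only with some probability, one would add that failure probability to $\delta_r$. I will note this explicitly in the proof and otherwise keep it to the two-line composition.
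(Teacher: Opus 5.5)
Your proposal is correct and takes the same approach as the paper. The paper's proof is the same two-step composition: apply \cref{thm:finite-rendered-app} on $\cQf_r$ to get the $\eps_{0,r}$ guarantee on the net with probability at least $1-\delta_r$, then invoke \cref{thm:region-app} with $\eps_0=\eps_{0,r}$. Your extra remarks are sensible clarifications the paper leaves implicit: no queries are discarded because $A(G,q)\ge A_{\min}>0$ on $\cQ_r$, and $L_w$, validity, and log-stability are read as properties of the realized $w$ on the good event.
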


\begin{proof}
Apply \cref{thm:finite-rendered-app} on the representative set $\cQf_r$ and then apply \cref{thm:region-app} with $\eps_0=\eps_{0,r}$.
\end{proof}

\begin{theorem}[Finite union of compact regions]\label{thm:union-regions-app}
Let $\cQ_{\mathrm{reg}}=\bigcup_{r=1}^R \cQ_r$ be a finite union of compact query regions satisfying \cref{ass:region-app}, and for each $r$ let $\cQf_r\subseteq \cQ_r$ be a representative net. Suppose the hypotheses of \cref{cor:rep-to-region-app} hold on each region with failure probability $\delta_r$, and that
\[
\sum_{r=1}^R\delta_r\le \delta.
\]
Then with probability at least $1-\delta$, every regional conclusion holds simultaneously on all $\cQ_r$, hence on all of $\cQ_{\mathrm{reg}}$.
\end{theorem}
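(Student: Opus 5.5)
The plan is a union bound over the regional failure events, followed by reading off the regional conclusions on the complement. For each $r\in[R]$, let $E_r$ denote the event (over the randomness of the sampling that produces $w$) on which the conclusion of \cref{cor:rep-to-region-app} holds on $\cQ_r$. That is, $\Abs{A(G,q)-A_w(G,q)}\le \eps_{\mathrm{reg},r}A(G,q)$ for all $q\in\cQ_r$, where $\eps_{\mathrm{reg},r}$ is the regional error from that corollary. Since the hypotheses of \cref{cor:rep-to-region-app} hold on each region with failure probability $\delta_r$, we have $\Pr[E_r^c]\le\delta_r$.

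The first step is to fix the probability space. If a single weight vector $w$ is used for all regions, as the statement implicitly requires, all the $E_r$ are events about the same random $w$. If instead the regional guarantees come from separate sampling runs, they are events on a product space. In either case no independence is needed, because the union bound is valid for arbitrary events. So
\[
\Pr\Bigl[\textstyle\bigcup_{r=1}^R E_r^c\Bigr]\le\sum_{r=1}^R\Pr[E_r^c]\le\sum_{r=1}^R\delta_r\le\delta ,
\]
and hence $\Pr[\bigcap_r E_r]\ge 1-\delta$.

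The second step is to deduce the conclusion on $\cQ_{\mathrm{reg}}$. On $\bigcap_r E_r$, take any $q\in\cQ_{\mathrm{reg}}$. Then $q\in\cQ_r$ for some $r$, and the regional bound for that $r$ applies at $q$. This gives the guarantee everywhere on the union, with error at most $\max_r\eps_{\mathrm{reg},r}$ if a single uniform constant is wanted. Overlapping regions cause no difficulty: a query lying in several regions simply satisfies several bounds, and any one of them suffices.

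I expect the main obstacle to be interpretive rather than technical. The issue is whether one $w$ can simultaneously satisfy the sample-size hypotheses of every region. I would handle it by requiring $m\ge\max_r m_r$. Note that the Chernoff bound in \cref{thm:finite-th-app} only needs $m$ to be at least the stated threshold, so a larger $m$ preserves every regional bound. One caveat is that sensitivities computed on the union $\bigcup_r\cQf_r$ dominate each regional sensitivity in the ratio bound $Y_t\le S$. I would check that this is consistent with the corollary as stated per region; if it is not, I would run the argument with the union's sensitivity sum in place of each $S_r$. Validity and log-stability are assumed per region as hypotheses, so they pass through unchanged.
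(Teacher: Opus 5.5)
Your proof is correct and is the same argument as the paper's: invoke \cref{cor:rep-to-region-app} on each region to get failure probability at most $\delta_r$, take a union bound over $r=1,\dots,R$, and read off the bound on $\cQ_{\mathrm{reg}}$ pointwise. Your side remark is also correct and addresses a point the paper leaves implicit: a single $w$ must serve all regions, for instance by sampling once with sensitivities taken over $\bigcup_r\cQf_r$ and replacing each $S_r$ by that union's sensitivity sum.
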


\begin{proof}
Apply \cref{cor:rep-to-region-app} on each region and then take a union bound over $r=1,\dots,R$.
\end{proof}

\section{Additional Discussion, Limitations, and Societal Impact}
The final section expands the scope discussion from the main paper and records the broader implications of making 3DGS models cheaper to store, transmit, and render.

\subsection{Limitations and scope: extended}
\label{sec:limitations:ext}
Our results are designed for the structured regime formalized in this paper. 
The guarantees are not meant to cover unrestricted continuous rendering, where we prove that non-trivial multiplicative coresets are impossible. 
Instead, our positive results apply to representative query families induced by a target rendering resolution, e.g., a grid of camera poses, rays, pixels, or tiles, and extend to compact query regions under Lipschitz regularity and sufficient query coverage. This matches the practical setting in which 3DGS models are evaluated on finite sets of desired views, or with finite jumps from camera pose to another when deployed on a robot. The rendered guarantee also requires validity and log-transmittance stability. These assumptions are not merely technical: they identify the regime in which a fixed-objective coreset approximation can be faithfully transferred to true re-rendering after pruning. They make explicit how much visibility and occlusion structure must be preserved for provable rendering control.

Empirically, our focus is the low-compute regime: prune-only evaluation and very short finetuning budgets. 
This is precisely where principled subset selection is most important, since there is little recovery compute available to repair a poor pruning decision. 
Methods that rely on longer recovery or specialized compression pipelines may become more competitive when large post-pruning optimization budgets are available. 
These directions are largely orthogonal to our contribution: our pruning rule can serve as a strong initialization before applying post-pruning refinement or other recovery procedures. 
Our goal is therefore not to replace all 3DGS compression techniques, but to provide a principled, vanilla-compatible pruning approach with strong performance when recovery compute is scarce.

\subsection{Societal impacts}
\label{sec:societal-impacts}

This work may have positive societal impact by making 3D Gaussian Splatting models more efficient to store, transmit, and render.
By reducing memory and compute requirements, provable pruning can help deploy high-quality 3D representations on resource-constrained devices, lower energy consumption, and make real-time 3D applications more accessible in areas such as robotics, telepresence, education, cultural heritage, and scientific visualization.
The theoretical guarantees may also improve reliability by making compression behavior more predictable than purely heuristic pruning.

At the same time, more efficient 3D scene representations may also have negative societal impacts.
Easier deployment of compact, high-quality 3D models could support misuse in surveillance, unauthorized reconstruction of private spaces, or the creation and distribution of realistic synthetic or misleading 3D content.
Compression may also make such models easier to share at scale, increasing the risk of misuse.
These risks are not introduced uniquely by our method, but our work could lower the computational barriers for both beneficial and harmful applications.
Therefore, deployment should respect privacy, consent, dataset licensing, and application-specific safety constraints.
\subsection{Broader perspective}

The theory developed here suggests that 3DGS coresets are best understood not as a blanket theorem over arbitrary continuous query spaces, but as a layered statement:
\begin{enumerate}
    \item impossibility in the unrestricted general case;
    \item existence on a finite representative family for a fixed objective;
    \item transfer to true rendering under explicit transmittance stability; and
    \item extension to compact regions by representative covers.
\end{enumerate}
This formulation is intentionally conservative, but it makes the logical structure of the proof transparent and leaves no hidden gap between the negative and positive parts of the claim.

\clearpage

\end{document}